%% file: iclr2027_conference.tex
\pdfoutput=1
\documentclass{article} 
\usepackage{iclr2027_conference,times}
\usepackage{xcolor}

\input{math_commands.tex}

\usepackage{hyperref}
\usepackage{url}
\usepackage{amsmath}
\usepackage{amssymb}
\usepackage{amsthm}
\newtheorem{theorem}{Theorem}

\newtheorem{corollary}{Corollary}

\newtheorem{assumption}{Assumption}
\usepackage{booktabs}
\usepackage{graphicx}
\usepackage{float}
\usepackage{placeins}
\usepackage{subfig}
\usepackage{tabularx}
\usepackage{multirow}
\usepackage{algorithm}
\usepackage{algorithmic}
\usepackage{tikz}
\usepackage{enumitem}
\usepackage{longtable}
\usepackage{wrapfig}

\definecolor{keystepfill}{HTML}{E1E9FB}
\newcommand{\keystep}[1]{{\setlength{\fboxsep}{1.5pt}\colorbox{keystepfill}{\parbox[t]{\dimexpr\linewidth-2\fboxsep\relax}{\strut#1\strut}}\par}}
\newcommand{\codeus}{\char95{}}
\newcommand{\codeplain}[1]{\noindent\makebox[\linewidth][l]{\hspace*{2pt}\strut#1}\par}
\newcommand{\codekey}[1]{\noindent\colorbox{keystepfill}{\makebox[\linewidth][l]{\hspace*{2pt}\strut#1}}\par}
\newcommand{\codeblockstart}{\par\smallskip\begingroup\footnotesize\ttfamily\setlength{\fboxsep}{0pt}\setlength{\parskip}{0pt}\setlength{\lineskiplimit}{-\maxdimen}}
\newcommand{\codeblockend}{\endgroup\smallskip}
\newcommand{\na}{--}

\usetikzlibrary{arrows.meta,positioning,fit,backgrounds,calc}

\title{Network World Models as Environments for\\Algorithm Design on Complex Systems}

\author{
\textbf{Rishab Alagharu$^{1,*}$} \quad
\textbf{Hongji Pu$^{2,*}$} \quad
\textbf{Zeeshan Memon$^{1,*}$} \\
\textbf{Xinyuan Song$^{1}$} \quad
\textbf{Yuntong Hu$^{1}$} \quad
\textbf{Liang Zhao$^{1,\dagger}$} \\[3pt]
$^{1}$Emory University
\quad
$^{2}$University of Illinois Urbana-Champaign \\[3pt]
\texttt{27ralagharu@woodward.edu},
\texttt{hongjip2@illinois.edu} \\
\texttt{\{zeeshan.memon, xinyuan.song, yuntong.hu, liang.zhao\}@emory.edu} \\[3pt]
$^{*}$Equal contribution.
\qquad
$^{\dagger}$Corresponding author.
}

\iclrfinalcopy

\begin{document}

\setcounter{footnote}{1}
\maketitle
\lhead{Preprint}
\vspace{-0.2in}

\begin{abstract}
\input{sections/0_abstract}
\end{abstract}

\input{figures/headline_figure}

\input{sections/1_introduction}
\input{sections/2_related_work}
\input{sections/3_methodology}
\input{sections/4_experiments}
\input{sections/5_conclusion}

\subsection*{AI use statement}

\input{sections/ai_use_statement}

\subsection*{Ethics statement}

\input{sections/ethics_statement}

\subsection*{Reproducibility statement}

\input{sections/reproducibility_statement}



\bibliography{iclr2027_conference}
\bibliographystyle{iclr2027_conference}

\appendix
\input{sections/appendix}

\end{document}

%% file: math_commands.tex
\usepackage{amsmath,amsfonts,bm}

\def\eqref#1{equation~\ref{#1}}

\def\1{\bm{1}}

\def\ro{{\textnormal{o}}}

\def\rr{{\textnormal{r}}}

\DeclareMathAlphabet{\mathsfit}{\encodingdefault}{\sfdefault}{m}{sl}
\SetMathAlphabet{\mathsfit}{bold}{\encodingdefault}{\sfdefault}{bx}{n}

\newcommand{\R}{\mathbb{R}}



%% file: sections/0_abstract.tex
World models, which simulate an environment and predict how it changes under actions, are increasingly used in real-world applications such as robotics. Complex systems call for the same tool because the effect of an action is not immediate. Seeding nodes for a campaign, or immunizing nodes against an epidemic, changes little on its own; what matters is the outcome that unfolds over the steps that follow. Designing an algorithm that selects such actions to maximize expected performance on a task is inherently iterative, and every candidate must be scored by the outcome it produces. Obtaining that outcome has relied on simulation, whose cost becomes a bottleneck when candidates are evaluated over many sampled trajectories. We propose an action-conditioned Network World Model that learns a network's diffusion dynamics under interventions over time, applies each action to the network, and predicts the outcome that follows. It serves as a fast evaluator inside an algorithm design loop in which a coding agent designs and refines executable algorithms using feedback from full rollouts, action-level credit, and counterfactual probes over alternative interventions. Across eight network tasks and five diffusion models, the designed algorithms match or exceed the strongest reported baseline in 138 of 141 settings while enabling up to 14.5 times faster rollouts than Monte Carlo simulation. Code will be released upon acceptance





%% file: figures/headline_figure.tex
\begin{figure}[H]
\definecolor{skyblue}{RGB}{66,133,244}
\definecolor{amber}{RGB}{242,169,0}
\definecolor{leaf}{RGB}{52,168,83}
\definecolor{heat}{RGB}{234,67,53}
\newsavebox{\headlinekey}
\sbox{\headlinekey}{\begin{tikzpicture}[key/.style={font=\sffamily\fontsize{6.0pt}{6.8pt}\selectfont, text=black!62, inner sep=1pt}]
\def\keygap{0.24}
\coordinate (k0) at (0,0);
\draw[leaf!75!black, line width=1.8pt] (k0) -- ++(0.38,0);
\draw[white, line width=0.7pt, fill=leaf!75!black] ($(k0)+(0.19,0)$) circle (0.1);
\node[key, anchor=west] (k1) at ($(k0)+(0.42,0)$) {\textbf{ours}};
\draw[black!70, line width=1pt, dash pattern=on 2.6pt off 1.7pt] ($(k1.east)+(\keygap,0)$) -- ++(0.4,0);
\node[key, anchor=west] (k2) at ($(k1.east)+(\keygap+0.44,0)$) {strongest baseline};
\fill[leaf, opacity=0.5] ($(k2.east)+(\keygap,-0.11)$) rectangle ++(0.3,0.22);
\node[key, anchor=west, inner xsep=0pt] (k3) at ($(k2.east)+(\keygap+0.34,0)$) {our gain};
\end{tikzpicture}}
\centering
\begin{tikzpicture}[
  x=1cm, y=1cm,
  every node/.style={font=\sffamily},
  gnode/.style={circle, draw=black!40, line width=0.35pt, fill=white, minimum size=2.2mm, inner sep=0pt},
  gedge/.style={draw=black!30, line width=0.5pt, line cap=round},
  title/.style={font=\sffamily\bfseries\fontsize{9pt}{10pt}\selectfont, anchor=west, inner sep=0pt},
  body/.style={font=\sffamily\fontsize{7.2pt}{8.6pt}\selectfont, text=black!72, inner sep=1pt, align=center},
  small/.style={font=\sffamily\fontsize{6.6pt}{7.6pt}\selectfont, text=black!62, inner sep=1pt},
  steplabel/.style={font=\sffamily\fontsize{7pt}{7.6pt}\selectfont, text=skyblue!75!black, inner sep=1pt},
  spoke/.style={font=\sffamily\fontsize{5.3pt}{6.1pt}\selectfont, text=black!80, inner sep=1.5pt},
  code/.style={draw=black!15, fill=white, rounded corners=2pt, inner sep=4.5pt, align=left,
               font=\ttfamily\fontsize{6.7pt}{9.4pt}\selectfont, text=black!82},
  flow/.style={-{Stealth[length=2.6mm, width=2.2mm]}, line width=1.5pt, line cap=round},
  chip/.style={rounded corners=2.5pt, inner sep=2.6pt, font=\sffamily\bfseries\fontsize{7.2pt}{8pt}\selectfont},
]

\newcommand{\panel}[7]{
  \fill[#5, rounded corners=6pt] (#1,#2) rectangle (#3,#4);
  \node[title, anchor=base west, text=#6] at (#1+0.8,#4-0.47) {#7};
}

\newcommand{\robot}[3]{
  \begin{scope}[shift={(#1,#2-0.025)}, line cap=round, line join=round]
    \draw[#3, line width=0.8pt] (0,0.14) -- (0,0.21);
    \fill[#3] (0,0.235) circle (0.035);
    \fill[#3] (-0.25,-0.07) rectangle (-0.2,0.05);
    \fill[#3] (0.2,-0.07) rectangle (0.25,0.05);
    \draw[#3, line width=0.85pt, fill=white, rounded corners=1.8pt] (-0.2,-0.17) rectangle (0.2,0.14);
    \fill[#3] (-0.085,0.025) circle (0.042);
    \fill[#3] (0.085,0.025) circle (0.042);
    \draw[#3, line width=0.8pt] (-0.075,-0.09) -- (0.075,-0.09);
  \end{scope}
}
\newcommand{\neuralnet}[3]{
  \begin{scope}[shift={(#1,#2)}, line cap=round]
    \foreach \ya in {0.1, -0.1} \foreach \yb in {0.18, 0, -0.18}
      \draw[#3, line width=0.45pt] (-0.2,\ya) -- (0,\yb);
    \foreach \yb in {0.18, 0, -0.18} \foreach \yc in {0.1, -0.1}
      \draw[#3, line width=0.45pt] (0,\yb) -- (0.2,\yc);
    \foreach \px/\py in {-0.2/0.1, -0.2/-0.1, 0/0.18, 0/0, 0/-0.18, 0.2/0.1, 0.2/-0.1}
      \draw[#3, line width=0.7pt, fill=white] (\px,\py) circle (0.05);
  \end{scope}
}

\newcommand{\minigraph}[5]{
  \foreach \n/\px/\py in {n1/0.33/0.89, n2/0.80/0.80, n3/0.15/0.66, n4/0.57/0.61,
                          n5/0.84/0.45, n6/0.34/0.39, n7/0.69/0.21, n8/0.20/0.12}
    \coordinate (#1-\n) at (#2+\px*#4,#3+\py*#5);
  \foreach \u/\v in {n1/n2, n1/n3, n1/n4, n2/n5, n3/n6, n4/n6, n4/n5, n6/n7, n5/n7, n7/n8}
    \draw[gedge] (#1-\u) -- (#1-\v);
  \foreach \n in {n1, n2, n3, n4, n5, n6, n7, n8}
    \node[gnode] at (#1-\n) {};
}
\newcommand{\paint}[2]{\foreach \n/\shade in {#2} \node[gnode, fill=\shade, draw=black!35] at (#1-\n) {};}

\panel{0}{0.95}{3.0}{5.9}{amber!13}{amber!68!black}{Coding agent}
\robot{0.43}{5.54}{amber!68!black}
\node[code, anchor=north] at (1.5,5.0)
  {\textcolor{amber!70!black}{\textbf{def}} select(G, k):\\\ \ S = hubs(G, k-1)\\\ \ S += [bridge(G)]\\\ \ \textcolor{amber!70!black}{\textbf{return}} S};
\node[body] at (1.5,3.2) {designs an algorithm};
\foreach \score/\px/\shade in {8.1/0.5/white, 8.9/1.5/white, 9.4/2.5/amber!38}
  \node[chip, draw=amber!75!black, line width=0.5pt, fill=\shade, text=amber!55!black, minimum width=0.6cm] at (\px,2.14) {\score};
\foreach \px in {0.87, 1.87}
  \draw[-{Stealth[length=1.7mm]}, draw=amber!75!black, line width=0.7pt] (\px,2.14) -- (\px+0.26,2.14);
\node[small] at (1.5,1.7) {reward per generation};

\begin{scope}[xshift=-0.15cm]
\panel{4.0}{0.95}{8.15}{5.9}{skyblue!11}{skyblue!80!black}{Network World Model}
\neuralnet{4.41}{5.54}{skyblue!80!black}
\foreach \frame/\fx in {f0/4.27, f1/5.507, f2/6.744} {
  \fill[white, rounded corners=3pt] (\fx,3.02) rectangle (\fx+1.137,4.76);
  \minigraph{\frame}{\fx}{3.02}{1.137}{1.74}
}
\paint{f0}{n4/skyblue, n6/skyblue}
\paint{f1}{n4/heat, n6/heat, n1/heat!45, n5/heat!30, n3/heat!50, n7/heat!35}
\paint{f2}{n4/heat, n6/heat, n1/heat!80, n5/heat!65, n3/heat!85, n7/heat!70, n2/heat!45, n8/heat!35}
\foreach \label/\px in {$t=0$/4.84, $t=1$/6.075, $t=H$/7.31}
  \node[steplabel] at (\px,4.95) {\label};
\node[steplabel] at (6.69,4.95) {$\cdots$};
\draw[-{Stealth[length=1.8mm]}, draw=skyblue!60, line width=0.8pt] (4.35,2.74) -- (7.8,2.74);
\node[body] at (6.075,2.42) {predicts how choices spread};
\node[gnode, fill=skyblue, draw=black!35] at (4.5,1.82) {};
\node[small, anchor=west] at (4.68,1.82) {seeds chosen by the algorithm};
\foreach \shade/\px in {heat!30/4.5, heat!65/4.76, heat/5.02}
  \node[gnode, fill=\shade, draw=black!35] at (\px,1.4) {};
\node[small, anchor=west] at (5.2,1.4) {infection probability};
\end{scope}

\draw[flow, draw=amber!80!black] (3.0,4.1) -- (3.85,4.1);
\draw[flow, draw=skyblue!90] (3.85,3.72) -- (3.0,3.72);

\fill[leaf!11, rounded corners=6pt] (8.3,0.95) rectangle (13.95,5.9);

\coordinate (C) at (11.125,3.56);
\def\R{1.56}\def\vcentre{0.9}\def\vspan{0.3}\def\vmax{1.2}
\newcommand{\octagon}[1]{($(C)+(0:#1)$) -- ($(C)+(45:#1)$) -- ($(C)+(90:#1)$) -- ($(C)+(135:#1)$) --
  ($(C)+(180:#1)$) -- ($(C)+(225:#1)$) -- ($(C)+(270:#1)$) -- ($(C)+(315:#1)$) -- cycle}
\pgfmathsetmacro{\rone}{\R*(1.0-\vcentre)/\vspan}

\foreach \name/\angle/\ours in {im/90/1.210, aim/45/1.038, cnd/0/1.102, ec/-45/1.538,
                                ib/-90/1.034, cr/-135/1.513, sl/180/1.259, cp/135/1.212} {
  \pgfmathsetmacro{\ro}{\R*(min(\ours,\vmax)-\vcentre)/\vspan}
  \coordinate (ou-\name) at ($(C)+(\angle:\ro)$);
}

\fill[white] \octagon{\R};
\fill[leaf, opacity=0.5, line join=round]
  (ou-im) -- (ou-aim) -- (ou-cnd) -- (ou-ec) -- (ou-ib) -- (ou-cr) -- (ou-sl) -- (ou-cp) -- cycle;
\fill[white] \octagon{\rone};
\foreach \v in {1.1} {
  \pgfmathsetmacro{\rr}{\R*(\v-\vcentre)/\vspan}
  \draw[black!10, line width=0.4pt] \octagon{\rr};
}
\foreach \angle in {0, 45, ..., 315}
  \draw[black!11, line width=0.4pt] (C) -- ($(C)+(\angle:\R)$);
\draw[black!18, line width=0.5pt] \octagon{\R};
\draw[black!70, line width=1pt, dash pattern=on 2.6pt off 1.7pt, line join=round] \octagon{\rone};
\draw[leaf!75!black, line width=1.8pt, line join=round]
  (ou-im) -- (ou-aim) -- (ou-cnd) -- (ou-ec) -- (ou-ib) -- (ou-cr) -- (ou-sl) -- (ou-cp) -- cycle;
\foreach \name in {im, aim, cnd, ib, ec, cr, sl, cp}
  \draw[white, line width=0.7pt, fill=leaf!75!black] (ou-\name) circle (0.11);

\foreach \v in {1.0, 1.2} {
  \pgfmathsetmacro{\rr}{0.924*\R*(\v-\vcentre)/\vspan}
  \node[font=\sffamily\fontsize{5.4pt}{6pt}\selectfont, text=black!50, fill=white, inner sep=0.5pt] at ($(C)+(22.5:\rr)$) {\v};
}

\node[spoke, anchor=south, align=center] at ($(C)+(90:\R+0.05)$) {Influence\\maximization};
\node[spoke, anchor=south west, align=left] at ($(C)+(45:\R+0.08)$) {Adaptive\\influence\\maximization};
\node[spoke, anchor=west, align=left] at ($(C)+(0:\R+0.07)$) {Critical\\node\\detection};
\node[spoke, anchor=north west, align=left] at ($(C)+(-45:\R+0.08)$) {Epidemic\\control};
\node[spoke, anchor=north, align=center] at ($(C)+(-90:\R+0.05)$) {Influence\\blocking};
\node[spoke, anchor=north east, align=right] at ($(C)+(-135:\R+0.08)$) {Cascade\\reconstruction};
\node[spoke, anchor=east, align=right] at ($(C)+(180:\R+0.07)$) {Source\\localization};
\node[spoke, anchor=south east, align=right] at ($(C)+(135:\R+0.08)$) {Cascade\\prediction};

\node[inner sep=0pt] at (11.125,1.2) {\usebox{\headlinekey}};
\end{tikzpicture}
\caption{A coding agent writes algorithms that choose interventions on a network, and a Network World Model rolls out how each choice unfolds, returning a reward, feedback, and probe answers the agent uses to improve the algorithm (left). Across eight tasks on complex networks, the agent-designed algorithms outperform the strongest baselines, averaged over budgets (right).}
\label{fig:headline}
\vspace{-2pt}
\end{figure}

%% file: sections/1_introduction.tex
\section{Introduction}

Complex systems such as social, contact and information networks are dynamic in two ways: their state evolves on its own, and it also changes under actions taken on the system~\citep{pastor2015epidemic}. Choosing which actions to take to maximize task performance is a combinatorial problem that appears across domains, from selecting the users who will carry adoption farthest~\citep{kempe2003maximizing} to choosing who to vaccinate before an outbreak spreads~\citep{pastor2002immunization,cohen2003efficient} or seeding a correction ahead of a rumor~\citep{budak2011limiting}. One of the major roadblocks in designing algorithms for these choices lies in evaluation: how do we know what an action will lead to before taking it? Classically, the answer is simulation, called inside the search once per candidate~\citep{kempe2003maximizing,tang2014influence,han2018efficient}, since computing the expected outcome is \#P-hard~\citep{chen2010scalable}. Learning-based methods handle this problem by fitting a mapping from a network and a chosen set to the quantity of interest~\citep{xia2021deepis,ko2020monstor}, or by predicting the solution itself~\citep{fan2020finding,ling2023deep}, but each is fit to a particular task and objective. This creates a trade-off between generality and cost: simulation supports arbitrary candidate interventions but requires repeated process execution, whereas learned mappings are inexpensive but typically specialized to a particular task and objective.

Designing effective algorithms for these problems often requires substantial heuristic design and empirical tuning. LLM-based program search can automate part of this process by generating, evaluating, and refining algorithmic code~\citep{romera2024mathematical,mankowitz2023faster,novikov2025alphaevolve,liu2024evolution,ye2024reevo,van2024llamea,zheng2025monte}. Such a search process needs an environment in which the consequences of candidate actions can be evaluated. In other domains, world models serve this role by learning environment dynamics in which an agent can act~\citep{ha2018world,hafner2025mastering,schrittwieser2020mastering,bruce2024genie}. Existing world models largely operate on images, video, point clouds, or low-dimensional control states. Yet network systems require modeling both the relational structure through which interactions propagate and how interventions modify the system and its subsequent dynamics. This raises the central question of this work: can we learn such an action-conditioned world model and use it as a reliable and efficient environment for algorithm design?

Complex systems pose three challenges for world modeling and algorithm design. First, actions may modify node states, edges, or contact weights, changing both the immediate system configuration and the subsequent diffusion dynamics. Conventional world models evolve states within fixed observation spaces~\citep{hafner2025mastering,schrittwieser2020mastering}, while learned diffusion surrogates often fix the intervention type, such as a seed set, and predict a specific objective~\citep{xia2021deepis,ko2020monstor,ling2023deep}. Second, long-horizon evaluation introduces compounding transition error, so accurate next-state prediction does not necessarily preserve the relative performance of candidate algorithms over repeated rollouts. Third, evaluating candidate algorithms with Monte Carlo simulation is costly, while aggregate task scores provide limited guidance for revision, and evaluating alternatives, such as swapping a seed or blocking a different edge, incurs simulation cost.

To address these challenges, we propose a Network World Model and an LLM-based algorithm design framework (Figure~\ref{fig:headline}). The model is a graph neural simulator that applies interventions exactly to node states, edges, and contact weights, then predicts subsequent diffusion through a transition head constrained by the process's known rules. We develop a coding agent search loop that uses fast model rollouts to score candidate algorithms, estimate the contribution of individual actions through ablation, and evaluate alternatives such as node swaps, providing targeted feedback for algorithm revision. We further provide a theoretical analysis that bounds algorithm selection regret in terms of transition error, rollout horizon, and search suboptimality, and shows how exact intervention handling tightens this bound. Across eight tasks and five diffusion models, the framework matches or exceeds the strongest reported baseline in 138 of 141 settings while enabling up to 14.5$\times$ faster rollouts than Monte Carlo simulation. \textbf{Our main contributions are:} \textbf{(1)} an action-conditioned Network World Model for complex network dynamics; \textbf{(2)} an LLM-based framework for network algorithm design using the learned model; and \textbf{(3)} theoretical guarantees for world-model-based algorithm selection, together with extensive evaluation across eight tasks and five diffusion models.

%% file: sections/2_related_work.tex
\section{Related Work}

\paragraph{Algorithms for complex network dynamics.}
Classical methods for influence, containment, and inverse diffusion problems typically rely on repeated simulation or task specific structural objectives~\citep{kempe2003maximizing,pastor2002immunization,budak2011limiting,shah2011rumors}. Learned alternatives reduce this cost by approximating the forward process, directly predicting solutions, or learning generative priors over diffusion trajectories, but are generally specialized to a particular task, objective, or intervention form~\citep{xia2021deepis,fan2020finding,ling2022source,yan2023diffusion,memon2025deep}.

\paragraph{World models.}
World models learn a transition function and plan or learn a policy against it in place of the environment~\citep{ha2018world}, either by compressing observations and rolling them forward in a latent space~\citep{hafner2019learning,hafner2019dream} or by training the model to support search~\citep{schrittwieser2020mastering}. Predominantly, they are learned over video frames or low-dimensional control states, where the observation keeps a fixed shape, however the agent acts. On graphs, the pieces exist separately: learned solvers predict the next state by message passing but take no action~\citep{battaglia2016interaction,sanchez2020learning,pfaff2020learning}; the estimators above take an action but return only a final number; and graph world models act on multimodal agent states, where an action names a task and not a change to the network~\citep{feng2025graph}. A complex system requires both a latent representation structured by the graph and dynamics that respond to an action which changes that structure~\citep{memon2026toward}.

\paragraph{Algorithm discovery with language models.}
Language models can write a program, score it, and revise it, and this work has converged on an evolutionary loop over a population of candidates. FunSearch pairs a model with a program database and selects on the exact objective~\citep{romera2024mathematical}; EoH, ReEvo, LLaMEA and MCTS-AHD organize the same loop as an evolution over heuristics, adding reflection, tree search, or mutation operators~\citep{liu2024evolution,ye2024reevo,van2024llamea,zheng2025monte}; AlphaEvolve scales it to whole programs~\citep{novikov2025alphaevolve}. Reinforcement learning is the other learned route on graphs, training a policy that emits a solution instead of a procedure that produces one~\citep{khalil2017learning,manchanda2020gcomb}. Across these systems, the evaluator is taken as given, since the objective is cheap and the loop's cost is the model's. Graph diffusion breaks that assumption: scoring one candidate means thousands of simulations. We supply the evaluator and make it return diagnostics and counterfactual answers.

%% file: sections/3_methodology.tex
\section{Methodology}

\subsection{Problem Formulation}
\label{sec:problem}

Let $\mathcal{G}$ be a complex network with node set $\mathbb{V}$, such as a social network or a power grid, carrying a state $\bm{s}_t$ at discrete time $t$, and let $a_t$ be an action applied to $\mathcal{G}$ at time $t$. The network evolves as
\begin{equation}
\mathbf{s}_{t+1}\sim P(\cdot\mid \bm{s}_t, a_t),
\label{eq:kernel-a}
\end{equation}
where $P$ gives the distribution of the next state given the current state and the action applied to it. For the processes we study, the distribution of the outcome of the transition $P$ after $H$ steps is not simple: computing expected spread is \#P-hard under IC~\citep{chen2010scalable}. The score of a candidate must therefore be estimated from many sampled trajectories, which is what makes it expensive. Actions are produced by an algorithm $\pi$, and let $H$ be the horizon of the task, the number of timesteps after which its outcome is read. Applying the actions returned by $\pi$ to $\mathcal{G}$ and advancing Eq.~(\ref{eq:kernel-a}) for $H$ steps yields the outcome of the plan $\bm{a}$, which receives a score $R(\bm{a})$, the metric the task defines for that outcome. The actions must satisfy constraints $\mathbb{F}$, for example how many actions may be applied, which operations are allowed, and which elements of the network must be left intact. A task instance is formally represented by $\mathcal{T}=(\mathcal{G}, P, \mathbb{F}, H, \bm{s}_0)$ with initial state $\bm{s}_0$, and the plan $\pi$ produces is $\bm{a}=(a_0,\dots, a_{H-1})$.

The goal is to produce an algorithm that returns actions satisfying constraints and attaining the highest expected score at the horizon. The score is obtained by executing actions on the complex network and letting it evolve, which is required for every candidate algorithm and for queries about actions not yet applied. The problem therefore splits into two parts: learning a world model of the network dynamics (Section~\ref{sec:world-model}) and searching for algorithms against that world model (Section~\ref{sec:design}). We train a world model $f_{\bm{\theta}}$ on transitions $(\bm{s}_t,a_t,\bm{s}_{t+1})$ observed on the network by minimizing
\begin{equation}
\mathcal{L}(\bm{\theta})=\mathbb{E}\bigl[-\log f_{\bm{\theta}}(\bm{s}_{t+1}\mid \mathcal{G}_t,\bm{s}_t,a_t)\bigr].
\label{eq:learn-a}
\end{equation}
Algorithm design solves
\begin{equation}
\pi^\star\in\operatorname*{arg\,max}_{\pi} J_P(\pi),
\qquad
J_P(\pi)=\mathbb{E}\bigl[R(\pi(\mathcal{T}))\bigr],
\qquad\text{subject to}\qquad
\bm{a}=\pi(\mathcal{T})\in\mathbb{F},
\label{eq:objective-a}
\end{equation}
where $J_P$ is the expected score under the trusted transition kernel $P$, and the expectation is over the transition randomness in Eq.~(\ref{eq:kernel-a}). This requires $f_{\bm{\theta}}$ to capture the effect of changing actions, remain accurate over the horizon $H$, and generalize across task instances sharing the same dynamics.

\subsection{Network World Model of Complex Networks}
\label{sec:world-model}

The Network World Model $f_{\bm{\theta}}$ predicts the state of the network one step ahead, $\bm{s}_{t+1}$, from the current state $\bm{s}_t$ and the action $a_t$ applied to it. An action changes the network in two ways: directly, through an immediate effect that the action determines and that is known exactly, and indirectly, through the diffusion that follows. The functional form of the one-step diffusion is given by the process, but its parameters are in general not observed and must be inferred from recorded transitions. The direct effect is applied, and the dynamics are learned,
\begin{equation}
(\mathcal{G}_t^{+},\bm{s}_t^{+})=T_{\mathrm{exo}}(\mathcal{G}_t,\bm{s}_t,a_t),
\qquad
\hat{\bm{s}}_{t+1}
=
f_{\bm{\theta}}
(\mathcal{G}_t^{+},\bm{s}_t^{+},\bm{\eta}(a_t)),
\label{eq:wm-factorized}
\end{equation}
which is the factorized form of the model in Eq.~(\ref{eq:learn-a}): $T_{\mathrm{exo}}$ applies the action by modifying the network, its parameters, or the state as the action specifies, $(\mathcal{G}_t^{+},\bm{s}_t^{+})$ is the resulting post-intervention network and state, before the diffusion takes its step, and $\bm{\eta}(a_t)$ are features describing the operations the action applies.
Theoretical analysis in Appendices~\ref{app:exact-intervention-proof} and~\ref{app:exact-intervention-regret-proof} shows that applying the intervention exactly removes intervention semantics error from the rollout bound and consequently tightens the downstream algorithm selection guarantee.

\paragraph{Action conditioning.}
The action reaches the model through two channels: the post intervention network and state, into which $T_{\mathrm{exo}}$ has already written it, and a set of features $\bm{\eta}(a_t)$ carried alongside it. The second channel is needed because the state records what changed but not which operation caused that change, and distinct operations can produce similar post intervention states. The features record, for each modified element, the operation type and, when parameterized, its magnitude. They decompose into node level features $\bm{\eta}_v(a_t)$ and edge level features $\bm{\eta}_{uv}(a_t)$, concatenated to the node input of the message passing network and its edge map,
\begin{equation}
\bm{h}_v^{(0)}=\phi\bigl([\,\bm{s}_{t,v}^{+},\;\bm{\eta}_v(a_t)\,]\bigr),
\qquad
e_{uv}=\psi\bigl([\,\bm{h}_u,\;\bm{h}_v,\;w_{uv},\;\bm{\eta}_{uv}(a_t)\,]\bigr),
\label{eq:conditioning}
\end{equation}
Here, $\bm{s}_{t,v}^{+}$ is the post intervention state of node $v$, $\bm{h}_v^{(0)}$ its initial representation, $\bm{h}_u$ and $\bm{h}_v$ the representations returned by message passing, $w_{uv}$ the edge weight, and $\phi$ and $\psi$ the node and edge maps. For an edited edge, $\bm{\eta}_{uv}(a_t)$ records whether the edge was modified and its new weight; node operations are encoded analogously. Message passing runs on $\mathcal{G}_t^{+}$, so structural edits alter aggregation: removed edges carry no messages, while downweighted edges contribute proportionally less.

\paragraph{Transition head.}
The model of Eq.~(\ref{eq:wm-factorized}) runs in two stages. Message passing advances the system one step from the post-intervention network and state, so $\bm{h}_v$ represents the state of node $v$ at $t + 1$, and a readout $g$ reports that state. The transition constrains $g$ in two ways. First, $T_{\mathrm{exo}}$ has already set the nodes affected by the action, so $g$ is applied only to the remainder. Writing $\mathbb{U}_t$ for that set,
\begin{equation}
\hat{\bm{s}}_{t + 1,v}=g(\bm{h}_v)\ \ \text{for } v\in\mathbb{U}_t,
\qquad
\hat{\bm{s}}_{t + 1,v}=\bm{s}_{t,v}^{+}\ \ \text{for } v\notin\mathbb{U}_t.
\label{eq:head-general}
\end{equation}
Second, when the process form is known, $g$ is constrained to that form, leaving only its parameters to be learned. For a cascade process, for example, $g(\bm{h}_v)=1-\prod_{u\rightarrow v}(1-q_{uv}z_u)$, where $q_{uv}=e_{uv}$ is the learned transmission probability and $z_u$ indicates whether $u$ lies on the post intervention frontier. Training applies Eq.~(\ref{eq:learn-a}) one step at a time and node by node, using cross-entropy between $g(\bm{h}_v)$ and the conditional next state target over $\mathbb{U}_t$.

\paragraph{Rollout.}
A candidate plan $\bm{a}$ is scored by running the model forward to the horizon. A trajectory starts at $\bm{s}_0$ and repeats the same three operations at each step: the action scheduled for that step is applied by $T_{\mathrm{exo}}$, the model returns a probability for each node in $\mathbb{U}_t$, and a state is drawn from those probabilities and carried into the next step,
\begin{equation}
(\mathcal{G}_t^{+},\hat{\bm{s}}_t^{+(j)})=T_{\mathrm{exo}}(\mathcal{G}_t,\hat{\bm{s}}_t^{(j)},a_t),
\qquad
\hat{\bm{s}}_{t+1}^{(j)}
\sim
f_{\bm{\theta}}
(\mathcal{G}_t^{+},\hat{\bm{s}}_t^{+(j)},\bm{\eta}(a_t)),
\qquad t=0,\ldots,H-1.
\label{eq:wm-rollout}
\end{equation}
The score of trajectory $j$ is read from $\hat{\bm{s}}_H^{(j)}$. Drawing a state instead of carrying the probabilities forward keeps each rollout consistent with the underlying process: a vector of marginals holds every node partially active at once and discards the dependence between them, which is what the score at the horizon is built from. The candidate and current best algorithms are advanced under common random numbers so their score difference is not dominated by process variance. This is what the search calls in place of a Monte Carlo simulator. The features $\bm{\eta}$, the network, and the readout depend on the dynamic process alone, so one $\bm{\theta}$ can be applied across tasks with the same dynamic process. An evaluator $\mathcal{E}$ scores a plan $\bm{a}$ by rolling it out on $n$ sampled trajectories, under a seed $\xi$ that fixes their randomness, and averaging the per-trajectory scores,
\begin{equation}
\hat R_{\mathcal{E}}(\bm{a};\xi)=\frac{1}{n}\sum_{i=1}^{n} R_i(\bm{a};\xi),
\qquad
\hat R_{\mathcal{E}}(\pi;\xi)=\hat R_{\mathcal{E}}(\pi(\mathcal{T});\xi),
\label{eq:evaluator}
\end{equation}
where $R_i(\bm{a};\xi)$ is the task's score read at the horizon from trajectory $i$, that is, from $\hat{\bm{s}}_H^{(i)}$ under the world model. An algorithm is scored through the plan it produces, so $\hat R_{\mathcal{E}}(\pi;\xi)$ is the evaluator's estimate of $J_{\widehat P_{\bm{\theta}}}(\pi)$, the counterpart under the learned dynamics of the objective $J_P(\pi)$ in Eq.~(\ref{eq:objective-a}). To simplify notation, we write $R_i(\pi;\xi)=R_i(\pi(\mathcal{T});\xi)$.

\subsection{Network Algorithm Design Agent}
\label{sec:design}

We develop an LLM-based network algorithm design framework around the Network World Model, in which a coding agent searches over executable algorithms for the objective in Eq.~(\ref{eq:objective-a}). Section~\ref{sec:design-loop} describes each round of this search. Section~\ref{sec:feedback} describes the feedback the world model returns beyond the task score, which tells the agent how its candidate algorithm should be revised. Section~\ref{sec:paired} describes how a candidate algorithm is compared with the best under sampling noise.

\subsubsection{Algorithm Design Loop}
\label{sec:design-loop}

In each round of the algorithm design loop, the coding agent generates an executable candidate algorithm $\pi$. The agent is provided with a library of classical algorithms for the task as part of its context (Appendix~\ref{app:library}), which it may reuse, combine, or extend when generating the candidate. The candidate produces the actions $\bm{a}$ for the task instance, and the Network World Model rolls these actions forward over $n$ sampled trajectories, scoring the candidate and the current best algorithm $\hat{\pi}$ on the same trajectories under $\hat R_{\mathcal{E}}$. The candidate replaces $\hat{\pi}$ when its gain exceeds the variability of the rollout estimates, and the algorithm returned at the end of the search is the estimate of $\pi^\star$ in Eq.~(\ref{eq:objective-a}). Together with the task score, the Network World Model returns the feedback described in Section~\ref{sec:feedback}, which is provided to the agent before the next candidate is generated.

Candidate generation is governed by a set of search operators divided between exploration and exploitation. Exploration is favored earlier in the search, while later iterations increasingly refine the current best algorithm using the evidence accumulated during previous evaluations. The full operator definitions and scheduling rules are provided in Appendix~\ref{app:operators}. Each candidate is validated against the task constraints before evaluation.

\subsubsection{Rollout Feedback from the Network World Model}
\label{sec:feedback}

The task score states how well a candidate performs but not how to revise it, and a revision to a plan can take three forms: retiming its actions, removing or keeping actions in the plan, and adding or swapping in actions outside it. The Network World Model returns one view for each: rollout diagnostics describe how the process evolved under the plan, action-level credit measures what each action contributed, and counterfactual probes evaluate actions the plan did not take. Diagnostics and credit are computed after every evaluation from the rollouts that produce the score, and the agent poses probes before writing the next candidate, targeting the actions the other two single out.

\paragraph{Rollout diagnostics.} The coding agent is given a trace of how the actions produced by the candidate algorithm unfold over the sampled trajectories. For spreading tasks, these diagnostics include the estimated task score and its standard error, the cumulative number of nodes reached after each step, the number reached during that step, and the step at which the process stops spreading. Task-specific diagnostics are used for the inverse and forecasting tasks (Appendix~\ref{app:evaluators}). These diagnostics come from the same rollouts used to evaluate the actions produced by the candidate algorithm and therefore incur no additional rollout cost. They provide intermediate information about how the network state evolves under those actions, as revealed by the rollout trajectories.

\paragraph{Action-level credit.} Credit attributes the task score to the individual actions of the plan. Let $a_1,\ldots,a_k$ denote the $k$ attributable actions contained in the plan $\bm{a}$ produced by the candidate algorithm, and let $\bm{a}_{-j}$ denote the same plan with action $a_j$ removed. The credit assigned to $a_j$ is the change in the evaluator score when that action is omitted under the same rollout seed $\xi$,
\begin{equation}
c_j=\hat R_{\mathcal{E}}(\bm{a};\xi)-\hat R_{\mathcal{E}}(\bm{a}_{-j};\xi).
\label{eq:credit}
\end{equation}
A larger $c_j$ indicates a greater marginal contribution of $a_j$ to the evaluated outcome, whereas a value near zero indicates that removing the action has little effect on the task score. Computing credit for all $k$ actions requires $k+1$ plan evaluations, which are batched through the Network World Model over the same $H$-step rollout horizon.

\paragraph{Counterfactual probes.} Probes evaluate actions outside the current plan. Before generating the next candidate algorithm, the coding agent is given a probe turn in which it can query the Network World Model about counterfactual modifications to the current action set. Each query is evaluated through a world model rollout. For example, one of the probes uses gradients from a mean-field rollout to identify unselected actions with high marginal potential and selected actions with low marginal value. More generally, the probes evaluate alternative actions and interrogate how those alternatives change the resulting network dynamics. The full set of probe operations is provided in Table~\ref{tab:probes} of Appendix~\ref{app:probes}. Together, these feedback signals inform the generation of the next candidate, whose performance is then compared with the current best algorithm.

\subsubsection{Search and Candidate Selection}
\label{sec:paired}

\paragraph{Candidate algorithm selection.} The task scores of the candidate algorithm $\pi$ and the current best algorithm $\hat{\pi}$ are estimated from sampled trajectories and are therefore subject to rollout variability. To reduce the effect of sampling noise on the comparison, a fresh seed $\xi$ is drawn in each round, and both algorithms are evaluated on the same $n$ sampled trajectories. Using shared rollout realizations makes the paired score difference reflect differences between the two algorithms, with variation from independently sampled trajectories removed from the comparison. For trajectory $i$, the paired difference, the mean improvement, and its standard error are
\begin{equation}
\Delta_i = R_i(\pi;\xi)-R_i(\hat{\pi};\xi),
\qquad
\Delta=\frac{1}{n}\sum_{i=1}^{n}\Delta_i,
\qquad
b=\frac{\operatorname{sd}(\Delta_1,\ldots,\Delta_n)}{\sqrt{n}}.
\label{eq:band}
\end{equation}
The candidate replaces $\hat{\pi}$ when $\Delta>b$ and is otherwise rejected, so only improvements larger than the estimated variability of the paired rollout differences are accepted. The current best algorithm is re-evaluated under the fresh seed in every round, preventing its score from being carried forward from a favorable rollout realization.

\subsection{Reliability of World-Model-Based Algorithm Selection}
\label{sec:theory-regret}

The Network World Model approximates the transition kernel $P$, so errors in the learned kernel $\widehat P_{\bm{\theta}}$ may affect which algorithm is selected. We quantify this through algorithm selection regret. Let $\Pi$ be the set of algorithms considered by the search, with expected scores $J_P(\pi)$ and $J_{\widehat P_{\bm{\theta}}}(\pi)$ under the trusted and learned dynamics. For $\pi^\star\in\operatorname*{arg\,max}_{\pi\in\Pi}J_P(\pi)$ and the returned algorithm $\widehat{\pi}$,
\begin{equation}
\operatorname{Reg}_P(\widehat{\pi};\Pi)
=
J_P(\pi^\star)-J_P(\widehat{\pi}).
\label{eq:selection-regret-definition}
\end{equation}
Because the search runs for finitely many rounds, let $\eta_{\mathrm{search}}\geq0$ denote its learned objective suboptimality, such that $J_{\widehat P_{\bm{\theta}}}(\widehat{\pi})\geq\sup_{\pi\in\Pi}J_{\widehat P_{\bm{\theta}}}(\pi)-\eta_{\mathrm{search}}$.

\begin{theorem}[World-model error implies bounded algorithm-selection regret]
\label{thm:selection-regret}
Suppose the one-step transition error of the Network World Model is at most $\epsilon$ over the state-action support reachable by algorithms in $\Pi$ within horizon $H$, the task score range is bounded by $B_R$, and the returned algorithm is $\eta_{\mathrm{search}}$ suboptimal under the learned evaluator. Then
\begin{equation}
\operatorname{Reg}_P(\widehat{\pi};\Pi)
\leq2B_RH\epsilon+\eta_{\mathrm{search}}.
\label{eq:selection-regret-bound}
\end{equation}
\end{theorem}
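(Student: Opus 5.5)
The proof is the standard simulation-lemma argument followed by a three-term regret decomposition. The first step is a uniform value-difference bound: for every $\pi\in\Pi$,
\[
\bigl|J_P(\pi)-J_{\widehat P_{\bm{\theta}}}(\pi)\bigr|\le B_R H\epsilon .
\]
Throughout I read ``one-step transition error at most $\epsilon$'' as a total-variation bound, $\mathrm{TV}\bigl(P(\cdot\mid\bm{s},a),\widehat P_{\bm{\theta}}(\cdot\mid\bm{s},a)\bigr)\le\epsilon$, on the reachable support. With $T_{\mathrm{exo}}$ applied exactly, $\widehat P_{\bm{\theta}}$ is the composition of $T_{\mathrm{exo}}$ with the learned diffusion step, so this is the one-step discrepancy between the two kernels.

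The plan is fixed by $\pi(\mathcal{T})$ before the rollout, so $\bm{a}=(a_0,\dots,a_{H-1})$ is deterministic given $\mathcal{T}$. Let $\mu_t$ and $\hat\mu_t$ be the state distributions at time $t$ under $P$ and $\widehat P_{\bm{\theta}}$, both started from $\bm{s}_0$. I would introduce hybrid processes $\nu^{(k)}$ that run $\widehat P_{\bm{\theta}}$ for the first $k$ steps and $P$ for the remaining $H-k$ steps. Then $\nu^{(0)}=\mu_H$ and $\nu^{(H)}=\hat\mu_H$, and telescoping gives
\[
J_P(\pi)-J_{\widehat P_{\bm{\theta}}}(\pi)=\sum_{k=0}^{H-1}\Bigl(\mathbb{E}_{\nu^{(k)}}[R]-\mathbb{E}_{\nu^{(k+1)}}[R]\Bigr).
\]
Consecutive hybrids differ only in the kernel used at step $k$. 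Conditioning on $\bm{s}_k$, which is drawn from $\hat\mu_k$ and hence lies in the reachable support, each summand is the difference of expectations of $V_{k+1}(\bm{s})=\mathbb{E}_P[R\mid \bm{s}_{k+1}=\bm{s}]$ under the two next-state laws. Since $V_{k+1}$ takes values in an interval of length $B_R$, this difference is at most $B_R\,\mathrm{TV}\le B_R\epsilon$. Summing the $H$ terms gives the claimed bound.

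The second step is the decomposition
\[
J_P(\pi^\star)-J_P(\widehat\pi)=\bigl[J_P(\pi^\star)-J_{\widehat P_{\bm{\theta}}}(\pi^\star)\bigr]+\bigl[J_{\widehat P_{\bm{\theta}}}(\pi^\star)-J_{\widehat P_{\bm{\theta}}}(\widehat\pi)\bigr]+\bigl[J_{\widehat P_{\bm{\theta}}}(\widehat\pi)-J_P(\widehat\pi)\bigr].
\]
The first and third brackets are each at most $B_RH\epsilon$ by the first step. The middle bracket is at most $\sup_{\pi\in\Pi}J_{\widehat P_{\bm{\theta}}}(\pi)-J_{\widehat P_{\bm{\theta}}}(\widehat\pi)\le\eta_{\mathrm{search}}$ by definition of $\eta_{\mathrm{search}}$. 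Adding the three bounds yields Eq.~(\ref{eq:selection-regret-bound}).

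The main obstacle is the support condition in the first step. The error bound must hold at states reached under the learned rollout, because the hybrid conditions on $\bm{s}_k\sim\hat\mu_k$, and not only at states reached under $P$. I would therefore state explicitly that ``reachable by algorithms in $\Pi$ within horizon $H$'' covers the supports of both $\mu_t$ and $\hat\mu_t$. If the paper instead defines $\epsilon$ as an $L_1$ distance, the bound picks up a factor of $1/2$ (the factor $2$ in the theorem is then spent differently), but it remains valid as stated. Finite-sample noise of $\hat R_{\mathcal{E}}$ stays outside this argument, since $\eta_{\mathrm{search}}$ is defined on $J_{\widehat P_{\bm{\theta}}}$ directly.
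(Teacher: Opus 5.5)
Your proof is correct, and its second step is the paper's own chain of inequalities written as a three-term decomposition. The two proofs differ in the value-difference lemma. Instead of telescoping over hybrid processes, the paper couples the trusted and learned trajectories maximally at each transition, so they separate at a given step with probability at most $\epsilon$. This bounds the total variation between the two trajectory laws by $1-(1-\epsilon)^H\le H\epsilon$, and multiplying by $B_R$ gives the value bound.

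The coupling buys three things. First, it gives the sharper quantity $\Delta_H(\epsilon)=B_R\bigl(1-(1-\epsilon)^H\bigr)$, which never exceeds $B_R$. The paper reuses it for ranking preservation, exact recovery, and the exact-intervention comparison, whereas your route yields only the linear $B_RH\epsilon$. Second, the one-step bound is needed only at states where the coupled trajectories still coincide, hence only on the trusted support. So the strengthened reading of ``reachable'' that you flag as the main obstacle is not needed. Within your own framework you can get the same effect by reversing the hybrid order: use the trusted kernel for the first $k$ steps and the learned kernel afterwards, and compare using the learned-dynamics value function. The conditioning is then on $\bm{s}_k\sim\mu_k$ rather than $\hat\mu_k$. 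Third, agreeing trajectories share a history and hence an action, so the coupling covers history-dependent algorithms, such as the adaptive influence maximization policies, with no change. Your open-loop argument extends to them if you condition on the history up to step $k$ instead of on $\bm{s}_k$ alone.

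In exchange, your telescoping route is more elementary and works directly with expectations. It also immediately gives an averaged refinement: the $k$-th term is bounded by $B_R$ times the expected one-step total variation along the rollout, rather than by a uniform supremum.
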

The theorem shows that exact reproduction of every rollout is not required for reliable algorithm selection: the downstream loss is controlled by the accumulated transition error and the search error. The full assumptions and the proof are provided in Appendix~\ref{app:regret-proof}. Further consequences for pairwise ranking preservation and exact recovery of the optimal algorithm under a sufficient selection margin are provided in Appendices~\ref{app:ranking-proof} and~\ref{app:recovery-proof}.

%% file: sections/4_experiments.tex
\section{Experiments}
\label{sec:experiments}

\begin{table}[!t]
\centering
\caption{Performance of designed algorithms on four tasks and two datasets per task. Each cell reports results under two diffusion models: IC\,/\,LT, except for influence blocking, which reports IC\,/\,CLT. A dash indicates an OOM error or timeout.}
\label{tab:results_main}
\resizebox{\textwidth}{!}{%
\begin{tabular}{@{}l*{8}{c}@{}}
\toprule
\multicolumn{9}{@{}l}{\textbf{Influence maximization} (IC\,/\,LT): spread, \% of nodes activated ($\uparrow$)} \\
\addlinespace[2pt]
 & \multicolumn{4}{c}{Network Science (1,589 nodes)} & \multicolumn{4}{c}{Digg (116,893 nodes)} \\
\cmidrule(lr){2-5}\cmidrule(lr){6-9}
Method & 1\% & 5\% & 10\% & 20\% & 1\% & 5\% & 10\% & 20\% \\
\midrule
IMM                               & 8.7\,/\,10.9 & 24.8\,/\,30.2 & 38.0\,/\,45.1 & 59.2\,/\,68.5 & 25.9\,/\,47.0 & 36.0\,/\,60.9 & 44.2\,/\,69.7 & 55.7\,/\,79.6 \\
OPIM                              & 8.8\,/\,10.8 & 24.2\,/\,29.5 & 37.7\,/\,45.3 & 58.3\,/\,67.4 & 27.4\,/\,50.5 & 41.6\,/\,68.4 & 51.8\,/\,77.6 & 62.7\,/\,85.5 \\
SubSIM                            & 8.7\,/\,10.7 & 24.3\,/\,29.1 & 37.3\,/\,45.0 & 57.8\,/\,68.5 & 27.3\,/\,50.1 & 41.7\,/\,68.4 & 51.8\,/\,77.7 & 62.7\,/\,85.4 \\
DeepIM                            & 4.8\,/\,5.3 & 15.3\,/\,20.0 & 27.7\,/\,32.9 & 45.8\,/\,53.0 & \na\,/\,\na & \na\,/\,\na & \na\,/\,\na & \na\,/\,\na \\
DegreeDiscount                    & 8.3\,/\,10.7 & 23.7\,/\,29.7 & 36.3\,/\,44.5 & 55.0\,/\,65.5 & 26.0\,/\,49.6 & 37.0\,/\,62.7 & 45.8\,/\,70.0 & 60.0\,/\,81.6 \\
\midrule
\textbf{Network World Model} & \textbf{8.9}\,/\,\textbf{11.3} & \textbf{25.3}\,/\,\textbf{30.9} & \textbf{39.1}\,/\,\textbf{46.8} & \textbf{60.1}\,/\,\textbf{69.3} & \textbf{30.4}\,/\,\textbf{56.8} & \textbf{45.8}\,/\,\textbf{73.2} & \textbf{56.5}\,/\,\textbf{83.1} & \textbf{69.4}\,/\,\textbf{93.5} \\
\bottomrule
\addlinespace[1em]
\toprule
\multicolumn{9}{@{}l}{\textbf{Adaptive influence maximization} (IC\,/\,LT): spread, \% of nodes activated ($\uparrow$)} \\
\addlinespace[2pt]
 & \multicolumn{4}{c}{Network Science (1,589 nodes)} & \multicolumn{4}{c}{Digg (116,893 nodes)} \\
\cmidrule(lr){2-5}\cmidrule(lr){6-9}
Method & 1\% & 5\% & 10\% & 20\% & 1\% & 5\% & 10\% & 20\% \\
\midrule
EPIC                  & 8.4\,/\,10.8 & 23.1\,/\,28.7 & 34.6\,/\,41.5 & 52.9\,/\,62.3 & 27.5\,/\,48.4 & 40.0\,/\,65.8 & 49.0\,/\,74.0 & 60.7\,/\,82.4 \\
Adaptive DegreeDiscount           & 8.2\,/\,10.8 & 23.6\,/\,30.2 & 36.5\,/\,45.8 & 56.9\,/\,69.3 & 29.0\,/\,54.2 & 43.5\,/\,68.9 & 54.8\,/\,79.4 & 69.3\,/\,93.6 \\
IMM                               & 8.7\,/\,10.9 & 24.8\,/\,30.2 & 38.0\,/\,45.1 & 59.2\,/\,68.5 & 25.9\,/\,47.0 & 36.0\,/\,60.9 & 44.2\,/\,69.7 & 55.7\,/\,79.6 \\
Static-Split                      & 8.3\,/\,10.7 & 24.1\,/\,30.3 & 37.5\,/\,45.9 & 58.1\,/\,69.3 & 26.4\,/\,49.6 & 38.7\,/\,64.5 & 49.0\,/\,73.8 & 65.1\,/\,87.5 \\
\midrule
\textbf{Network World Model} & \textbf{8.9}\,/\,\textbf{11.3} & \textbf{25.2}\,/\,\textbf{30.9} & \textbf{39.1}\,/\,\textbf{46.8} & \textbf{60.0}\,/\,\textbf{69.4} & \textbf{30.4}\,/\,\textbf{56.8} & \textbf{45.6}\,/\,\textbf{73.2} & \textbf{56.4}\,/\,\textbf{83.0} & \textbf{71.9}\,/\,\textbf{94.0} \\
\bottomrule
\addlinespace[1em]
\toprule
\multicolumn{9}{@{}l}{\textbf{Critical node detection} (IC\,/\,LT): remaining spread, \% of nodes infected ($\downarrow$)} \\
\addlinespace[2pt]
 & \multicolumn{4}{c}{Power Grid (4,941 nodes)} & \multicolumn{4}{c}{PGP (10,680 nodes)} \\
\cmidrule(lr){2-5}\cmidrule(lr){6-9}
Method & 1\% & 5\% & 10\% & 20\% & 1\% & 5\% & 10\% & 20\% \\
\midrule
HDA                               & 26.5\,/\,28.9 & 24.0\,/\,27.1 & 21.8\,/\,25.4 & 18.6\,/\,22.3 & 26.4\,/\,31.9 & 23.3\,/\,28.5 & 20.8\,/\,25.6 & 18.0\,/\,21.6 \\
BPD+R                             & 26.7\,/\,29.1 & 24.1\,/\,27.5 & 21.9\,/\,25.8 & 18.5\,/\,22.4 & 26.4\,/\,32.0 & \na\,/\,\na & 21.2\,/\,25.8 & 18.0\,/\,21.6 \\
CI+R                              & 26.7\,/\,29.0 & 24.1\,/\,27.3 & 21.9\,/\,25.7 & 18.5\,/\,22.3 & 26.6\,/\,32.0 & 23.4\,/\,28.5 & 21.2\,/\,25.9 & 18.1\,/\,21.6 \\
EI                                & 26.8\,/\,29.2 & 24.5\,/\,27.9 & 22.1\,/\,26.0 & 19.1\,/\,23.2 & 26.5\,/\,32.0 & 23.5\,/\,28.8 & 21.1\,/\,25.9 & 18.2\,/\,22.2 \\
Frontier                          & 26.3\,/\,28.6 & 22.7\,/\,25.0 & 18.7\,/\,20.4 & 10.7\,/\,\textbf{10.7} & 26.7\,/\,32.0 & 23.4\,/\,27.9 & 20.2\,/\,23.6 & 14.2\,/\,14.5 \\
\midrule
\textbf{Network World Model} & \textbf{26.2}\,/\,\textbf{28.2} & \textbf{22.0}\,/\,\textbf{24.1} & \textbf{17.3}\,/\,\textbf{19.6} & \textbf{10.2}\,/\,\textbf{10.7} & \textbf{25.8}\,/\,\textbf{31.1} & \textbf{21.3}\,/\,\textbf{26.7} & \textbf{16.5}\,/\,\textbf{21.5} & \textbf{11.0}\,/\,\textbf{13.0} \\
\bottomrule
\addlinespace[1em]
\toprule
\multicolumn{9}{@{}l}{\textbf{Influence blocking} (IC\,/\,CLT): rumor cascade size (nodes) ($\downarrow$)} \\
\addlinespace[2pt]
 & \multicolumn{4}{c}{Email-EU (1,005 nodes)} & \multicolumn{4}{c}{Gnutella24 (26,518 nodes)} \\
\cmidrule(lr){2-5}\cmidrule(lr){6-9}
Method & 10 & 20 & 40 & 50 & 10 & 20 & 40 & 50 \\
\midrule
RPS                               & 60.2\,/\,\textbf{70.9} & 47.4\,/\,56.7 & 35.0\,/\,42.8 & 31.9\,/\,39.1 & 1520.6\,/\,1625.3 & 1311.6\,/\,1380.6 & 1029.6\,/\,1076.5 & 939.8\,/\,980.4 \\
Reverse                           & 61.0\,/\,73.1 & 49.2\,/\,57.9 & 37.2\,/\,44.8 & 34.5\,/\,42.1 & 1817.6\,/\,1961.8 & 1713.1\,/\,1856.4 & 1462.0\,/\,1587.7 & 1413.1\,/\,1535.7 \\
Proximity                         & 63.1\,/\,74.9 & 52.4\,/\,61.2 & 39.6\,/\,47.7 & 36.3\,/\,44.2 & 2338.9\,/\,2531.6 & 2289.9\,/\,2481.1 & 2196.5\,/\,2379.0 & 2141.1\,/\,2319.7 \\
GreedyReplace                     & 92.2\,/\,162.3 & 77.2\,/\,163.5 & 60.1\,/\,157.3 & 54.8\,/\,154.9 & 1551.8\,/\,1640.5 & 1364.9\,/\,1449.2 & 1088.2\,/\,1211.8 & 990.4\,/\,1072.4 \\
SandIMIN                          & 92.5\,/\,160.0 & 77.4\,/\,158.7 & 63.7\,/\,157.8 & 54.4\,/\,150.9 & 1564.9\,/\,1633.3 & 1391.2\,/\,1456.3 & 1106.4\,/\,1173.9 & 1017.8\,/\,1096.9 \\
\midrule
\textbf{Network World Model} & \textbf{59.4}\,/\,71.7 & \textbf{46.3}\,/\,\textbf{55.4} & \textbf{34.5}\,/\,\textbf{42.2} & \textbf{31.0}\,/\,\textbf{38.7} & \textbf{1503.3}\,/\,\textbf{1582.3} & \textbf{1275.0}\,/\,\textbf{1333.8} & \textbf{985.6}\,/\,\textbf{1023.0} & \textbf{896.0}\,/\,\textbf{924.7} \\
\bottomrule
\end{tabular}}
\vspace{-2pt}
\end{table}

\subsection{Setup}
\label{sec:setup}

\paragraph{Complex network tasks.} We evaluate eight tasks spanning three problem families: five intervention tasks, \emph{influence maximization}, \emph{adaptive influence maximization}, \emph{critical node detection}, \emph{influence blocking}, and \emph{epidemic control}; two inverse tasks, \emph{source localization} and \emph{cascade reconstruction}; and one forecasting task, \emph{cascade prediction}. Appendix~\ref{app:tasks} provides detailed task definitions.

\paragraph{Evaluation protocol.} Each task is evaluated on networks ranging from $10^3$ to more than $10^5$ nodes (Appendix~\ref{app:datasets}) under the appropriate diffusion models: IC, LT, CLT, SIR, and SIS. We compare against five task-specific baselines, including classical algorithms and neural methods, listed in Appendix~\ref{app:baselines}. The design loop runs for $G = 10$ rounds with $n = 200$ sampled rollouts per evaluation, horizon $H = 10$, and six probes per round. GPT-5.6 Sol~\citep{openai2026gpt56sol} is used as the coding agent for the main experiments, with the coding model comparison reported in Figure~\ref{fig:coding_model}. The final performance of both the designed algorithms and the baselines is evaluated independently using the trusted Monte Carlo simulator with $200$ samples (Appendix~\ref{app:evaluators}). Seeding, containment, and epidemic tasks use budgets of $1\%$, $5\%$, $10\%$, and $20\%$ of $N$, while influence blocking uses $k \in \{10, 20, 30, 40, 50\}$; recovery and forecasting tasks have no intervention budget. Performance is reported using the task-specific metric for each designed algorithm, while the Network World Model is evaluated using one-step $\Delta$F1, Brier score, and rollout count bias. Appendix~\ref{app:tasks} and Appendix~\ref{app:wm-study} provide the remaining implementation and experimental details.

\subsection{Performance of Designed Algorithms}
\label{sec:exp-design}

\begin{wrapfigure}{r}{0.37\textwidth}
\vspace{-4pt}
\centering
\includegraphics[width=0.35\textwidth]{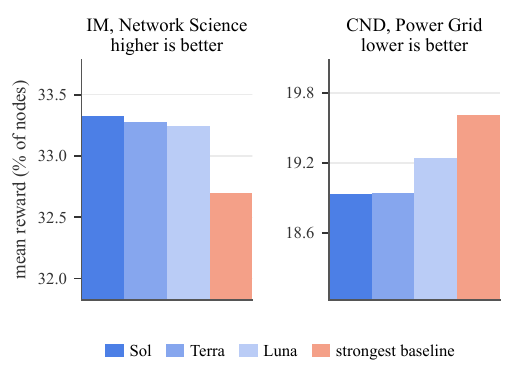}
\vspace{-8pt}
\caption{Task score of the returned algorithm under three coding models, averaged over budgets.}
\label{fig:coding_model}
\vspace{-14pt}
\end{wrapfigure}

The designed algorithms are evaluated with the trusted Monte Carlo 
simulator, independently of the Network World Model used during search, to assess their performance under the true environment dynamics. Across the eight tasks summarized in Figure~\ref{fig:headline}, the designed algorithms improve on the strongest existing method in all eight tasks and trail it in only 3 of the 141 settings. Table~\ref{tab:results_main} reports representative results, while complete results across all eight tasks and evaluated datasets are provided in Appendix~\ref{app:full-results}. Across the 64 configurations in Table~\ref{tab:results_main}, the designed algorithms beat or tie the strongest listed baseline in 63, except in one influence-blocking setting. The gains are observed under both IC and LT dynamics and hold across intervention budgets (Figure~\ref{fig:budget_curves}, Appendix~\ref{app:full-results}). We additionally compare the framework with four published LLM algorithm discovery systems using the same coding model. The proposed framework performs best at every evaluated budget on both influence maximization and critical node detection; full results are reported in Table~\ref{tab:discovery} of Appendix~\ref{app:discovery}.

\textbf{Choice of coding model.} We show that the gains do not depend on which LLM writes the code. Figure~\ref{fig:coding_model} repeats the search on Network Science and Power Grid under three models, GPT-5.6 Sol, Terra, and Luna. Averaged over the four budgets, the three finish within $0.1$ points of one another on Network Science and within $0.3$ points on Power Grid, and each beats the strongest baseline. The similar performance across coding models suggests that the gains are not specific to the choice of coding model. We provide the full numbers across all three models in Appendix~\ref{app:coding-model}.

\subsection{Cross-Network Generalization and Scaling}
\label{sec:exp-generalizability}
We test whether an algorithm designed on one network transfers to other networks of the same task without modification. Figure~\ref{fig:transfer} replays the critical node detection algorithm designed on PGP on Power Grid and Gnutella31, each against the strongest baseline of the target network. The transferred algorithms beat the strongest baseline on every target by 1.8 to 5.1\% for influence maximization and 4.0 to 11.7\% for critical node detection. Figure~\ref{fig:scale} complements this transfer result by showing influence maximization performance across increasingly large networks, where the designed algorithm remains ahead of the strongest baseline from Network Science through Digg, with the improvement reaching 7 to 12\% on Digg compared with 1 to 4\% on Network Science. Across the completed transfers, performance remains within $0.1$ percentage points of algorithms designed directly on the target network at every budget (Table~\ref{tab:transfer}, Appendix~\ref{app:transfer}).
\begin{figure}[h]
\centering
\captionsetup[subfloat]{position=bottom,font=scriptsize,farskip=0pt,captionskip=2pt}
\subfloat[Critical node detection, designed on PGP]{\includegraphics[width=0.36\textwidth]{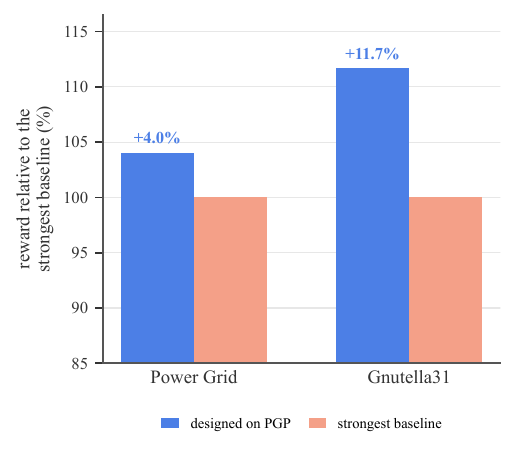}\label{fig:transfer}}\hspace{0.04\textwidth}
\subfloat[Influence maximization across network sizes]{\includegraphics[width=0.40\textwidth]{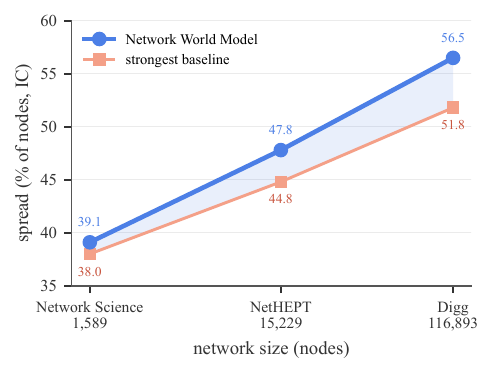}\label{fig:scale}}
\caption{Generalization across networks and scales. (a) Critical node detection transfer from PGP to Power Grid and Gnutella31, sign-corrected so higher is better. (b) Influence maximization performance across increasing network sizes under IC at the 10\% budget.}
\label{fig:generalization}
\end{figure}

\subsection{Network World Model Accuracy and Efficiency}
\label{sec:exp-wm}

We evaluate whether the Network World Model provides the three properties algorithm design requires: accurate rollouts, reliable candidate selection, and efficient evaluation. Table~\ref{tab:wm_main} reports prediction quality for five diffusion models. One-step $\Delta$F1 ranges from $0.87$ to $0.99$ for IC, LT, and CLT and from $0.85$ to $0.86$ for SIR and SIS, with Brier scores $\leq 0.0006$. The model thus accurately identifies which nodes change state with well-calibrated transition probabilities. Free-running rollouts remain within $2.2\%$ of the trusted simulator at the horizon except under LT, where the model undercounts by $6.0\%$. Rollout bias also remains stable over the horizon (Appendix~\ref{app:wm-study}).
\begin{table}[t]
\centering
\small
\begin{minipage}[t]{0.46\textwidth}
\centering
\caption{Prediction and rollout quality of the Network World Model across five diffusion models, on Network Science (IC, LT), Email-EU (CLT) and Oregon1 (SIR, SIS).}
\label{tab:wm_main}
\setlength{\tabcolsep}{4pt}
\begin{tabular}{@{}lccc@{}}
\toprule
Dynamics & $\Delta$F1 ($\uparrow$) & Brier ($\downarrow$) & Bias (\%) \\
\midrule
IC  & 0.8738 & 0.0003 & $+0.1$ \\
LT  & 0.9926 & 0.0003 & $-6.0$ \\
CLT & 0.9880 & 0.0001 & $-2.2$ \\
SIR & 0.8565 & 0.0006 & $+0.5$ \\
SIS & 0.8461 & 0.0006 & $-0.1$ \\
\bottomrule
\end{tabular}
\end{minipage}\hfill
\begin{minipage}[t]{0.51\textwidth}
\centering
\caption{Decision value on 20 held-out graphs per family. Regret is the percent difference in reward between the algorithms chosen by the world model and by Monte Carlo simulation.}
\label{tab:wm_ranking}
\setlength{\tabcolsep}{4pt}
\begin{tabular}{@{}lccc@{}}
\toprule
Graph family & \shortstack{Pref.\ accuracy\\{}[95\% CI]} & \shortstack{WM\\regret} & \shortstack{Random\\regret} \\
\midrule
SBM (in dist.) & 0.902 [0.890, 0.914] & 0.11 & 10.27 \\
BA (in dist.)  & 0.861 [0.818, 0.902] & 0.13 & 12.03 \\
WS (shift)     & 0.860 [0.832, 0.887] & 1.18 & 10.07 \\
\bottomrule
\end{tabular}
\end{minipage}
\vspace{-3pt}
\end{table}

For algorithm search, the core requirement is preserving the ordering of candidate algorithms. Table~\ref{tab:wm_ranking} compares the same candidate pools scored by the Network World Model and the trusted simulator using preference accuracy and selection regret. On held-out SBM and BA networks, the model agrees with the trusted simulator on $90.2\%$ and $86.1\%$ of candidate pair comparisons, with selection regret of 
$0.11\%$ and $0.13\%$. On Watts-Strogatz networks, used as a topology shift test, 
\begin{wrapfigure}{l}{0.40\textwidth}
\vspace{-12pt}
\centering
\includegraphics[width=0.40\textwidth]{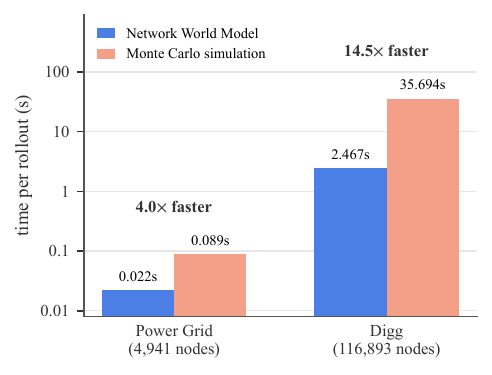}
\vspace{-8pt}
\caption{Wall-clock time of one rollout sample under the Network World Model and Monte Carlo simulation, log scale.}
\label{fig:cost}
\vspace{-24pt}
\end{wrapfigure}
preference accuracy remains $86.0\%$ and selection regret is
$1.18\%$. The resulting low regret shows that the model can reliably 
support candidate selection even when pairwise agreement is not perfect.

The practical utility of the Network World Model is the reduction in repeated evaluation cost. Figure~\ref{fig:cost} compares the time of one rollout against direct simulation: a world model rollout is $4.0$ times faster on Power Grid and $14.5$ times faster on Digg. The model incurs a one-time offline cost of $38{,}800$ simulator episodes to build the training set, whereas direct simulator-based search requires $7{,}800$ episodes for each new search, giving a break even point of about five searches. Reusing the learned dynamics therefore reduces both online evaluation time and cumulative trusted simulator use across repeated searches.

%% file: sections/5_conclusion.tex
\vspace{-2pt}
\section{Conclusion}
\vspace{-2pt}
We introduce a Network World Model as an environment for algorithm design on complex networks, together with an LLM-based search framework that uses model rollouts and targeted feedback to refine executable algorithms. Across eight tasks and five diffusion models, the designed algorithms match or exceed the strongest reported baseline in 138 of 141 settings, transfer across networks with little performance loss, and can be evaluated faster than with Monte Carlo simulation. Overall, the results show that learned network dynamics can support reliable algorithm selection and extend beyond prediction to the iterative design and evaluation of algorithms for complex networks across a range of tasks.

%% file: sections/ai_use_statement.tex
We used generative AI tools to edit the writing, including grammar checking and polishing the text for readability. In addition, large language models are part of the proposed method: the coding agent in the algorithm design loop is an LLM (GPT-5.6 Sol, Terra, or Luna) that writes and revises candidate algorithms. In the paper, we have reviewed all AI-assisted work. We read every suggested edit and accepted or rejected it by hand, and no edit changed the paper's technical content, claims, or results. We take responsibility for the final content of this work, including text, claims, or artifacts produced with the aid of generative AI.

%% file: sections/ethics_statement.tex
Our work does not raise ethical concerns beyond those in the established network diffusion literature it builds on. It involves no human subjects and no new data collection: all networks and cascade corpora are public, all datasets and baseline methods are used under their original terms, and we use their structure and adoption times, never personal attributes or content. All interventions are evaluated in simulation, and none is deployed on a real platform or population.

%% file: sections/reproducibility_statement.tex
All datasets we evaluate are public, and we describe their sources in the appendix. The appendix specifies the tasks, baselines, evaluators, and experimental protocol; the world model setup; and the design loop, including its operators, probes, and implementation details. It also includes the complete prompts given to the coding agent. The coding agent itself uses closed-source LLMs accessed through an API, which we chose because they are the current state of the art. We provide examples of some of the best-performing designed algorithms with their source code in the appendix.

%% file: sections/appendix.tex
\clearpage
\appendix

\section{Theoretical Analysis}
\label{app:theory-proofs}

\subsection{Proof of Theorem~\ref{thm:selection-regret}: Algorithm-Selection Regret}
\label{app:regret-proof}

Let $P(\cdot\mid\bm{s}, a)$ denote the trusted transition kernel in Eq.~(\ref{eq:kernel-a}), and let $\widehat P_{\bm{\theta}}(\cdot\mid\bm{s}, a)$ denote the transition kernel induced by the Network World Model in Eq.~(\ref{eq:wm-factorized}). For an executable algorithm $\pi$, let $J_P(\pi)$ and $J_{\widehat P_{\bm{\theta}}}(\pi)$ denote its expected reward under the trusted and learned dynamics, respectively. Let $\Pi$ be the executable algorithm class considered by the search, and let $\pi^\star\in\operatorname*{arg\,max}_{\pi\in\Pi}J_P(\pi)$ be the best algorithm in this class under the trusted dynamics. The algorithm-selection regret $\operatorname{Reg}_P(\widehat{\pi};\Pi)$ of the returned algorithm is defined in Eq.~(\ref{eq:selection-regret-definition}). The Network World Model is trained on one-step transitions through Eq.~(\ref{eq:learn-a}) to approximate the conditional transition law of the trusted dynamics. Its approximation quality is therefore characterized by the one-step transition error over the state-action support reachable during algorithm search.

\begin{assumption}[Uniform transition accuracy on the search support]
\label{ass:wm-transition}
Let $\mathcal{Z}_{\Pi,H}$ be the set of state-action pairs reachable within horizon $H$ by algorithms in $\Pi$. There exists $\epsilon\in[0,1]$ such that
\begin{equation}
\sup_{(\bm{s},a)\in\mathcal{Z}_{\Pi,H}}D_{\mathrm{TV}}\left(P(\cdot\mid\bm{s},a),\widehat P_{\bm{\theta}}(\cdot\mid\bm{s},a)\right)\leq\epsilon.
\label{eq:wm-transition-error}
\end{equation}
\end{assumption}

\begin{assumption}[Bounded task reward]
\label{ass:bounded-reward}
The terminal reward has bounded range over all feasible trajectories: $\sup_{\tau}R(\tau)-\inf_{\tau}R(\tau)\leq B_R<\infty$.
\end{assumption}

\begin{assumption}[Approximate world-model search optimality]
\label{ass:search-optimality}
Let $\widehat{\pi}$ be the algorithm returned by the coding-agent search using the Network World Model evaluator, and let $\eta_{\mathrm{search}}\geq0$ be its suboptimality under the learned evaluator as defined in Section~\ref{sec:theory-regret}, so that
\begin{equation}
J_{\widehat P_{\bm{\theta}}}(\widehat{\pi})\geq\sup_{\pi\in\Pi}J_{\widehat P_{\bm{\theta}}}(\pi)-\eta_{\mathrm{search}}.
\label{eq:search-error}
\end{equation}
\end{assumption}

Theorem~\ref{thm:selection-regret} is the statement of Eq.~(\ref{eq:selection-regret-bound}) under Assumptions~\ref{ass:wm-transition} to~\ref{ass:search-optimality}.

\begin{proof}
Fix any $\pi\in\Pi$. By Assumption~\ref{ass:wm-transition} and maximal coupling, the trusted and learned trajectories remain coupled at each transition with probability at least $1-\epsilon$. Therefore,
\begin{align}
D_{\mathrm{TV}}\left(\mathbb{P}_{P}^{\pi},\mathbb{P}_{\widehat P_{\bm{\theta}}}^{\pi}\right)
&\leq1-(1-\epsilon)^H \nonumber\\
&\leq H\epsilon.
\label{eq:trajectory-tv-bound}
\end{align}

By Assumption~\ref{ass:bounded-reward} and Eq.~(\ref{eq:trajectory-tv-bound}),
\begin{align}
\left|J_P(\pi)-J_{\widehat P_{\bm{\theta}}}(\pi)\right|
&\leq B_RD_{\mathrm{TV}}\left(\mathbb{P}_{P}^{\pi},\mathbb{P}_{\widehat P_{\bm{\theta}}}^{\pi}\right) \nonumber\\
&\leq B_R\left(1-(1-\epsilon)^H\right) \nonumber\\
&\leq B_RH\epsilon.
\label{eq:uniform-value-bound}
\end{align}

Applying Eq.~(\ref{eq:uniform-value-bound}) to $\pi^\star$ and $\widehat{\pi}$ and using Assumption~\ref{ass:search-optimality},
\begin{align}
J_P(\pi^\star)
&\leq J_{\widehat P_{\bm{\theta}}}(\pi^\star)+B_R\left(1-(1-\epsilon)^H\right) \nonumber\\
&\leq \sup_{\pi\in\Pi}J_{\widehat P_{\bm{\theta}}}(\pi)+B_R\left(1-(1-\epsilon)^H\right) \nonumber\\
&\leq J_{\widehat P_{\bm{\theta}}}(\widehat{\pi})+\eta_{\mathrm{search}}+B_R\left(1-(1-\epsilon)^H\right) \nonumber\\
&\leq J_P(\widehat{\pi})+2B_R\left(1-(1-\epsilon)^H\right)+\eta_{\mathrm{search}}.
\label{eq:regret-proof-chain}
\end{align}

Combining Eq.~(\ref{eq:regret-proof-chain}) with the regret definition in Eq.~(\ref{eq:selection-regret-definition}),
\begin{align}
\operatorname{Reg}_P(\widehat{\pi};\Pi)
&\leq2B_R\left(1-(1-\epsilon)^H\right)+\eta_{\mathrm{search}} \nonumber\\
&\leq2B_RH\epsilon+\eta_{\mathrm{search}},
\label{eq:regret-final-chain}
\end{align}
which proves Eq.~(\ref{eq:selection-regret-bound}).
\end{proof}

\subsection{Pairwise Ranking Preservation}
\label{app:ranking-proof}

Write $\Delta_H(\epsilon)=B_R\left(1-(1-\epsilon)^H\right)$ for the worst-case distortion the Network World Model induces in the value of an algorithm over horizon $H$, which is the bound established in Eq.~(\ref{eq:uniform-value-bound}).

\begin{corollary}[Pairwise ranking preservation]
\label{cor:ranking-preservation}
Under Assumptions~\ref{ass:wm-transition} and~\ref{ass:bounded-reward}, let $\pi_i,\pi_j\in\Pi$. If
\begin{equation}
J_P(\pi_i)-J_P(\pi_j)>2\Delta_H(\epsilon),
\label{eq:ranking-margin}
\end{equation}
then $J_{\widehat P_{\bm{\theta}}}(\pi_i)>J_{\widehat P_{\bm{\theta}}}(\pi_j)$.
\end{corollary}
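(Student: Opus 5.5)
The plan is to reduce the corollary to the uniform value-distortion bound in Eq.~(\ref{eq:uniform-value-bound}) and then argue by a triangle-type inequality. That bound was derived in the proof of Theorem~\ref{thm:selection-regret} using only Assumptions~\ref{ass:wm-transition} and~\ref{ass:bounded-reward}; the search-optimality assumption is not involved. It gives, for every $\pi\in\Pi$,
\begin{equation*}
\bigl|J_P(\pi)-J_{\widehat P_{\bm{\theta}}}(\pi)\bigr|\leq\Delta_H(\epsilon).
\end{equation*}
The first step is to note that this holds for both $\pi_i$ and $\pi_j$. The reason is that they lie in $\Pi$, so their trajectories stay inside the support $\mathcal{Z}_{\Pi,H}$ on which the coupling argument was carried out.

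Next we lower-bound the learned value of $\pi_i$ and upper-bound that of $\pi_j$:
\begin{equation*}
J_{\widehat P_{\bm{\theta}}}(\pi_i)\geq J_P(\pi_i)-\Delta_H(\epsilon),\qquad
J_{\widehat P_{\bm{\theta}}}(\pi_j)\leq J_P(\pi_j)+\Delta_H(\epsilon).
\end{equation*}
Subtracting these gives
\begin{equation*}
J_{\widehat P_{\bm{\theta}}}(\pi_i)-J_{\widehat P_{\bm{\theta}}}(\pi_j)\geq J_P(\pi_i)-J_P(\pi_j)-2\Delta_H(\epsilon).
\end{equation*}
By the margin condition Eq.~(\ref{eq:ranking-margin}), the right-hand side is strictly positive, which proves the claim. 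Strictness is preserved because the hypothesis is a strict inequality and the two distortion bounds are non-strict.

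There is no real obstacle here. The only point needing care is confirming that Eq.~(\ref{eq:uniform-value-bound}) applies to each algorithm individually. That bound was stated for a fixed arbitrary $\pi\in\Pi$, so it does. A secondary check is that the sharper form $\Delta_H(\epsilon)=B_R(1-(1-\epsilon)^H)$ is the one established, rather than only $B_RH\epsilon$; Eq.~(\ref{eq:uniform-value-bound}) contains that intermediate line explicitly.
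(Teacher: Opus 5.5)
Your proposal is correct and matches the paper's proof: apply the uniform distortion bound Eq.~(\ref{eq:uniform-value-bound}) to $\pi_i$ and $\pi_j$ separately, subtract, and use the strict margin condition. Your side remarks are also accurate: only Assumptions~\ref{ass:wm-transition} and~\ref{ass:bounded-reward} are needed, and the sharper form $\Delta_H(\epsilon)=B_R\bigl(1-(1-\epsilon)^H\bigr)$ is the one actually established.
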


Algorithms separated by more than the world-model distortion therefore preserve their true ordering under the learned evaluator. As training reduces $\epsilon$ and hence $\Delta_H(\epsilon)$, the guarantee extends to increasingly close algorithm pairs.

\begin{proof}
Applying Eq.~(\ref{eq:uniform-value-bound}) to $\pi_i$ and $\pi_j$ gives
\begin{align}
J_{\widehat P_{\bm{\theta}}}(\pi_i)-J_{\widehat P_{\bm{\theta}}}(\pi_j)
&\geq J_P(\pi_i)-\Delta_H(\epsilon)-J_P(\pi_j)-\Delta_H(\epsilon) \nonumber\\
&=J_P(\pi_i)-J_P(\pi_j)-2\Delta_H(\epsilon) \nonumber\\
&>0,
\label{eq:app-ranking-proof}
\end{align}
where the last inequality follows from the margin condition in Eq.~(\ref{eq:ranking-margin}). Hence $J_{\widehat P_{\bm{\theta}}}(\pi_i)>J_{\widehat P_{\bm{\theta}}}(\pi_j)$, proving Corollary~\ref{cor:ranking-preservation}.
\end{proof}

\subsection{Exact Recovery Under a Selection Margin}
\label{app:recovery-proof}

\begin{corollary}[Exact recovery under a selection margin]
\label{cor:exact-recovery}
Suppose $\Pi$ is finite and $\pi^\star$ is its unique best algorithm under the trusted dynamics. Define its selection margin as
\begin{equation}
\Gamma=J_P(\pi^\star)-\max_{\pi\in\Pi\setminus\{\pi^\star\}}J_P(\pi).
\label{eq:selection-margin}
\end{equation}
If $\Gamma>2\Delta_H(\epsilon)$ and $\eta_{\mathrm{search}}=0$, then
\begin{equation}
\operatorname*{arg\,max}_{\pi\in\Pi}J_{\widehat P_{\bm{\theta}}}(\pi)=\pi^\star.
\label{eq:exact-recovery}
\end{equation}
\end{corollary}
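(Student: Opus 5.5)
The plan is to reduce the claim to Corollary~\ref{cor:ranking-preservation}, applied with $\pi_i=\pi^\star$ against every competitor. Fix any $\pi\in\Pi\setminus\{\pi^\star\}$. By the definition of the selection margin in Eq.~(\ref{eq:selection-margin}),
\begin{equation}
J_P(\pi^\star)-J_P(\pi)\geq\Gamma>2\Delta_H(\epsilon).
\end{equation}
So the margin condition Eq.~(\ref{eq:ranking-margin}) holds for the pair $(\pi^\star,\pi)$. Corollary~\ref{cor:ranking-preservation} uses only Assumptions~\ref{ass:wm-transition} and~\ref{ass:bounded-reward}, which are in force here, and it gives $J_{\widehat P_{\bm{\theta}}}(\pi^\star)>J_{\widehat P_{\bm{\theta}}}(\pi)$.

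Since $\Pi$ is finite, the learned objective attains its maximum over $\Pi$. The previous step shows that $\pi^\star$ strictly beats every other element of $\Pi$ under $J_{\widehat P_{\bm{\theta}}}$. Hence the argmax is the singleton $\{\pi^\star\}$, which is Eq.~(\ref{eq:exact-recovery}).

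Next we connect this to the returned algorithm using $\eta_{\mathrm{search}}=0$. Assumption~\ref{ass:search-optimality} then gives $J_{\widehat P_{\bm{\theta}}}(\widehat{\pi})\geq\max_{\pi\in\Pi}J_{\widehat P_{\bm{\theta}}}(\pi)$, so $\widehat{\pi}$ is a maximizer. Because the maximizer is unique, $\widehat{\pi}=\pi^\star$ and the regret is zero. Strictly, the displayed statement concerns only the argmax and does not need $\eta_{\mathrm{search}}$; I would include this remark to explain the role of that hypothesis.

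There is no real obstacle. The only care needed is in two places. First, the corollary must be invoked with the same $\Delta_H(\epsilon)=B_R(1-(1-\epsilon)^H)$, which comes from the uniform value bound Eq.~(\ref{eq:uniform-value-bound}) applied to each $\pi\in\Pi$; that is legitimate because Assumption~\ref{ass:wm-transition} covers the support of every algorithm in $\Pi$. Second, finiteness is what turns pairwise strict dominance into a unique maximizer being attained. If $\Pi$ were a singleton, the claim holds trivially, with $\Gamma$ read as $+\infty$.
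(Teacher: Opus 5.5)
Your proof is correct and follows the paper's argument: the margin definition gives $J_P(\pi^\star)-J_P(\pi)\geq\Gamma>2\Delta_H(\epsilon)$ for every competitor, Corollary~\ref{cor:ranking-preservation} turns this into strict dominance of $\pi^\star$ under $J_{\widehat P_{\bm{\theta}}}$, and so $\pi^\star$ is the unique maximizer. Your extra remarks are accurate additions the paper leaves implicit: $\eta_{\mathrm{search}}=0$ is not needed for the displayed argmax claim but forces $\widehat{\pi}=\pi^\star$, and the singleton case is trivial.
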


This strengthens ranking preservation to exact algorithm recovery: when the best algorithm is separated from all alternatives by more than the world-model distortion and the learned objective is optimized exactly, the Network World Model selects the same optimum as the trusted dynamics.

\begin{proof}
For every $\pi\in\Pi\setminus\{\pi^\star\}$, the definition of the selection margin in Eq.~(\ref{eq:selection-margin}) gives
\begin{align}
J_P(\pi^\star)-J_P(\pi)
&\geq\Gamma \nonumber\\
&>2\Delta_H(\epsilon).
\label{eq:app-recovery-margin}
\end{align}
By Corollary~\ref{cor:ranking-preservation}, Eq.~(\ref{eq:app-recovery-margin}) implies
\begin{equation}
J_{\widehat P_{\bm{\theta}}}(\pi^\star)>J_{\widehat P_{\bm{\theta}}}(\pi)
\qquad
\text{for every }\pi\in\Pi\setminus\{\pi^\star\}.
\label{eq:app-recovery-ranking}
\end{equation}
Therefore, $\pi^\star$ is the unique maximizer of the learned objective, and Eq.~(\ref{eq:exact-recovery}) follows.
\end{proof}

\subsection{Exact Intervention Factorization}
\label{app:exact-intervention-proof}

Theorem~\ref{thm:selection-regret} shows that algorithm-selection reliability depends on transition accuracy. Transition error is therefore separated into endogenous dynamics error and intervention-semantics error. Let $\bm{x}=(\mathcal{G},\bm{s})$ and let $K$ denote the true endogenous dynamics after intervention. The exact intervention factorization is compared with a generic surrogate that approximates the intervention:
\begin{equation}
\widehat P_{\bm{\theta}}^a(\cdot\mid\bm{x})=\widehat K_{\bm{\theta}}\left(\cdot\mid T_{\mathrm{exo}}^a\bm{x},\bm{\eta}(a)\right),\qquad \widetilde P_{\bm{\theta}}^a(\cdot\mid\bm{x})=\widehat K_{\bm{\theta}}\left(\cdot\mid\widetilde T^a\bm{x},\bm{\eta}(a)\right).
\label{eq:exact-generic-factorization}
\end{equation}
Here, $T_{\mathrm{exo}}^a$ is the exact intervention operator in Eq.~(\ref{eq:wm-factorized}), while $\widetilde T^a$ denotes an approximate intervention transformation.

\begin{theorem}[Exact intervention removes action-semantics error]
\label{thm:exact-intervention}
Suppose the learned endogenous dynamics have one-step approximation error at most $\epsilon_{\mathrm{dyn}}$. Then the exact intervention factorization in Eq.~(\ref{eq:exact-generic-factorization}) has rollout error
\begin{equation}
D_{\mathrm{TV}}\left(\mathbb{P}_{P}^{\pi},\mathbb{P}_{\widehat P_{\bm{\theta}}}^{\pi}\right)
\leq1-(1-\epsilon_{\mathrm{dyn}})^H.
\label{eq:exact-factorized-rollout}
\end{equation}
Further assume that the generic surrogate incurs intervention error $\epsilon_{\mathrm{exo}}$ and that the endogenous dynamics are $L_T$-Lipschitz with respect to this error. Then
\begin{equation}
D_{\mathrm{TV}}\left(\mathbb{P}_{P}^{\pi},\mathbb{P}_{\widetilde P_{\bm{\theta}}}^{\pi}\right)
\leq1-\left(1-\epsilon_{\mathrm{dyn}}-L_T\epsilon_{\mathrm{exo}}\right)^H,
\label{eq:generic-factorized-rollout}
\end{equation}
whenever $\epsilon_{\mathrm{dyn}}+L_T\epsilon_{\mathrm{exo}}\leq1$.
\end{theorem}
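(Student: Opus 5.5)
The plan is to reduce both claims to a one-step total variation bound on the support $\mathcal{Z}_{\Pi,H}$. Each claim then follows from the same maximal-coupling argument used for Eq.~(\ref{eq:trajectory-tv-bound}) in the proof of Theorem~\ref{thm:selection-regret}.

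\textbf{Exact factorization.} First I model the true kernel in the same factorized form, $P^a(\cdot\mid\bm{x})=K(\cdot\mid T_{\mathrm{exo}}^a\bm{x},\bm{\eta}(a))$. This is justified because the intervention's direct effect is known exactly and only the subsequent diffusion is stochastic. I read the hypothesis ``one-step approximation error at most $\epsilon_{\mathrm{dyn}}$'' as
\[
\sup_{\bm{y}}D_{\mathrm{TV}}\bigl(K(\cdot\mid\bm{y},\bm{\eta}(a)),\widehat K_{\bm{\theta}}(\cdot\mid\bm{y},\bm{\eta}(a))\bigr)\leq\epsilon_{\mathrm{dyn}},
\]
with the supremum over post-intervention inputs $\bm{y}$ reachable on the search support. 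Because the learned model feeds $\widehat K_{\bm{\theta}}$ the identical input $\bm{y}=T_{\mathrm{exo}}^a\bm{x}$, we get $D_{\mathrm{TV}}(P^a(\cdot\mid\bm{x}),\widehat P^a_{\bm{\theta}}(\cdot\mid\bm{x}))\leq\epsilon_{\mathrm{dyn}}$ directly, with no intervention term. I then couple the two trajectories step by step. At each step, conditioned on the states agreeing so far (and hence, since $\pi$ is the same algorithm, on the actions agreeing), a maximal coupling keeps them equal with probability at least $1-\epsilon_{\mathrm{dyn}}$. The probability of staying coupled for all $H$ steps is therefore at least $(1-\epsilon_{\mathrm{dyn}})^H$. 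By the coupling inequality, the trajectory TV is at most $1-(1-\epsilon_{\mathrm{dyn}})^H$, which is Eq.~(\ref{eq:exact-factorized-rollout}).

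\textbf{Generic surrogate.} Here I insert the intermediate kernel $\widehat K_{\bm{\theta}}(\cdot\mid T^a_{\mathrm{exo}}\bm{x},\bm{\eta}(a))$ and apply the triangle inequality:
\[
D_{\mathrm{TV}}(P^a,\widetilde P^a_{\bm{\theta}})\leq D_{\mathrm{TV}}\bigl(K(\cdot\mid T^a_{\mathrm{exo}}\bm{x}),\widehat K_{\bm{\theta}}(\cdot\mid T^a_{\mathrm{exo}}\bm{x})\bigr)+D_{\mathrm{TV}}\bigl(\widehat K_{\bm{\theta}}(\cdot\mid T^a_{\mathrm{exo}}\bm{x}),\widehat K_{\bm{\theta}}(\cdot\mid\widetilde T^a\bm{x})\bigr).
\]
The first term is at most $\epsilon_{\mathrm{dyn}}$. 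For the second, I formalize the Lipschitz hypothesis as TV-Lipschitzness of the dynamics in their post-intervention input, $D_{\mathrm{TV}}(\widehat K_{\bm{\theta}}(\cdot\mid\bm{y}),\widehat K_{\bm{\theta}}(\cdot\mid\bm{y}'))\leq L_T\,d(\bm{y},\bm{y}')$. Combined with the intervention error $d(T^a_{\mathrm{exo}}\bm{x},\widetilde T^a\bm{x})\leq\epsilon_{\mathrm{exo}}$, this bounds the second term by $L_T\epsilon_{\mathrm{exo}}$. The per-step TV is then at most $\epsilon':=\epsilon_{\mathrm{dyn}}+L_T\epsilon_{\mathrm{exo}}$. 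The condition $\epsilon'\leq1$ makes $1-\epsilon'$ a valid lower bound on a coupling probability, and rerunning the coupling argument with $\epsilon'$ gives Eq.~(\ref{eq:generic-factorized-rollout}).

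\textbf{Main obstacle.} The substance is in the modeling choices rather than in any calculation. The difficult point is stating precisely where the Lipschitz condition applies, since it could be placed on the learned kernel $\widehat K_{\bm{\theta}}$ or on the true kernel $K$. If it is placed on $K$, I would instead route the triangle inequality through $K(\cdot\mid\widetilde T^a\bm{x})$. That route needs $\epsilon_{\mathrm{dyn}}$ to control $\widehat K_{\bm{\theta}}$ at the surrogate input $\widetilde T^a\bm{x}$, which may lie off the reachable support. I will therefore state the uniformity assumption on a set containing both $T^a_{\mathrm{exo}}\bm{x}$ and $\widetilde T^a\bm{x}$, in the same way as Assumption~\ref{ass:wm-transition}.
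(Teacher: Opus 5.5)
Your proof is correct. The exact-factorization half and the step-by-step maximal-coupling argument coincide with the paper's. The one difference is in the generic-surrogate one-step bound.

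Your primary triangle inequality pivots through $\widehat K_{\bm{\theta}}(\cdot\mid T^a_{\mathrm{exo}}\bm{x})$ and places the Lipschitz condition on the learned kernel. The paper pivots through $K(\cdot\mid\widetilde T^a\bm{x},\bm{\eta}(a))$. It assumes the \emph{true} endogenous kernel $K$ is $L_T$-Lipschitz under $d_{\mathcal X}$, and bounds the remaining term $D_{\mathrm{TV}}(K(\cdot\mid\widetilde T^a\bm{x}),\widehat K_{\bm{\theta}}(\cdot\mid\widetilde T^a\bm{x}))$ by $\epsilon_{\mathrm{dyn}}$. That is exactly the route you list as your fallback.

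The statement attributes the Lipschitz property to ``the endogenous dynamics'', meaning the true $K$, as opposed to ``the learned endogenous dynamics''. So your fallback is what proves the theorem as written; your primary route proves a variant under a different hypothesis.

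The trade-off you flagged is real.
\begin{itemize}
\item The paper's route needs $\widehat K_{\bm{\theta}}$ to be $\epsilon_{\mathrm{dyn}}$-accurate at the surrogate's input $\widetilde T^a\bm{x}$, which may lie off the reachable support. The paper uses this without comment, by treating $\epsilon_{\mathrm{dyn}}$ as a uniform bound over all post-intervention inputs.
\item Your route keeps the accuracy requirement on the reachable support, consistent with Assumption~\ref{ass:wm-transition}. In exchange it demands regularity of the trained network, which nothing in the objective of Eq.~(\ref{eq:learn-a}) guarantees.
\end{itemize}
The paper instead asks that regularity of the physical process, which is the more natural place for it.
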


Because $T_{\mathrm{exo}}$ is applied exactly, the learned transition error contains only the endogenous dynamics term $\epsilon_{\mathrm{dyn}}$. A surrogate that must additionally infer how an action modifies the network can accumulate the extra term $L_T\epsilon_{\mathrm{exo}}$ before multi-step rollout begins.

\begin{proof}
Let $K$ denote the true endogenous transition kernel after intervention. Further assume that the learned endogenous dynamics have one-step error at most $\epsilon_{\mathrm{dyn}}$, that $K$ is $L_T$-Lipschitz over post-intervention network states under a metric $d_{\mathcal X}$, and that $d_{\mathcal X}(T_{\mathrm{exo}}^a\bm{x},\widetilde T^a\bm{x})\leq\epsilon_{\mathrm{exo}}$.

For the generic surrogate in Eq.~(\ref{eq:exact-generic-factorization}), the one-step transition error satisfies
\begin{align}
D_{\mathrm{TV}}\left(P^a(\cdot\mid\bm{x}),\widetilde P_{\bm{\theta}}^a(\cdot\mid\bm{x})\right)
&\leq D_{\mathrm{TV}}\left(K(\cdot\mid T_{\mathrm{exo}}^a\bm{x},\bm{\eta}(a)),K(\cdot\mid\widetilde T^a\bm{x},\bm{\eta}(a))\right) \nonumber\\
&\quad+D_{\mathrm{TV}}\left(K(\cdot\mid\widetilde T^a\bm{x},\bm{\eta}(a)),\widehat K_{\bm{\theta}}(\cdot\mid\widetilde T^a\bm{x},\bm{\eta}(a))\right) \nonumber\\
&\leq L_Td_{\mathcal X}\left(T_{\mathrm{exo}}^a\bm{x},\widetilde T^a\bm{x}\right)+\epsilon_{\mathrm{dyn}} \nonumber\\
&\leq\epsilon_{\mathrm{dyn}}+L_T\epsilon_{\mathrm{exo}}.
\label{eq:generic-factorized-one-step}
\end{align}
For the exact intervention factorization in Eq.~(\ref{eq:exact-generic-factorization}),
\begin{align}
D_{\mathrm{TV}}\left(P^a(\cdot\mid\bm{x}),\widehat P_{\bm{\theta}}^a(\cdot\mid\bm{x})\right)
&=D_{\mathrm{TV}}\left(K(\cdot\mid T_{\mathrm{exo}}^a\bm{x},\bm{\eta}(a)),\widehat K_{\bm{\theta}}(\cdot\mid T_{\mathrm{exo}}^a\bm{x},\bm{\eta}(a))\right) \nonumber\\
&\leq\epsilon_{\mathrm{dyn}}.
\label{eq:exact-factorized-one-step}
\end{align}

Fix any executable algorithm $\pi\in\Pi$. Conditional on two coupled trajectories agreeing through time $t$, they have the same history and therefore $\pi$ selects the same action. Applying maximal coupling at each transition and Eq.~(\ref{eq:exact-factorized-one-step}),
\begin{align}
D_{\mathrm{TV}}\left(\mathbb{P}_{P}^{\pi},\mathbb{P}_{\widehat P_{\bm{\theta}}}^{\pi}\right)
&\leq1-\prod_{t=0}^{H-1}(1-\epsilon_{\mathrm{dyn}}) \nonumber\\
&=1-(1-\epsilon_{\mathrm{dyn}})^H,
\end{align}
which proves Eq.~(\ref{eq:exact-factorized-rollout}).

Similarly, applying Eq.~(\ref{eq:generic-factorized-one-step}) and $\epsilon_{\mathrm{dyn}}+L_T\epsilon_{\mathrm{exo}}\leq1$,
\begin{align}
D_{\mathrm{TV}}\left(\mathbb{P}_{P}^{\pi},\mathbb{P}_{\widetilde P_{\bm{\theta}}}^{\pi}\right)
&\leq1-\prod_{t=0}^{H-1}\left(1-\epsilon_{\mathrm{dyn}}-L_T\epsilon_{\mathrm{exo}}\right) \nonumber\\
&=1-\left(1-\epsilon_{\mathrm{dyn}}-L_T\epsilon_{\mathrm{exo}}\right)^H,
\end{align}
which proves Eq.~(\ref{eq:generic-factorized-rollout}).
\end{proof}

\subsection{Tighter Selection Regret Under Exact Intervention}
\label{app:exact-intervention-regret-proof}

Combining Theorem~\ref{thm:exact-intervention} with Theorem~\ref{thm:selection-regret} reduces the transition-error term entering the selection-regret bound from $\epsilon_{\mathrm{dyn}}+L_T\epsilon_{\mathrm{exo}}$ to $\epsilon_{\mathrm{dyn}}$.

\begin{corollary}[Tighter selection regret under exact intervention]
\label{cor:exact-intervention-regret}
Under the conditions of Theorems~\ref{thm:selection-regret} and~\ref{thm:exact-intervention}, the exact intervention factorization satisfies
\begin{equation}
\operatorname{Reg}_P(\widehat{\pi};\Pi)\leq2B_R\left(1-(1-\epsilon_{\mathrm{dyn}})^H\right)+\eta_{\mathrm{search}}.
\label{eq:exact-factorized-regret}
\end{equation}
For the generic surrogate in Eq.~(\ref{eq:exact-generic-factorization}), the corresponding guarantee is
\begin{equation}
\operatorname{Reg}_P(\widetilde{\pi};\Pi)\leq2B_R\left[1-\left(1-\epsilon_{\mathrm{dyn}}-L_T\epsilon_{\mathrm{exo}}\right)^H\right]+\eta_{\mathrm{search}}.
\label{eq:generic-factorized-regret}
\end{equation}
\end{corollary}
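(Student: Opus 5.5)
The plan is to rerun the regret argument of Theorem~\ref{thm:selection-regret} with the sharper horizon-dependent distortion supplied by Theorem~\ref{thm:exact-intervention}, rather than the linearized $H\epsilon$. The proof of Theorem~\ref{thm:selection-regret} uses Assumption~\ref{ass:wm-transition} in exactly one place: to bound the trajectory-level total variation $D_{\mathrm{TV}}(\mathbb{P}_P^{\pi},\mathbb{P}_{\widehat P_{\bm{\theta}}}^{\pi})$ by $1-(1-\epsilon)^H$ in Eq.~(\ref{eq:trajectory-tv-bound}). Everything after that depends only on this trajectory bound, together with Assumptions~\ref{ass:bounded-reward} and~\ref{ass:search-optimality}. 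So for each model class I will substitute the trajectory bound from Theorem~\ref{thm:exact-intervention} into the chain (\ref{eq:uniform-value-bound})--(\ref{eq:regret-final-chain}).

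\textbf{Exact factorization.} For every $\pi\in\Pi$, Eq.~(\ref{eq:exact-factorized-rollout}) gives the trajectory bound $1-(1-\epsilon_{\mathrm{dyn}})^H$. Assumption~\ref{ass:bounded-reward} then turns this into a value bound via the standard inequality $|\mathbb{E}_\mu R-\mathbb{E}_\nu R|\le(\sup R-\inf R)\,D_{\mathrm{TV}}(\mu,\nu)$:
\[
\bigl|J_P(\pi)-J_{\widehat P_{\bm{\theta}}}(\pi)\bigr|\le B_R\bigl(1-(1-\epsilon_{\mathrm{dyn}})^H\bigr).
\]
I then apply this bound twice, once to $\pi^\star$ and once to $\widehat\pi$. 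In between I pass through $\sup_{\pi\in\Pi}J_{\widehat P_{\bm{\theta}}}(\pi)$ and use Eq.~(\ref{eq:search-error}), exactly as in Eq.~(\ref{eq:regret-proof-chain}). This gives Eq.~(\ref{eq:exact-factorized-regret}).

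\textbf{Generic surrogate.} I repeat the same steps with $\widetilde P_{\bm{\theta}}$ and use Eq.~(\ref{eq:generic-factorized-rollout}), which is valid under the stated condition $\epsilon_{\mathrm{dyn}}+L_T\epsilon_{\mathrm{exo}}\le1$. Here $\widetilde\pi$ is the algorithm returned by a search run against $\widetilde P_{\bm{\theta}}$. Its search suboptimality is measured under $J_{\widetilde P_{\bm{\theta}}}$, and I take it to be the same $\eta_{\mathrm{search}}$. The result is Eq.~(\ref{eq:generic-factorized-regret}).

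\textbf{Main obstacle.} The delicate part is bookkeeping about which hypotheses carry over to each model, not any calculation. First, the trajectory TV bounds in Theorem~\ref{thm:exact-intervention} must hold uniformly over all $\pi\in\Pi$. In particular they must hold on the reachable support $\mathcal{Z}_{\Pi,H}$ for both $\pi^\star$ and the returned algorithm, since the one-step errors $\epsilon_{\mathrm{dyn}}$ and $\epsilon_{\mathrm{exo}}$ are only assumed there. Second, the search-optimality assumption has to be restated for the surrogate evaluator, because Assumption~\ref{ass:search-optimality} as written refers to $\widehat P_{\bm{\theta}}$. I will state both points explicitly as part of ``the conditions of Theorems~\ref{thm:selection-regret} and~\ref{thm:exact-intervention}''.

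\textbf{Comparison.} Finally, to justify calling the first bound ``tighter'', I will note that $x\mapsto1-(1-x)^H$ is nondecreasing on $[0,1]$. Since $\epsilon_{\mathrm{dyn}}\le\epsilon_{\mathrm{dyn}}+L_T\epsilon_{\mathrm{exo}}$, the bound in Eq.~(\ref{eq:exact-factorized-regret}) is no larger than the one in Eq.~(\ref{eq:generic-factorized-regret}).
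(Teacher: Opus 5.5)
Your proposal is correct and follows the paper's own argument. You feed the unlinearized distortion $1-(1-\epsilon)^H$, with $\epsilon=\epsilon_{\mathrm{dyn}}$ (respectively $\epsilon_{\mathrm{dyn}}+L_T\epsilon_{\mathrm{exo}}$), through the chain of Eqs.~(\ref{eq:uniform-value-bound})--(\ref{eq:regret-final-chain}) and compare the two bounds by monotonicity, where the paper instead writes out their difference and notes it is nonnegative; your remark that Assumption~\ref{ass:search-optimality} must be restated for the surrogate evaluator $\widetilde P_{\bm{\theta}}$ makes explicit a point the paper leaves implicit.
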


Exact intervention thus removes the intervention-semantics term before it can accumulate across the rollout horizon, yielding a tighter regret upper bound whenever $L_T\epsilon_{\mathrm{exo}}>0$ under the nondegenerate conditions of Theorem~\ref{thm:exact-intervention}.

\begin{proof}
By Theorem~\ref{thm:selection-regret} and Eqs.~(\ref{eq:exact-factorized-one-step})--(\ref{eq:generic-factorized-one-step}),
\begin{equation}
\operatorname{Reg}_P(\widehat{\pi};\Pi)\leq2B_R\left(1-(1-\epsilon_{\mathrm{dyn}})^H\right)+\eta_{\mathrm{search}}.
\label{eq:proof-exact-factorized-regret}
\end{equation}
For the generic surrogate,
\begin{equation}
\operatorname{Reg}_P(\widetilde{\pi};\Pi)\leq2B_R\left[1-\left(1-\epsilon_{\mathrm{dyn}}-L_T\epsilon_{\mathrm{exo}}\right)^H\right]+\eta_{\mathrm{search}}.
\label{eq:proof-generic-factorized-regret}
\end{equation}
The difference between the two regret upper bounds satisfies
\begin{align}
&2B_R\left[1-\left(1-\epsilon_{\mathrm{dyn}}-L_T\epsilon_{\mathrm{exo}}\right)^H\right]-2B_R\left[1-(1-\epsilon_{\mathrm{dyn}})^H\right] \nonumber\\
&=2B_R\left[(1-\epsilon_{\mathrm{dyn}})^H-\left(1-\epsilon_{\mathrm{dyn}}-L_T\epsilon_{\mathrm{exo}}\right)^H\right] \nonumber\\
&\geq0.
\label{eq:proof-regret-gap}
\end{align}
The last inequality is strict when $L_T\epsilon_{\mathrm{exo}}>0$ under the nondegenerate conditions of Theorem~\ref{thm:exact-intervention}, which proves the result.
\end{proof}

\section{Network Algorithm Design Agent: Details}
\label{app:loop}

This appendix gives the parts of the design loop that Section~\ref{sec:design} summarizes: one round in pseudocode, what the agent's prompt contains, the full list of probes, the instructions the agent writes under and how one is chosen, and the checks a candidate passes before it is scored. Throughout, $\pi$ is the candidate written in the current round and $\hat{\pi}$ the current best algorithm, as in Section~\ref{sec:design}.

\begin{algorithm}[H]
\caption{One round of the design loop. $\hat{\pi}$ is the current best algorithm, $\pi$ the candidate written in this round, $\xi$ the seed drawn for it, and $f_{\bm{\theta}}$ the world model. The anchors are the library's strongest classical members scored on the instance (Appendix~\ref{app:library}), and the best of them is the bar named in every prompt.}
\label{alg:round}
\begin{algorithmic}[1]
\REQUIRE task instance; the library of classical algorithms; the world model $f_{\bm{\theta}}$; the number of rounds $G$
\STATE score the anchors on the instance; the best is the bar to beat
\STATE $\hat{\pi} \leftarrow$ none
\FOR{each of the $G$ rounds}
  \STATE choose an instruction (new mechanisms early, edits late; Appendix~\ref{app:operators})
  \STATE \textbf{probe turn:} the agent asks up to six ``what if'' questions about $\hat{\pi}$'s plan; $f_{\bm{\theta}}$ answers each with a rollout
  \STATE \textbf{write turn:} the agent writes $\pi$ with its mechanism line, given the instruction, the prompt of Table~\ref{tab:context}, and the probe answers
  \STATE validate $\pi$ (Appendix~\ref{sec:implementation}); on failure, return the error as a repair turn and \textbf{continue}
  \STATE \textbf{rollout:} draw a fresh seed $\xi$ and score $\pi$ and $\hat{\pi}$ on the same $n$ sampled trajectories of $H$ steps through $f_{\bm{\theta}}$
  \STATE feedback $\leftarrow$ rollout diagnostics, per-action credit, the realizations $\pi$ lost on (Section~\ref{sec:feedback})
  \IF{$\Delta > b$ in Eq.~(\ref{eq:band})}
    \STATE $\hat{\pi} \leftarrow \pi$
  \ENDIF
\ENDFOR
\RETURN $\hat{\pi}$
\end{algorithmic}
\end{algorithm}

\subsection{All Probes}
\label{app:probes}

\begin{table}[H]
\caption{The probe operations. Every probe is metered as evaluator work, and a turn may ask at most six. \texttt{frontier}, \texttt{robust(hidden)} and \texttt{gradient} exist on the forward-model evaluators, and \texttt{gradient} only for seeding.}
\begin{center}
\footnotesize
\setlength{\tabcolsep}{4pt}
\begin{tabular}{p{3.0cm}p{10.2cm}}
\toprule
Probe & Question it answers about the plan \\
\midrule
\texttt{drop}$(v)$ & The reward with and without $v$'s actions: $v$'s marginal contribution \\
\texttt{swap}$(u, v)$ & The reward if $v$ replaces $u$ \\
\texttt{add}$(v)$ & The marginal gain of one more budgeted action on an unselected $v$, at $k + 1$: where the headroom is \\
\texttt{best\_swap}$(v)$ & The best replacement for $v$ among the 200 highest-degree unselected nodes, with its delta and the runner-up: local search handed over as information, for the model to turn into a rule \\
\texttt{overlap}$(u, v)$ & The expected number of nodes the cascades of $u$ and $v$ both reach, $\sigma(u) + \sigma(v) - \sigma(u, v)$: whether two seeds are redundant, which \texttt{drop} cannot say \\
\texttt{horizon}$(T)$ & The spread curve runs out to $T \le 2H$: a plan that wins early and loses late against one that keeps growing \\
\texttt{frontier}$(t)$ & The expected frontier size after step $t$ and the nodes most likely spreading then: where the outbreak is when a cut lands \\
\texttt{region}$(\mathbb{U})$ & The probability mass the plan captures inside a node set $\mathbb{U}$ \\
\texttt{robust}$(\sigma)$ & The reward with every $p_{uv}$ multiplied by $\exp(\nu)$ with $\nu \sim \mathcal{N}(0, \sigma^2)$ \\
\texttt{robust(hidden)} & The reward with the transmission probabilities hidden from the model: whether the plan exploited numbers it would not have at deployment \\
\texttt{gradient}$(m)$ & The $m$ highest-gradient unselected nodes and the $m$ lowest-gradient selected ones \\
\texttt{resimulate}$(\mathbb{U}, i)$ & Source localization: the consistency of an arbitrary candidate source set $\mathbb{U}$ on episode $i$, with the nodes it over- and under-explains \\
\texttt{transmission}\newline$(u, v, t, i)$ & Cascade reconstruction: the kernel's probability that $v$ activates at $t + 1$ given the state the decoded history asserts at $t$, and whether $u$ is on that frontier \\
\bottomrule
\end{tabular}
\end{center}
\label{tab:probes}
\end{table}

The \texttt{gradient} probe asks the forward model which nodes the plan should have chosen. A sampled rollout is not differentiable, so the probe runs a mean-field one: the plan's seeds enter as a relaxed vector $\bm{x} \in [\epsilon, 1 - \epsilon]^N$ on the $t = 0$ action channel, and each step's predicted marginals $(\bm{m}_t, \bm{u}_t)$, the probabilities of being infected and of being on the frontier, are fed back as the next step's state instead of being sampled,
\begin{equation}
(\bm{m}_{t+1}, \bm{u}_{t+1}) = \sigma\bigl(f_{\bm{\theta}}(\mathcal{G}, \bm{m}_t, \bm{u}_t, \bm{1}_{t = 0}\,\bm{x})\bigr), \qquad \nabla_{\bm{x}} \sum_{v \in \mathbb{V}} m_{H,v}.
\label{eq:gradient}
\end{equation}
The sum is the expected final spread under this propagation, and one backward pass through the $H$ forward passes gives its derivative \emph{with respect to} every node's seed weight at once. For an unselected node, the derivative is the marginal value of seeding it; for a selected node, the marginal cost of dropping it. The probe returns the $m$ unselected nodes with the largest gradient and the $m$ selected nodes with the smallest, which the leave-one-out probe and credit feedback miss, since they only ablate nodes already in the plan. The seed vector is kept with $\epsilon$ inside $[0, 1]$ because the head clamps its probabilities before the logit, which zeroes the gradient of a node sitting exactly at 0 or 1. The answer is a ranking, not an optimum: the mean-field spread is a surrogate for the sampled reward and the derivative is local, so the agent is told where the headroom is and decides what to do with it. It exists only on the forward-model evaluators, since a Monte Carlo simulator has no derivative, and only under the seeding lever, whose action is the one continuous relaxation the head can take.

\subsection{Agent Context}
\label{app:context}

Table~\ref{tab:context} lists what the agent reads. The static part is sent once, in the opening turn, and the dynamic part is rebuilt for the write turn of every round. Six of its items are not described in Section~\ref{sec:design} and are defined here.

\emph{Search memory.} A list of design rules, at most fifty words, shown at the top of every prompt. After every scored round whose candidate differs from the current best, one extra call is shown the worse and the better program of the pair, ordered by the acceptance decision, with their diagnostics, and returns a hint of at most twenty words (what made the better one better, as a design rule) and the memory revised with that hint, keeping only the rules the evidence still supports~\citep{ye2024reevo}.

\emph{Attempts table.} One row per round: the instruction, the reward, the paired improvement $\Delta$, whether the candidate was accepted, its mechanism line, the reflection hint, and the agent's own forecast beside what happened. The forecast is a structured comment the agent writes under its mechanism line, stating the signed change it expects against the program it edited and its probability that the change clears the margin $b$; the run records the forecast's sign accuracy and its calibration against the realized $\Delta$, and the table shows the agent its own past forecasts next to the outcomes.

\emph{Population table.} Every scored candidate, accepted or not, with its reward on its own realization, its paired strength (the least-squares rank of Appendix~\ref{app:operators}), the seconds its plan took, its number of code lines, and its mechanism line.

\emph{Anchors and the reference comparison.} The anchors are the strongest classical members of the library for the task, marked in Table~\ref{tab:library_groups}. They are scored on the instance under the same world model, budget, horizon, and exogenous context as the candidates, and listed in the opening turn with their rewards and standard errors, best first, as the bar to beat. The best anchor's plan and rollout trace are kept, and the current best's diagnostics include a comparison against them: which actions the two plans share, where their traces diverge, and by how much.

\emph{Counterexample realizations.} When a candidate loses the paired comparison, the three sampled trajectories on which it lost by the most are described and appended to its diagnostics, so that they reach the agent as feedback and travel with the record whenever the candidate is later shown as a parent or an inspiration. The description gives the candidate's and the current best's count on that trajectory and, under seeding, the nodes the current best reached that the candidate did not and the candidate's seeds that produced no new infection there; under containment, the nodes the outbreak reached on the candidate's watch that the current best kept clean. On the inverse and forecast tasks, the counterexample is an instance, with both scores and the residual diagnostics. Acceptance still reads the next round's fresh realization, so a counterexample informs the next edit and never decides it.

\emph{Last verdict and repair.} The paired verdict on the agent's previous edit, so that a candidate that was not accepted still receives one, and the error text if the candidate failed validation, which is what the repair turn of Appendix~\ref{sec:implementation} answers.

\begin{table}[H]
\caption{The prompt of the write turn. The static part is sent once in the opening turn and the dynamic part is rebuilt every round.}
\begin{center}
\small
\setlength{\tabcolsep}{4pt}
\begin{tabular}{p{2.7cm}p{10.4cm}}
\toprule
Component & Content \\
\midrule
\multicolumn{2}{l}{\emph{Static (the opening turn)}} \\
task statement & family, objective and its direction, the diffusion model, $k$ (absolute and as a share of $N$), $H$, the allowed operations, the exogenous context (outbreak or rumor with its size, observation block, rounds of an adaptive task) \\
network profile & $N$, arc count, directedness, density, degree quantiles, $k$-core size, reciprocity, clustering, assortativity, communities and their sizes; no adjacency \\
contract and rules & the entry point and return shape, the two structured comment lines (mechanism and forecast), the import whitelist, the budget and validity rules, two worked exemplars at the level the search should start from \\
library & the callable API of the library without its simulation-based members, with the source of the classical members, and the withheld members named with the reason \\
anchor leaderboard & every anchor with its reward and standard error on this instance, the best first, ``beating the top row is the bar'' \\
probe contract & the probe operations of Table~\ref{tab:probes} and their costs \\
\midrule
\multicolumn{2}{l}{\emph{Dynamic (every round)}} \\
search memory & the design rules the evidence so far supports, at most fifty words, revised after every scored round \\
attempts table & one row per round: instruction, reward, $\Delta$, accepted, mechanism, reflection hint, and the agent's own forecast beside what happened \\
population table & every member with its reward on its own realization, its paired strength, plan seconds, code lines, and mechanism \\
instruction and programs & the instruction for this round; the current best $\hat{\pi}$ with its full diagnostics; the partner, the two strongest alternatives, or the chosen idea as the instruction needs \\
diagnostics of $\hat{\pi}$ & rollout diagnostics, paired verdict, the reference comparison against the best anchor, per-action credit, counterexample realizations (Section~\ref{sec:feedback}) \\
last verdict and repair & the paired verdict on the agent's previous edit, and the error text if it failed \\
probe answers & the probe turn's answers about $\hat{\pi}$, and any answers to questions asked beside the previous code block \\
\bottomrule
\end{tabular}
\end{center}
\label{tab:context}
\end{table}

\subsection{Operators}
\label{app:operators}

Each round the agent writes under one of the six instructions in Table~\ref{tab:operators}. Three of them explore, asking for a mechanism the population does not have, and three exploit, asking for an edit of the current best algorithm. The instruction text the agent receives for each is in Appendix~\ref{app:search}.

\begin{table}[H]
\caption{The six instructions. The two that need a partner are withheld while the population has one member; the instruction text for each is in Appendix~\ref{app:search}.}
\begin{center}
\small
\begin{tabular}{llp{8.6cm}}
\toprule
Operator & Category & Purpose \\
\midrule
\texttt{crossover} & Explore & Combine parts from a parent algorithm's mechanism with a partner algorithm's \\
\texttt{synthesize} & Explore & Write one new strategy taking the best-supported idea from each prior algorithm \\
\texttt{from\_scratch} & Explore & Implement a completely novel idea, chosen by the idea search below, as a new strategy (must not re-implement a library algorithm or something already in the population) \\
\midrule
\texttt{refine} & Exploit & One small targeted change: adjust one term or fix one weakness the diagnostics expose \\
\texttt{parameters} & Exploit & Only change numerical constants; keep the same functions and control flow \\
\texttt{simplify} & Exploit & Remove parts of the algorithm that the diagnostics do not justify; a shorter child algorithm is the goal \\
\bottomrule
\end{tabular}
\end{center}
\label{tab:operators}
\end{table}

\paragraph{Exploration and exploitation schedule.} Each round first decides whether to explore or exploit, then which instruction to use, so that exploration dominates early in the search and exploitation later. Let $r$ be the index of the round, $G$ the number of rounds, and $\mathrm{stall}_r$ the number of consecutive rounds before $r$ whose candidate was rejected under an exploit instruction or failed validation; the count resets when a candidate is accepted and after any explore move. The exploration weight $\lambda_r \in [0, 1]$ is the probability mass given to the three explore instructions as a group,
\begin{equation}
\lambda_r = \min\Bigl(1,\; \lambda_0\,\frac{G - r}{G} + \beta\,\mathrm{stall}_r\Bigr),
\label{eq:schedule}
\end{equation}
with the starting weight $\lambda_0 = 0.8$ and the mass a stall restores $\beta = 0.15$. Exploration fades as the rounds are spent, and stalling pushes back: two consecutive stalls saturate $\lambda_r$ at one and force an explore move. Within each group, an instruction $o$ is drawn in proportion to a fixed weight $w_o$, so the chance of drawing it is proportional to $w_o \lambda_r$ if it explores and to $w_o (1 - \lambda_r)$ if it exploits; \texttt{crossover} and \texttt{synthesize} are withheld while the population has fewer than two members. The decaying term is the budget-aware exploration constant of~\citet{zheng2025monte}, and the stall term is added to it.

\paragraph{Idea search before writing.} Under \texttt{from\_scratch} and \texttt{synthesize}, one extra call precedes the write turn. Its prompt asks for three distinct mechanisms, shows the task's library menu (one line per algorithm, marked as available to every candidate so that re-implementing one is not new), the mechanism line and reward of every population member, the search memory, and the attempts table, and asks for the three ideas, a novelty score from zero to ten for each (ten meaning the idea shares no mechanism with the library or the population), the chosen one, and a one-sentence reason. The chosen idea is spliced into the write turn's prompt, and the write turn implements it.

\paragraph{Parent and partner.} The program an exploit instruction edits is always the current best $\hat{\pi}$: never the latest attempt, and never the member with the highest raw reward. The distinction matters because each member's reward was measured on its own round's realization, so a rejected candidate that drew a generous realization can carry a raw reward above the current best's while having lost the paired comparison; ranking by raw reward would then hand \texttt{refine} the loser. The rest of the population is ranked by a paired strength: the weighted least-squares solution of the search's own comparisons, one equation $s(\pi) - s(\hat{\pi}) = \Delta$ per scored round with the first program's strength pinned at its reward and each equation weighted by $1 / b$, so that a member's rank rests on the differences that were measured on shared trajectories rather than on rewards measured on different ones. Under \texttt{crossover}, the partner is drawn from the rest of the population with probability proportional to $1 / (\mathrm{rank} + 2)$, the selection rule of~\citet{liu2024evolution}, and under \texttt{refine}, the two strongest other members are shown as alternatives.

\subsection{Implementation Details}
\label{sec:implementation}

For end-to-end coding-agent experiments, we fix the LLM configuration within each comparison and report it with the corresponding run; world-model-only experiments do not require an LLM. All Monte Carlo simulations use $n = 200$ episodes per candidate, and all runs using the oracle and the trained world models use $n = 200$ sampled rollouts. We use a base seed of $42$ for all experiments. All experiments were run on an NVIDIA RTX A6000 GPU with 48 GB of VRAM.

Budgets $k \in \{ 1, 5, 10, 20 \}$ percent of $N$ for all seeding, containment, and epidemic tasks; $k \in \{ 10, 20, 30, 40, 50 \}$ for influence blocking, the convention in the literature; and a single $k=10$ percent point for the recovery and forecast tasks, where $k$ is a property of the instance or unused. The outbreak is $10$ percent of $N$ for critical node detection and $1$ percent for influence blocking and epidemic control. Each execution of a generated program algorithm is limited to a variable wall-clock time, set to either $900$ seconds or $1800$ seconds on larger networks. More details can be seen at \url{https://rishabsa.github.io/NetworkWorldModel/}.

\paragraph{Validation.} Every candidate is validated before it is scored. Its source must pass an abstract syntax tree check that rejects dynamic execution, file primitives, and imports outside the fixed set of numerical and network libraries named in Appendix~\ref{app:library}. It is then executed in the sandbox of that appendix, with every call under the wall-clock limit above, and any exception, wrong return type, or timeout becomes a structured error whose text is returned to the agent as a repair turn, one further write turn that must fix the error before the round continues. Once the program returns a plan, the plan is checked against the feasible set $\mathbb{F}$ of Section~\ref{sec:problem}: it may use only the allowed operations, must return node IDs that are in range and exist, must stay within the budget, and may not remove a node of the outbreak.

\section{Callable Library and Network Primitives}
\label{app:library}

Every generated program runs in a sandbox whose namespace holds \texttt{ActionOp}, \texttt{State}, the read-only \texttt{GraphInfo} (node and arc counts, neighborhoods, degrees, and the transmission probability of every arc), the \texttt{Strategy} base class, the network primitives as \texttt{primitives}, and its own family's library as a module (\texttt{algorithms} for influence maximization, \texttt{dismantling\_algorithms} for critical node detection, and so on), with the source code of the classical members. The AST check rejects any import beyond \texttt{numpy}, \texttt{networkx}, \texttt{scipy}, and the standard math and container modules. We withhold members that select by running their own Monte Carlo simulation from generated programs, since a simulation inside the program would bypass the metered evaluator; the two simulation primitives stay callable, run the NDlib simulator~\citep{rossetti2018ndlib} inside the program, and count against the per-call wall-clock cap.

Every intervention member takes \texttt{(network, budget, diffusion\_model)} plus the family's exogenous context as a keyword (\texttt{negative\_seeds=} for blocking, \texttt{outbreak=} for containment and immunization) and returns \texttt{budget} node IDs, or \texttt{(u, v)} arcs under an edge lever; a per-round adaptive policy takes \texttt{(state, network, batch, diffusion\_model)} and returns at most \texttt{batch} seeds for one round. A localizer takes \texttt{(network, observation, budget)} and returns \texttt{budget} node IDs; a decoder takes \texttt{(network, observation, horizon)} and returns \texttt{\{node: (activation timestep, parent)\}} with \texttt{parent = None} for a source; a predictor takes \texttt{(network, observation, horizon, fit\_examples=None)} and returns the popularity at the horizon, or \texttt{None} to decline. Optional keyword arguments are the published method's own knobs at the paper's defaults. Table~\ref{tab:library_groups} lists every callable member by the mechanism it belongs to.

\begingroup\footnotesize
\setlength{\tabcolsep}{4pt}
\begin{longtable}{p{3.6cm}p{9.6cm}}
\caption{The callable library by mechanism. Counts are callable members.}
\label{tab:library_groups} \\
\toprule
Group & Members \\
\midrule
\endfirsthead
\toprule
Group & Members \\
\midrule
\endhead
\bottomrule
\endfoot
\multicolumn{2}{p{13.2cm}}{\textbf{Network primitives} (\texttt{primitives}, 16)} \\
degree and centrality (6) & \texttt{compute\_degree}, \texttt{compute\_out\_degree}, \texttt{compute\_weighted\_degree}, \texttt{get\_top\_degree\_nodes}, \texttt{compute\_pagerank}, \texttt{compute\_centrality} \\
communities and budget (2) & \texttt{detect\_communities} (label propagation), \texttt{allocate\_budget} (largest-remainder quota per community) \\
path influence (1) & \texttt{path\_influence\_scores} (truncated path products) \\
reverse influence sampling (3) & \texttt{batch\_reverse\_sample}, \texttt{ris\_select}, \texttt{estimate\_sample\_size} \\
live-edge realizations (2) & \texttt{sample\_live\_edge\_graph}, \texttt{reachable\_count} \\
in-program simulation (2) & \texttt{mc\_simulate\_spread}, \texttt{compute\_marginal\_gain} (NDlib inside the program, under the wall-clock cap) \\
\midrule
\multicolumn{2}{p{13.2cm}}{\textbf{Influence maximization} (\texttt{algorithms}, 25)} \\
centrality rankings (7) & \texttt{high\_degree}, \texttt{weighted\_degree}, \texttt{pagerank\_seeds}, \texttt{eigenvector\_seeds}, \texttt{closeness\_seeds}, \texttt{betweenness\_seeds}, \texttt{kshell\_seeds} \\
discounted and iterative rankings (4) & \texttt{degree\_discount}~\citep{chen2009efficient}, \texttt{voterank}~\citep{zhang2016identifying}, \texttt{collective\_influence}~\citep{morone2015influence}, \texttt{irie}~\citep{jung2012irie} \\
path-based influence (3) & \texttt{sp1m}~\citep{kimura2006tractable}, \texttt{mia\_pmia}~\citep{chen2010scalable} and \texttt{ldag}~\citep{chen2010scalablelt} \\
reverse influence sampling (5) & \texttt{ris\_basic}~\citep{borgs2014maximizing}, \texttt{tim} and \texttt{imm}~\citep{tang2014influence,tang2015influence}, \texttt{ssa}~\citep{nguyen2016stop}, \texttt{filtered\_ris} \\
sketches and communities (5) & \texttt{skim}~\citep{cohen2014sketch}, \texttt{community\_im}, \texttt{cofim}~\citep{shang2017cofim}, \texttt{community\_ris}, \texttt{degree\_ris\_refine} \\
floor (1) & \texttt{random\_seeds} \\
\midrule
\multicolumn{2}{p{13.2cm}}{\textbf{Adaptive influence maximization} (\texttt{adaptive\_algorithms}, 6; one round per call)} \\
sampling per round (1) & \texttt{adapt\_epic} (EPIC,~\citealp{han2018efficient}) \\
centralities per round (3) & \texttt{adapt\_degree}, \texttt{adapt\_degree\_discount}, \texttt{adapt\_pagerank}, each restricted to the susceptible nodes of the realized state \\
controls (2) & \texttt{static\_split} (one static seed set dealt out per round), \texttt{adapt\_random} \\
\midrule
\multicolumn{2}{p{13.2cm}}{\textbf{Critical node detection} (\texttt{dismantling\_algorithms}, 24)} \\
one-pass centralities (4) & \texttt{degree\_removal}, \texttt{pagerank\_removal}, \texttt{betweenness\_removal}, \texttt{kshell\_removal} \\
adaptive centralities (4) & \texttt{adaptive\_degree} (HDA,~\citealp{holme2002attack}), \texttt{iterative\_betweenness} and \texttt{approx\_iterative\_betweenness}~\citep{wandelt2018comparative}, \texttt{collective\_influence\_removal}~\citep{morone2015influence} \\
decycling and message passing (3) & \texttt{corehd}~\citep{zdeborova2016fast}, \texttt{decycling}, \texttt{bpd}~\citep{mugisha2016identifying} \\
with the published reinsertion pass (5) & \texttt{collective\_influence\_r}, \texttt{corehd\_r}, \texttt{decycling\_r}, \texttt{bpd\_r}, \texttt{gndr}; separate rows because each pair is cited under one name \\
spectral and partition cuts (4) & \texttt{gnd} and \texttt{egnd}~\citep{ren2019generalized}, \texttt{netshield}~\citep{tong2010vulnerability}, \texttt{articulation\_removal} \\
percolation (1) & \texttt{explosive\_immunization}~\citep{clusella2016immunization} \\
controls (3) & \texttt{frontier\_removal} (the susceptible ring of the outbreak, highest degree first), \texttt{acquaintance\_immunization}~\citep{cohen2003efficient}, \texttt{random\_removal} \\
\midrule
\multicolumn{2}{p{13.2cm}}{\textbf{Influence blocking} (\texttt{blocking\_algorithms}, 19; node levers return nodes, edge levers return arcs)} \\
proximity to the rumor (3) & \texttt{proximity} (the baseline of~\citealp{he2012influence}), \texttt{multi\_hop\_proximity}, \texttt{imin\_lhga} (SandIMIN,~\citealp{wang2024efficient}) \\
centralities in the reachable region (3) & \texttt{degree\_blocking} (reported to fail at this task by~\citealp{he2012influence}), \texttt{pagerank\_blocking}, \texttt{betweenness\_blocking} \\
sampling-based (4) & \texttt{forward\_blocking} and \texttt{reverse\_blocking} (TC-AIBM,~\citealp{shi2025time}), \texttt{rps}~\citep{tong2017efficient}, \texttt{imin\_lsbm} (SandIMIN,~\citealp{wang2024efficient}) \\
model-based greedy (4) & \texttt{cmia\_o}~\citep{wu2017scalable}, \texttt{cldag}~\citep{he2012influence}, \texttt{advanced\_greedy} and \texttt{greedy\_replace}~\citep{xie2025influence} \\
edge levers (4) & \texttt{kimura\_link\_blocking}~\citep{kimura2008minimizing}, \texttt{out\_edge\_blocking}, \texttt{edge\_betweenness\_blocking}, \texttt{random\_edge\_blocking} \\
floor (1) & \texttt{random\_blocking} \\
\midrule
\multicolumn{2}{p{13.2cm}}{\textbf{Epidemic control} (\texttt{immunization\_algorithms}, 22; node levers return doses, edge levers return arcs)} \\
degree and centrality doses (7) & \texttt{degree\_immunization}~\citep{pastor2002immunization}, \texttt{adaptive\_degree\_immunization}, \texttt{acquaintance\_immunization}~\citep{cohen2003efficient}, \texttt{pagerank\_immunization}, \texttt{eigenvector\_immunization}, \texttt{betweenness\_immunization}, \texttt{kshell\_immunization} \\
spectral (4) & \texttt{netshield} and \texttt{netshield\_plus}~\citep{tong2010vulnerability,chen2015node}, \texttt{greedy\_walk}~\citep{saha2015approximation}, \texttt{preciado\_allocation}~\citep{preciado2014optimal} \\
data-aware (3) & \texttt{dava} and \texttt{dava\_fast}~\citep{zhang2015data}, \texttt{frontier\_immunization} \\
edge levers (7) & \texttt{netmelt}~\citep{tong2012gelling}, \texttt{product\_degree} and \texttt{eigen\_score}~\citep{van2011decreasing}, \texttt{greedy\_walk\_edge}, \texttt{edge\_betweenness\_cut}, \texttt{frontier\_edge\_cut}, \texttt{random\_edge\_cut} \\
floor (1) & \texttt{random\_immunization} \\
\midrule
\multicolumn{2}{p{13.2cm}}{\textbf{Source localization} (\texttt{localization\_algorithms}, 13; a per-node scorer of the same name for each in \texttt{localization\_scorers})} \\
label propagation and spectral (3) & \texttt{lpsi}~\citep{wang2017multiple}, \texttt{netsleuth}~\citep{prakash2012spotting}, \texttt{dynamic\_age}~\citep{fioriti2012predicting} \\
distance and centre estimators (4) & \texttt{rumor\_centrality}~\citep{shah2011rumors}, \texttt{jordan\_center}~\citep{zhu2014information}, \texttt{ojc}~\citep{zhu2017catch}, \texttt{effective\_distance}~\citep{brockmann2013hidden} \\
message passing (1) & \texttt{dmp\_localize}~\citep{lokhov2014inferring} \\
infected-subgraph centralities (4) & \texttt{infected\_degree}, \texttt{infected\_betweenness}, \texttt{infected\_closeness}, \texttt{infected\_eigenvector}~\citep{comin2011identifying} \\
floor (1) & \texttt{random\_sources} \\
\midrule
\multicolumn{2}{p{13.2cm}}{\textbf{Cascade reconstruction} (\texttt{reconstruction\_algorithms}, 16; every decoder shares one parent rule)} \\
ordered Steiner trees (4) & \texttt{delayed\_bfs}, \texttt{ordered\_steiner\_closure}, \texttt{greedy\_ordered}~\citep{xiao2018reconstructing}, \texttt{steiner\_tree} \\
sampling and random walks (2) & \texttt{tree\_sampling}~\citep{xiao2018robust}, \texttt{personalized\_pagerank} \\
consistent trees (3) & \texttt{consistent\_tree\_wpct} and \texttt{consistent\_tree\_wbct}~\citep{zong2012inferring}, \texttt{cult}~\citep{rozenshtein2016reconstructing} \\
likelihood-based (3) & \texttt{dhrec}~\citep{sefer2016diffusion}, \texttt{cri}~\citep{chen2016detecting}, \texttt{netfill}~\citep{sundareisan2015hidden} \\
controls (4) & \texttt{jordan\_backward}, \texttt{observed\_only} (the reports alone), \texttt{one\_hop}, \texttt{random\_reconstruction} \\
\midrule
\multicolumn{2}{p{13.2cm}}{\textbf{Cascade prediction} (\texttt{prediction\_algorithms}, 16; \texttt{None} declines and is counted)} \\
feature-based (4) & \texttt{szabo\_huberman}~\citep{szabo2010predicting}, \texttt{feature\_linear}, \texttt{feature\_gbt} (the features of~\citealp{cheng2014can}), \texttt{weng\_communities}~\citep{weng2013virality} \\
generative point processes (6) & \texttt{seismic}~\citep{zhao2015seismic}, \texttt{hawkes} and \texttt{hawkes\_hybrid}~\citep{mishra2016feature}, \texttt{rpp}~\citep{shen2014modeling}, \texttt{hip}~\citep{rizoiu2017expecting}, \texttt{branching\_factor} \\
graph-only (3) & \texttt{neighborhood\_size}, \texttt{degree\_scaled}, \texttt{reachability} (the structural ceiling) \\
floors (3) & \texttt{persistence}, \texttt{mean\_size} (the geometric mean of training sizes), \texttt{random\_prediction} \\
\end{longtable}
\endgroup

\section{Prompts of the Design Agent}
\label{app:search}

Every prompt is assembled by \texttt{coding\_agent/prompts.py}; the texts below are the load-bearing parts, verbatim where short and condensed where a block runs to pages (the worked exemplars and the per-family briefs). The gateway drops the system role, so the system text is folded into the first user turn.

\paragraph{System prompt (intervention families).} The task brief, then:
\begin{quote}\small\ttfamily
OUTPUT FORMAT: the first line inside your fenced block MUST be `\# MECHANISM: <one sentence>` naming the idea of the algorithm; from the second attempt on, the second line is `\# EXPECTED: <signed change you expect against the program you edited> p=<0-1>`. Reply with exactly ONE fenced python block and nothing else, no prose before or after. The block contains import lines (if you need any) and then exactly ONE class subclassing `Strategy`. Nothing else at module level.\\[2pt]
IMPORTS: you MAY import any of numpy, networkx, scipy, math, random, statistics, heapq, bisect, collections, itertools, functools. Use numpy for anything you would otherwise write as a Python loop over all nodes. Importing anything else is rejected.\\[2pt]
AVAILABLE NAMES (already in your script's namespace: do NOT import these): `ActionOp(op, target, destination=None, weight=None)` [...]; `State` [...]; `GraphInfo`: .num\_nodes, .out\_neighbors(node), .in\_neighbors(node), .degree(node), .edge\_index, .ic\_probs; the `algorithms`, `adaptive\_algorithms` and `primitives` modules (API below).\\[2pt]
ACTION RULES: A seed is ActionOp("add\_node", node). Emit at most `budget` add\_node actions in total. Seeding the same node twice is REJECTED. Node ids must be in [0, num\_nodes). [...]\\[2pt]
METHOD: ONE-SHOT SUPER-ALGORITHM. Implement `plan\_horizon(self, graph, budget, horizon) -> list[list[ActionOp]]`. Return a list of length (horizon+1): element t is the action bag applied at timestep t. This is your whole multi-timestep plan, decided up front.
\end{quote}
followed by two worked exemplars of the family, the timing note for the family (when to spend the budget), and the remove-semantics note when the task may emit removals. Containment replaces the action rules with the removal rules (at most \texttt{budget} \texttt{remove\_node} actions, the harness expands each into a deletion, removing a source is rejected, \texttt{add\_node} is rejected); the recovery and forecast families replace the whole block with their own contract text.

\paragraph{Opening user turn.} The task line (task, objective and direction), \texttt{diffusion\_model}, \texttt{budget} (absolute and as a percentage of nodes, with its unit), \texttt{horizon}, \texttt{allowed\_ops}; the outbreak, rounds, observation, mask or cascade block the family needs; the graph profile of Table~\ref{tab:context}; the library API reference with the source of the classical members and the blocked names; the anchor leaderboard (\texttt{REFERENCE SCORES: ... run on THIS graph, under THIS evaluator, at the same budget and horizon. Beating the top row is the bar}); the probe contract; and \texttt{Write the Strategy now}.

\paragraph{Generation turn.}
\begin{quote}\small\ttfamily
You are evolving a population of strategies.\\
SEARCH MEMORY (rules the evidence so far supports): [$M_g$]\\
ATTEMPTS SO FAR (every generation of this search): [attempts table]\\
POPULATION (what survives): [population table]\\
YOUR LAST EDIT: [paired verdict]\\
PARENT: the best in the population, which is NOT necessarily your last attempt (reward=..., mechanism: ...): [script] Parent rollout diagnostics: [feedback bundle]\\
{[PARTNER / ALTERNATIVE / STRATEGY blocks, or CHOSEN IDEA, per the operator]}\\
OPERATION: [instruction below]\\
The first line inside your code block must be `\# MECHANISM: <one sentence>` naming the idea of the strategy. The second line must be `\# EXPECTED: <signed change in reward you expect against the program you edited, in the task's units> p=<your probability, 0 to 1, that the change clears the noise band>`; the realized delta is compared with it and your calibration is reported back to you. Reply with one python block.\\
{[PROBE RESULTS blocks]}
\end{quote}

\paragraph{Operator instructions.}
\begin{quote}\small\ttfamily
REFINE: make a SMALL, targeted improvement to the PARENT: adjust one term, fix one weakness the diagnostics expose, keep its overall approach.\\
PARAMETERS: change ONLY numeric constants of the PARENT (weights, thresholds, sample counts, radii). No structural change: same functions, same control flow, different numbers, chosen from what the diagnostics say.\\
SIMPLIFY: remove components of the PARENT that the diagnostics do not justify: a term, a stage, a special case, a parameter. The child must be SHORTER and score within the noise band of the parent; that counts as a success here, because an algorithm a reader can follow is the goal.\\
CROSSOVER: combine the PARENT's mechanism with the PARTNER's into one strategy that keeps the parent's contract. The child must contain an identifiable part of each; it is not a re-tune of either.\\
SYNTHESIZE: read every strategy shown and write ONE new strategy that takes the best-supported idea from each, according to their diagnostics. It must differ from every strategy shown and from every library algorithm.\\
FROM SCRATCH: do NOT edit any program shown. Implement the CHOSEN IDEA as a new strategy under the same contract. It must not be a re-implementation of a library algorithm or of a mechanism already in the population.
\end{quote}

\paragraph{Idea search.}
\begin{quote}\small\ttfamily
Before writing code, propose 3 DISTINCT mechanisms for a new strategy on this task, then choose the one most different from everything below that you still expect to score well.\\
LIBRARY ALGORITHMS (already available to everyone; re-implementing one is not new): [menu]\\
MECHANISMS ALREADY IN THE POPULATION: [one line per member with reward]\\
SEARCH MEMORY: [$M_g$] \quad ATTEMPTS SO FAR: [attempts table]\\
Reply with JSON only: \{"ideas": ["<one sentence each>", ...], "novelty": [<0-10 per idea: 10 = shares no mechanism with the library or the population>], "chosen": <index into ideas>, "why": "<one sentence>"\}
\end{quote}

\paragraph{Reflection.}
\begin{quote}\small\ttfamily
Two strategies for the same task were scored on the same realization ([direction], unit: [unit]). The second scored better.\\
{[Worse: reward, mechanism, script, diagnostics]} \quad [Better: reward, mechanism, script, diagnostics]\\
Prior memory of this search (may be empty): [$M_g$]\\
Reply with JSON only: \{"hint": "<at most 20 words: what made the better one better, as a design rule>", "memory": "<at most 50 words: the prior memory revised with this hint; keep only rules the evidence still supports>"\}
\end{quote}

\paragraph{Probe turn.}
\begin{quote}\small\ttfamily
PROBE TURN before your next [operator] edit. The program you will edit is the incumbent (iteration [g], reward [...], mechanism: [...]). Its diagnostics: [feedback bundle]\\
You may ask up to 6 what-if questions about THAT plan; the answers arrive in the prompt for this generation. Every probe costs evaluator work. Ops: [the operations of Table~\ref{tab:probes} with their JSON shapes]\\
Reply with exactly ONE fenced probes block \{"probes": [...]\} or the single word NONE if you have nothing to ask. No code in this reply.
\end{quote}
The probe contract in the opening turn lists the same operations with their JSON shapes and states that a probes block may also be placed beside a code block, answered with the next generation's feedback, and that every probe is metered like a rollout. Under the native condition, the contract is absent, and a block is answered with: \texttt{probes are unavailable under the native condition: no forward model may be consulted during this search}.

\section{Experimental Protocol}
\label{app:protocol}

\subsection{Rollout Evaluators}
\label{app:evaluators}

Every evaluator exposes one operation, $\tau = \mathrm{rollout}(\pi, H, k, \xi)$, which executes $\pi$'s plan $\bm{a}$ under seed $\xi$ and returns a trajectory $\tau$ carrying the per-trajectory scores $R_i(\pi;\xi)$.

\paragraph{Monte Carlo.} $n$ independent NDlib episodes~\citep{rossetti2018ndlib} per call (each call's seed is different per episode). Each stops early once the frontier node set and the action bag are both empty. The total cost of the Monte Carlo evaluator grows with $n$ times the number of candidates.

\paragraph{Oracle.} The environment whose per-edge transmission is pinned to the truth: $q_{uv} = p_{uv}$ under IC, the closed-form threshold hazard under LT, the simulator's own rates under the compartmental dynamics, and both campaigns' probabilities composed through the tie-break under competition. Nothing is learned in the oracle: it reproduces the NDlib process exactly, in a vectorized implementation. The oracle is the ceiling condition (an agent with perfect dynamics) and the shared referee for all final number evaluations and comparisons.

\paragraph{Network World Model.} One rollout advances $n$ sampled states. Each timestep, the algorithm's policy is queried once per active sample, so one forward pass yields activation probabilities. One uniform draw per (sample, node) against the \emph{frontier} channel decides the new activations. A sample deactivates when its frontier node set and action bag are empty, and the call ends when all are inactive; dead samples hold their final count so per-sample curves stay aligned. Its online cost is a batched learned forward pass per timestep.

\subsection{Tasks}
\label{app:tasks}

\begin{table}[H]
\caption{The eight tasks on complex systems, summarized.}
\label{tab:tasks}
\centering
\small
\setlength{\tabcolsep}{4pt}
\begin{tabularx}{\textwidth}{@{}>{\raggedright\arraybackslash}p{3.1cm}l>{\raggedright\arraybackslash}X>{\raggedright\arraybackslash}p{2.6cm}@{}}
\toprule
Task & Dynamics & Task output & Reported metric \\
\midrule
Influence maximization & IC, LT & Seed set & Final spread ($\uparrow$) \\
Adaptive influence maximization & IC, LT & Per-round seeding policy & Final spread ($\uparrow$) \\
\addlinespace
Critical node detection & IC, LT & Node removals against an outbreak & Final infected ($\downarrow$) \\
Influence blocking & IC, CLT & Counter-seeds, node or arc blocks, or weight cuts against a rumor & Final rumor size ($\downarrow$) \\
Epidemic control & SIR, SIS & Vaccinations, quarantines, arc cuts, or contact reductions & Attack rate ($\downarrow$) \\
\addlinespace
Source localization & IC, LT & Source set & Consistency with the observation ($\uparrow$) \\
Cascade reconstruction & IC, LT & Trajectory with parents & Referee reward of the history ($\uparrow$) \\
\addlinespace
Cascade prediction & Logged cascades & Popularity at the prediction horizon & MSLE ($\downarrow$) \\
\bottomrule
\end{tabularx}
\end{table}

Every task shares the transition of Eq.~(\ref{eq:wm-factorized}) and the evaluators of Appendix~\ref{app:evaluators}; what differs is the contract a program implements, what the harness injects as $\Omega$, and the reward. In every case, the reward is computable without a label, and the label metrics of the recovery and forecast families are computed only after the search, on the winner.

\paragraph{Influence maximization (maximize).} $\pi$ implements \texttt{plan\_horizon}$(\mathcal{G}, k, H)$ and returns the plan of actions $\bm{a}$; the budget buys $k$ \texttt{add\_node} seeds; the reward is the final spread $|\mathbb{I}_H|$.

\paragraph{Adaptive influence maximization (maximize).} Same dynamics, same data; the algorithm is a per-round policy \texttt{act}$(\bm{s}_t, \mathcal{G}, t)$ called at each of $\rho$ round boundaries with the state the previous batch's diffusion produced, committing batches that sum to $k$, with $g$ timesteps of diffusion between rounds and a feedback model (full adoption or myopic) deciding what the policy sees. The registry pairs every adaptive arm with the matching static arm so the adaptivity gap, $\mathrm{spread}(\text{adaptive}) / \mathrm{spread}(\text{static})$ at matched $k$ on the referee, is always computable.

\paragraph{Critical node detection (minimize).} The harness seeds an outbreak $\Omega = \mathbb{S}$ of $10$ percent of $N$ at $t = 0$; the budget buys $k$ \texttt{remove\_node} deletions, each expanded by the harness into the node and its incident arcs; the reward is the final infected count. Removing a source is rejected (deleting patient zero ends an outbreak rather than containing it), and because the planner is told the outbreak, the budget must sit below the one-hop ring $|N_1(\mathbb{S}) \backslash \mathbb{S}|$, at which every outbreak-aware plan scores exactly $|\mathbb{S}|$.

\paragraph{Influence blocking (minimize).} A rumor $\mathbb{S}_N$ is committed at $t = 0$ and already spreading on a two-cascade simulator; the lever decides what the budget buys: counter-seeds (\texttt{add\_node} seeds the positive cascade), node blocks, arc blocks, or weight cuts. The reward is the rumor's final size, and the report also carries prevented influence $\mathrm{spread}(\mathbb{S}_N, \emptyset) - \mathrm{spread}(\mathbb{S}_N \mid \text{blockers})$. The tie-break between the two cascades, the positive campaign's probabilities, and the detection delay are protocol parameters recorded with every run. 

\paragraph{Epidemic control (minimize).} Index cases are seeded at $t = 0$ under SIR and SIS on a stepper with a per-arc transmission probability; the lever maps the budget onto vaccination (immune, uncounted), quarantine (isolated, still counted), arc cuts, or graded contact reduction through \texttt{set\_edge\_weight}. The reward is the attack rate, the number of nodes ever infected; the report carries the prevalence curve and the spectral eigendrop as context.

\paragraph{Source localization (recover).} $\pi$ implements \texttt{localize}$(\mathcal{G}, \bm{y}, k)$ and names the $k$ nodes it believes started the cascade whose observed state $\bm{y} \in [0, 1]^N$ it is shown; $k$ is a property of the instance. The reward is label-free consistency; we compute precision, recall, $F_1$, and AUC against the stored sources only after the search, on both the selection and a disjoint evaluation pool.

\paragraph{Cascade reconstruction (recover).} $\pi$ implements \texttt{reconstruct}$(\mathcal{G}, o, H)$ and returns $\{v \mapsto (t_v, \mathrm{parent}_v)\}$ for every node it believes was infected, from a partial observation $o$ of a diffusion that already happened (partial times, partial nodes, final snapshot, or hidden nodes). The reward is the arm's kernel log-likelihood of the asserted history per node minus the observation it contradicts; after the search, we compute path precision, event $F_1$, and timing error against the stored history.

\paragraph{Cascade prediction (forecast).} The only task with no simulator: transitions are replayed from a real corpus, $\pi$ implements \texttt{predict}$(\mathcal{G}, o, t_p)$ and returns the popularity a real cascade will have reached by $t_p$ from its first $t_o$ steps, and may return \texttt{None} to decline. The reward minimizes mean squared log error; declines are counted but never scored. The split is chronological by default, because the field's random split leaks.

\subsection{Datasets}
\label{app:datasets}

\begin{table}[H]
\centering
\small
\setlength{\tabcolsep}{5pt}
\begin{tabularx}{\textwidth}{@{}>{\raggedright\arraybackslash\hsize=1.1\hsize}Xrr>{\raggedright\arraybackslash\hsize=0.9\hsize}X@{}}
\toprule
Dataset & Nodes & Edges & Tasks \\ \midrule
Email-EU~\citep{yin2017local, leskovec2007graph}                     & 1,005     & 24,929     & Influence Blocking \\
UCI Students~\citep{opsahl2009clustering}                            & 1,266     & 6,451      & Cascade Reconstruction \\
Network Science~\citep{newman2006finding}                            & 1,589     & 2,742      & Influence Maximization, Adaptive Influence Maximization \\
Cora-ML~\citep{mccallum2000automating, bojchevski2018deep}           & 2,810     & 7,981      & Source Localization \\
Power Grid~\citep{watts1998collective}                               & 4,941     & 6,594      & Critical Node Detection, Source Localization \\
CA-GrQc~\citep{leskovec2007graph}                                    & 5,242     & 14,484     & Cascade Reconstruction \\
Oregon1~\citep{leskovec2005graphs}                                   & 10,670    & 22,002     & Epidemic Control \\
PGP~\citep{boguna2004models}                                         & 10,680    & 24,316     & Critical Node Detection \\
Infectious SocioPatterns~\citep{isella2011whats}                     & 10,972    & 44,517     & Epidemic Control \\
NetHEPT~\citep{chen2009efficient}                                    & 15,229    & 62,752     & Influence Maximization, Adaptive Influence Maximization \\
RT-Pol~\citep{conover2011political, rossi2015network}               & 18,470    & 48,053     & Cascade Reconstruction \\
Gnutella24~\citep{ripeanu2002mapping, leskovec2007graph}             & 26,518    & 65,369     & Influence Blocking \\
Cit-HepTh~\citep{leskovec2005graphs, gehrke2003overview}             & 27,769    & 352,768    & Influence Blocking \\
Taoke~\citep{peng2026leakage}                                        & 29,711    & 95,012     & Cascade Prediction \\
Deezer~\citep{rozemberczki2019gemsec}                                & 47,538    & 222,887    & Source Localization \\
Brightkite~\citep{cho2011friendship}                                 & 58,228    & 214,078    & Epidemic Control \\
Gnutella31~\citep{ripeanu2002mapping, leskovec2007graph}             & 62,561    & 147,878    & Critical Node Detection \\
Digg~\citep{zafarani2009social}                                      & 116,893   & 2,011,447  & Influence Maximization, Adaptive Influence Maximization \\
Digg (cascades)~\citep{lerman2010information}                        & 279,630   & 1,731,653  & Cascade Prediction \\
APS~\citep{shen2014modeling, xu2021casflow}                          & 616,316   & 3,304,400  & Cascade Prediction \\
\bottomrule
\end{tabularx}
\caption{The datasets used across the eight tasks, ordered by node count. For the three cascade-prediction corpora, the counts describe the full underlying network, while the runs replay a subsample restricted to the most active participants.}
\label{tab:datasets}
\end{table}

\subsection{Baselines}
\label{app:baselines}

Table~\ref{tab:baselines} lists every baseline that we compare against: five per task, and four for adaptive influence maximization. Entries marked library are our implementations of the published algorithms, and entries marked published code are the authors' released implementations, run in their own environment. Every baseline runs on the same network, budget, and rumor or outbreak, and its returned solution is scored on the shared referee. Learned approaches to these tasks that are not run here include graph neural influence maximization~\citep{panagopoulos2023maximizing} and deep cascade popularity models~\citep{cao2017deephawkes,xu2021casflow}.

\begin{small}
\setlength{\tabcolsep}{3pt}
\begin{longtable}{@{}>{\raggedright\arraybackslash}p{1.9cm}>{\raggedright\arraybackslash}p{2.5cm}>{\raggedright\arraybackslash}p{6.2cm}>{\raggedright\arraybackslash}p{2.4cm}@{}}
\caption{The baseline methods we compare against, what each does, and where it comes from.}
\label{tab:baselines} \\
\toprule
Task & Method & What it does & Reference \\
\midrule
\endfirsthead
\toprule
Task & Method & What it does & Reference \\
\midrule
\endhead
\bottomrule
\endlastfoot
Influence maximization & IMM & Reverse influence sampling with a martingale stopping rule; the standard RIS reference (library) & \citep{tang2015influence} \\
 & OPIM & Online-processing RIS that tightens its bound as samples arrive (published code) & \citep{tang2018online} \\
 & SubSIM & Sublinear-time reverse reachable set generation with tightened bounds (published code) & \citep{guo2020influence} \\
 & DeepIM & Learned seed-set generator with a graph encoder and a spread predictor, trained per network (published code) & \citep{ling2023deep} \\
 & DegreeDiscount & Degree ranking with a discount for neighbors already seeded; the anchor (library) & \citep{chen2009efficient} \\
\midrule
Adaptive influence maximization & EPIC & AdaptGreedy instantiated with reverse influence sampling per round; the anchor (library) & \citep{han2018efficient} \\
 & Adaptive DegreeDiscount & DegreeDiscount recomputed each round on the nodes still susceptible (library) & \citep{chen2009efficient} \\
 & IMM & The non-adaptive seed set, committed in full at $t = 0$ (library) & \citep{tang2015influence} \\
 & Static-Split & One static seed set dealt out over the rounds; the control for the round machinery & this work \\
\midrule
Critical node detection & HDA & Remove the highest-degree node, recompute degrees, repeat (library) & \citep{holme2002attack} \\
 & BPD+R & Belief-propagation-guided decycling with the published reinsertion pass (library) & \citep{mugisha2016identifying} \\
 & CI+R & Collective influence, optimal percolation, with the reinsertion pass (library) & \citep{morone2015influence} \\
 & EI & Explosive immunization, percolation-based targeted removal (library) & \citep{clusella2016immunization} \\
 & Frontier & Delete the susceptible ring of the observed outbreak, highest degree first; the outbreak-aware control & this work \\
\midrule
Epidemic control & DAVA & Data-aware vaccine allocation by a dominator tree on the observed infection state (library) & \citep{zhang2015data} \\
 & NetShield+ & Batched spectral-radius reduction by NetShield's shield value (library) & \citep{chen2015node} \\
 & GreedyWalk & Greedy spectral-radius reduction through closed-walk counts (library) & \citep{saha2015approximation} \\
 & EI & Explosive immunization (published code) & \citep{clusella2016immunization} \\
 & CI & Collective influence (published code) & \citep{morone2015influence} \\
\midrule
Influence blocking & RPS & Greedy maximum coverage over reverse prevention sets (library) & \citep{tong2017efficient} \\
 & Reverse & Reverse reachable set coverage restricted to the rumor's reachable region, TC-AIBM's baseline (library) & \citep{shi2025time} \\
 & Proximity & Seed the rumor's out-neighbors, highest degree first, CLDAG's baseline; the anchor (library) & \citep{he2012influence} \\
 & GreedyReplace & Node blocking by greedy replacement, node lever (published code) & \citep{xie2025influence} \\
 & SandIMIN & Sandwich-bounded node blocking, node lever (published code) & \citep{wang2024efficient} \\
\midrule
Source localization & LPSI & Label propagation; the sources are the local maxima of the converged field; the anchor (library) & \citep{wang2017multiple} \\
 & Rumor Centrality & The founding single-source estimator, applied per infected component (library) & \citep{shah2011rumors} \\
 & Dynamic Age & Spectral multi-source estimator from the dynamical age of nodes (library) & \citep{fioriti2012predicting} \\
 & Infected-Degree & Degree within the infected subgraph (library) & \citep{comin2011identifying} \\
 & SL-VAE & Variational autoencoder over source sets with a learned forward model, run through the GraphSL library (published code) & \citep{ling2022source, wang2024graphsl} \\
\midrule
Cascade reconstruction & Reports & The observed nodes alone with no inferred hidden ones; precision one by construction (library) & \citep{rozenshtein2016reconstructing} \\
 & Steiner Tree & Minimum Steiner tree over the reported nodes, the control of the ordered variants (library) & \citep{xiao2018reconstructing} \\
 & Jordan-Backward & The Jordan center of each observed component as its source, then a greedy backward decode (library) & \citep{zhu2014information} \\
 & DHREC & Diffusion archaeology, maximum-likelihood history reconstruction (library) & \citep{sefer2016diffusion} \\
 & CRI & Cluster the infected subgraph, then reverse-infect to the sources (library) & \citep{chen2016detecting} \\
\midrule
Cascade prediction & Persistence & Predicts no growth after the observation window; the floor & this work \\
 & Szabo-Huberman & Log-linear regression from early to final popularity; the anchor (library) & \citep{szabo2010predicting} \\
 & Hawkes & A marked Hawkes process fitted per cascade (library) & \citep{mishra2016feature} \\
 & RPP & Reinforced Poisson process with fitness, aging and rich-get-richer terms (library) & \citep{shen2014modeling} \\
 & Weng-Communities & Early diffusion across communities as the virality signal (library) & \citep{weng2013virality} \\
\end{longtable}
\end{small}

\section{Choice of Coding Model}
\label{app:coding-model}

We test whether the results depend on the LLM used as the coding agent by repeating the search on Network Science and Power Grid with GPT-5.6 Sol, Terra, and Luna. Table~\ref{tab:change_llm} reports the reward of the algorithm returned by the design loop under each model at every budget. The three models converge to similar performance, and each outperforms the strongest classical baseline, indicating that the gains are not specific to a particular coding model.

\begin{table}[H]
\caption{Algorithm design repeated on GPT-5.6 Sol, Terra, and Luna.}
\label{tab:change_llm}
\centering
\small
\setlength{\tabcolsep}{4pt}
\begin{tabular}{@{}lcccccccc@{}}
\toprule
 & \multicolumn{4}{c}{Network Science (1,589 nodes) ($\uparrow$)} & \multicolumn{4}{c}{Power Grid (4,941 nodes) ($\downarrow$)} \\
\cmidrule(lr){2-5}\cmidrule(lr){6-9}
Model & 1\% & 5\% & 10\% & 20\% & 1\% & 5\% & 10\% & 20\% \\
\midrule
GPT-5.6 Sol & 8.92 & 25.27 & 39.06 & 60.06 & 26.18 & 22.04 & 17.26 & 10.24 \\
GPT-5.6 Terra & 8.89 & 25.23 & 38.98 & 60.01 & 26.19 & 22.05 & 17.28 & 10.25 \\
GPT-5.6 Luna & 8.81 & 25.21 & 38.95 & 60.00 & 26.32 & 22.34 & 17.82 & 10.49 \\
\bottomrule
\end{tabular}
\end{table}

\section{Comparison with LLM Algorithm Discovery Systems}
\label{app:discovery}

The design loop is one of several systems that use an LLM to search over programs. We compare it with four published algorithm discovery systems: EoH~\citep{liu2024evolution}, OpenEvolve, the open-source implementation of AlphaEvolve~\citep{novikov2025alphaevolve}, LLaMEA~\citep{van2024llamea}, and ReEvo~\citep{ye2024reevo}. Each system runs its own search loop, prompts, and published defaults, and is given the same problem as the design loop: the task statement and a fitness function that scores a candidate on the exact simulator. No system sees the Network World Model. All four and the design loop use the same coding model, GPT-5.6 Sol.

\begin{table}[H]
\centering
\caption{Comparison with LLM algorithm discovery systems on influence maximization on Network Science (spread, \% of nodes activated, $\uparrow$) and critical node detection on Power Grid (remaining spread, \% of nodes infected, $\downarrow$), under IC. All rows use GPT-5.6 Sol as the coding model.}
\label{tab:discovery}
\small
\begin{tabular}{@{}l*{8}{c}@{}}
\toprule
 & \multicolumn{4}{c}{Influence maximization ($\uparrow$)} & \multicolumn{4}{c}{Critical node detection ($\downarrow$)} \\
 & \multicolumn{4}{c}{Network Science (1,589 nodes)} & \multicolumn{4}{c}{Power Grid (4,941 nodes)} \\
\cmidrule(lr){2-5}\cmidrule(lr){6-9}
Method & 1\% & 5\% & 10\% & 20\% & 1\% & 5\% & 10\% & 20\% \\
\midrule
EoH        & 5.5 & 19.5 & 32.5 & 54.6 & 26.6 & 23.4 & 19.4 & 14.3 \\
OpenEvolve & 5.1 & 21.9 & 36.8 & 50.8 & 26.6 & 23.8 & 21.4 & 17.7 \\
LLaMEA     & 6.1 & 21.1 & 34.9 & 56.2 & 26.6 & 23.4 & 19.5 & 14.3 \\
ReEvo      & 6.6 & 22.3 & 35.5 & 57.0 & 26.6 & 23.5 & 19.6 & 15.0 \\
\midrule
\textbf{Network World Model} & \textbf{8.9} & \textbf{25.3} & \textbf{39.0} & \textbf{60.0} & \textbf{26.2} & \textbf{22.0} & \textbf{17.3} & \textbf{10.2} \\
\bottomrule
\end{tabular}
\end{table}

Table~\ref{tab:discovery} reports the results. For both influence maximization and critical node detection, the design loop performs best at every budget. None of the four discovery systems reaches the strongest classical baseline on Network Science at any budget (IMM or OPIM, 8.8, 24.8, 38.0, and 59.2 in Table~\ref{tab:results_main}), although EoH and LLaMEA both converged to reverse-reachable-set sampling, the mechanism behind IMM. Therefore, the contributions of the algorithm design loop and the Network World Model are necessary to achieve considerable gains over the baselines.

\section{Full Results}
\label{app:full-results}

\begin{figure}[H]
\centering
\captionsetup[subfloat]{position=bottom,font=scriptsize,farskip=0pt,captionskip=1pt}
\subfloat[IM, Digg (IC)]{\includegraphics[width=0.24\textwidth]{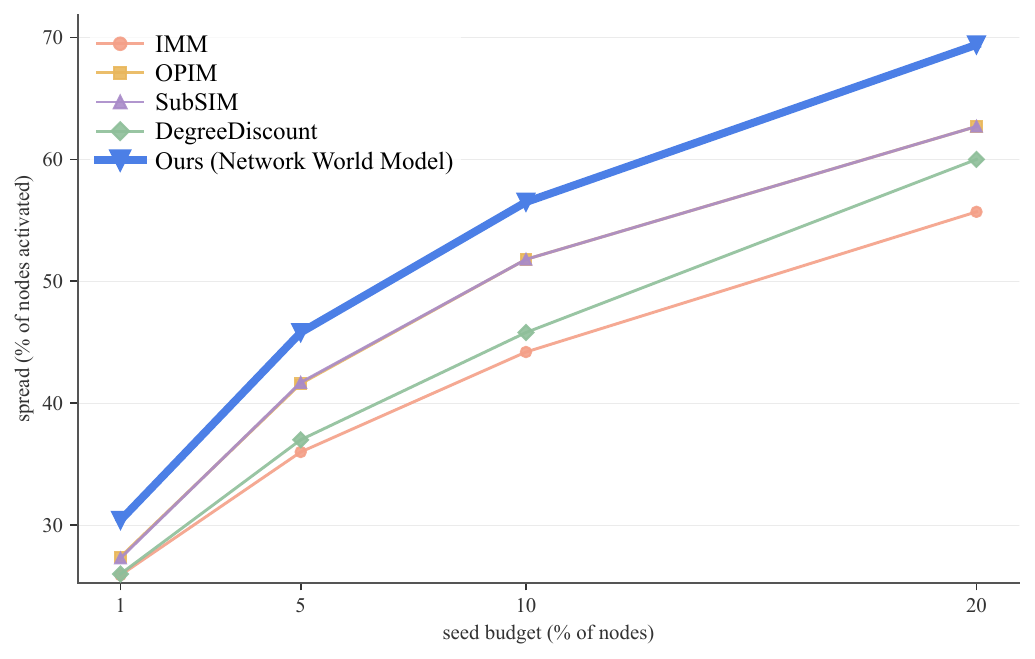}}\hfill
\subfloat[AIM, Digg (IC)]{\includegraphics[width=0.24\textwidth]{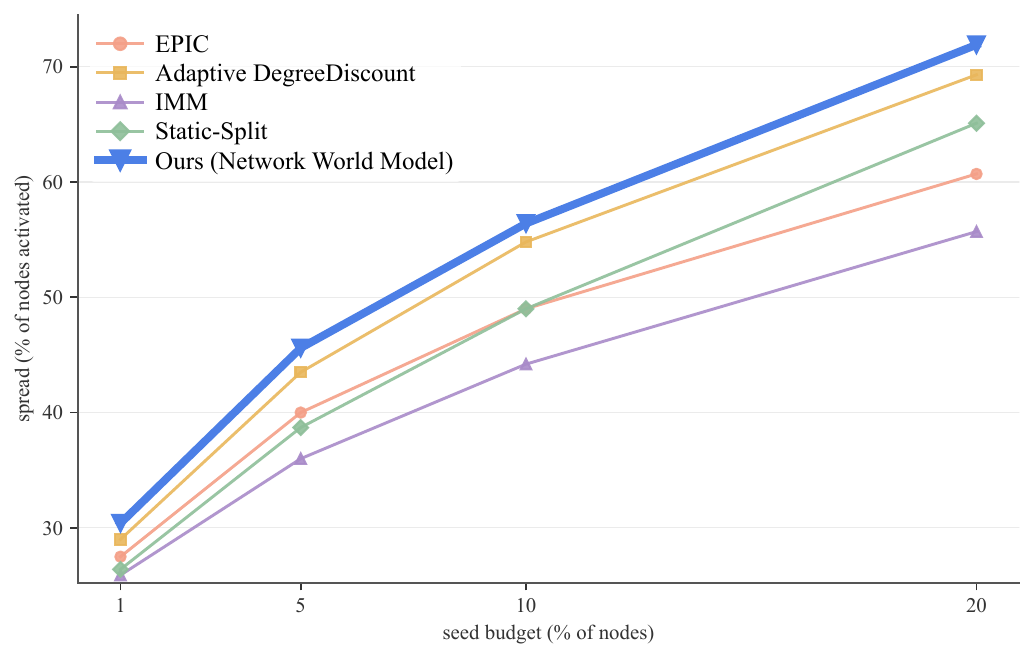}}\hfill
\subfloat[CND, PGP (IC)]{\includegraphics[width=0.24\textwidth]{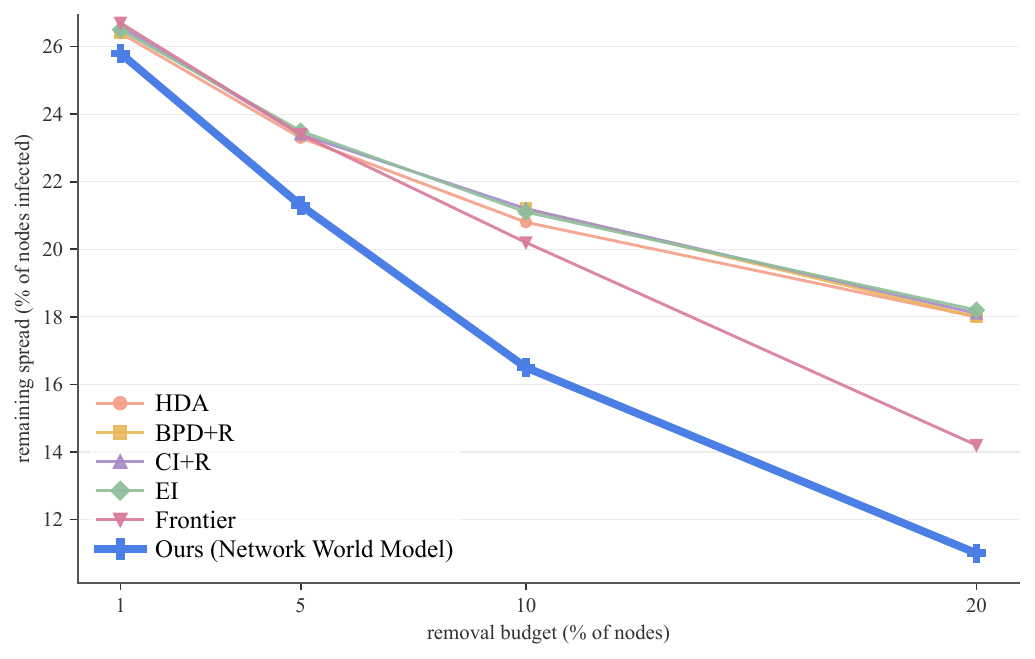}}\hfill
\subfloat[IB, Gnutella24 (IC)]{\includegraphics[width=0.24\textwidth]{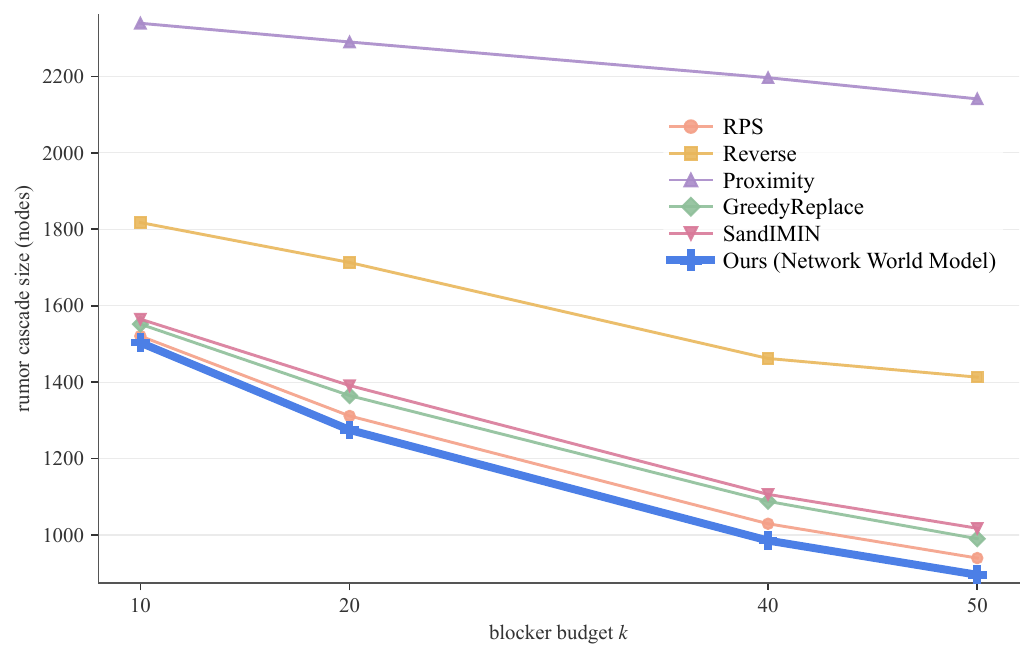}}
\par\vspace{4pt}
\caption{Performance across intervention budgets under IC for influence maximization (IM), adaptive influence maximization (AIM), critical node detection (CND), and influence blocking (IB).}
\label{fig:budget_curves}
\end{figure}

Table~\ref{tab:results_full} reports every end-to-end result behind Section~\ref{sec:exp-design}: all eight tasks and every dataset that was run. The two-dataset view of Table~\ref{tab:results_main} is a subset of it.

\begin{table}[H]
\centering
\caption{Designed algorithms results for all eight tasks and every dataset. Every cell is the IC\,/\,LT pair (IC\,/\,CLT for influence blocking, SIR\,/\,SIS for epidemic control) obtained by replaying the returned solution on the shared oracle referee, so all methods are scored on the same referee. A dash in a cell indicates no result because of either an Out of Memory (OOM) error or a timeout that exceeds the 1,800 s wall-clock limit.}
\label{tab:results_full}
\fontsize{5.6}{6.4}\selectfont
\setlength{\tabcolsep}{1.5pt}
\setlength{\aboverulesep}{0.6pt}
\setlength{\belowrulesep}{0.9pt}
\newlength{\resultsmethodw}
\settowidth{\resultsmethodw}{\textbf{Adaptive DegreeDiscount}}
\newcommand{\resultsgap}{\par\vspace{4pt}}

\begin{tabular*}{\textwidth}{@{}p{\resultsmethodw}@{\extracolsep{\fill}}*{12}{c}@{}}
\toprule
\multicolumn{13}{@{}l}{\textbf{Influence maximization} (IC\,/\,LT): spread, \% of nodes activated ($\uparrow$)} \\
\addlinespace[1pt]
 & \multicolumn{4}{c}{Network Science (1,589 nodes)} & \multicolumn{4}{c}{Digg (116,893 nodes)} & \multicolumn{4}{c}{NetHEPT (15,229 nodes)} \\
\cmidrule(lr){2-5}\cmidrule(lr){6-9}\cmidrule(lr){10-13}
Method & 1\% & 5\% & 10\% & 20\% & 1\% & 5\% & 10\% & 20\% & 1\% & 5\% & 10\% & 20\% \\
\midrule
IMM                               & 8.7\,/\,10.9 & 24.8\,/\,30.2 & 38.0\,/\,45.1 & 59.2\,/\,68.5 & 25.9\,/\,47.0 & 36.0\,/\,60.9 & 44.2\,/\,69.7 & 55.7\,/\,79.6 & 11.6\,/\,14.8 & 28.9\,/\,36.2 & 42.4\,/\,51.6 & 61.7\,/\,72.5 \\
OPIM                              & 8.8\,/\,10.8 & 24.2\,/\,29.5 & 37.7\,/\,45.3 & 58.3\,/\,67.4 & 27.4\,/\,50.5 & 41.6\,/\,68.4 & 51.8\,/\,77.6 & 62.7\,/\,85.5 & 12.1\,/\,15.5 & 31.4\,/\,39.2 & 44.7\,/\,54.3 & 64.5\,/\,75.7 \\
SubSIM                            & 8.7\,/\,10.7 & 24.3\,/\,29.1 & 37.3\,/\,45.0 & 57.8\,/\,68.5 & 27.3\,/\,50.1 & 41.7\,/\,68.4 & 51.8\,/\,77.7 & 62.7\,/\,85.4 & 12.6\,/\,16.2 & 31.3\,/\,39.0 & 44.8\,/\,54.5 & 64.3\,/\,75.4 \\
DeepIM                            & 4.8\,/\,5.3 & 15.3\,/\,20.0 & 27.7\,/\,32.9 & 45.8\,/\,53.0 & \na\,/\,\na & \na\,/\,\na & \na\,/\,\na & \na\,/\,\na & \na\,/\,\na & \na\,/\,\na & \na\,/\,\na & \na\,/\,\na \\
DegreeDiscount                    & 8.3\,/\,10.7 & 23.7\,/\,29.7 & 36.3\,/\,44.5 & 55.0\,/\,65.5 & 26.0\,/\,49.6 & 37.0\,/\,62.7 & 45.8\,/\,70.0 & 60.0\,/\,81.6 & 11.9\,/\,15.9 & 30.3\,/\,38.8 & 43.0\,/\,53.5 & 61.1\,/\,73.2 \\
\midrule
\textbf{Network World Model} & \textbf{8.9}\,/\,\textbf{11.3} & \textbf{25.3}\,/\,\textbf{30.9} & \textbf{39.1}\,/\,\textbf{46.8} & \textbf{60.1}\,/\,\textbf{69.3} & \textbf{30.4}\,/\,\textbf{56.8} & \textbf{45.8}\,/\,\textbf{73.2} & \textbf{56.5}\,/\,\textbf{83.1} & \textbf{69.4}\,/\,\textbf{93.5} & \textbf{12.9}\,/\,\textbf{16.9} & \textbf{33.0}\,/\,\textbf{41.4} & \textbf{47.8}\,/\,\textbf{58.1} & \textbf{68.5}\,/\,\textbf{80.0} \\
\bottomrule
\end{tabular*}
\resultsgap

\begin{tabular*}{\textwidth}{@{}p{\resultsmethodw}@{\extracolsep{\fill}}*{12}{c}@{}}
\toprule
\multicolumn{13}{@{}l}{\textbf{Adaptive influence maximization} (IC\,/\,LT): spread, \% of nodes activated ($\uparrow$)} \\
\addlinespace[1pt]
 & \multicolumn{4}{c}{Network Science (1,589 nodes)} & \multicolumn{4}{c}{Digg (116,893 nodes)} & \multicolumn{4}{c}{NetHEPT (15,229 nodes)} \\
\cmidrule(lr){2-5}\cmidrule(lr){6-9}\cmidrule(lr){10-13}
Method & 1\% & 5\% & 10\% & 20\% & 1\% & 5\% & 10\% & 20\% & 1\% & 5\% & 10\% & 20\% \\
\midrule
EPIC                  & 8.4\,/\,10.8 & 23.1\,/\,28.7 & 34.6\,/\,41.5 & 52.9\,/\,62.3 & 27.5\,/\,48.4 & 40.0\,/\,65.8 & 49.0\,/\,74.0 & 60.7\,/\,82.4 & 11.4\,/\,14.6 & 28.9\,/\,36.5 & 41.2\,/\,51.3 & 60.4\,/\,70.6 \\
Adaptive DegreeDiscount           & 8.2\,/\,10.8 & 23.6\,/\,30.2 & 36.5\,/\,45.8 & 56.9\,/\,69.3 & 29.0\,/\,54.2 & 43.5\,/\,68.9 & 54.8\,/\,79.4 & 69.3\,/\,93.6 & 11.9\,/\,16.4 & 31.6\,/\,41.3 & 46.7\,/\,58.2 & 67.9\,/\,80.1 \\
IMM                               & 8.7\,/\,10.9 & 24.8\,/\,30.2 & 38.0\,/\,45.1 & 59.2\,/\,68.5 & 25.9\,/\,47.0 & 36.0\,/\,60.9 & 44.2\,/\,69.7 & 55.7\,/\,79.6 & 11.6\,/\,14.8 & 28.9\,/\,36.2 & 42.4\,/\,51.6 & 61.7\,/\,72.5 \\
Static-Split                      & 8.3\,/\,10.7 & 24.1\,/\,30.3 & 37.5\,/\,45.9 & 58.1\,/\,69.3 & 26.4\,/\,49.6 & 38.7\,/\,64.5 & 49.0\,/\,73.8 & 65.1\,/\,87.5 & 12.1\,/\,16.0 & 31.0\,/\,39.8 & 44.7\,/\,55.8 & 63.8\,/\,76.8 \\
\midrule
\textbf{Network World Model} & \textbf{8.9}\,/\,\textbf{11.3} & \textbf{25.2}\,/\,\textbf{30.9} & \textbf{39.1}\,/\,\textbf{46.8} & \textbf{60.0}\,/\,\textbf{69.4} & \textbf{30.4}\,/\,\textbf{56.8} & \textbf{45.6}\,/\,\textbf{73.2} & \textbf{56.4}\,/\,\textbf{83.0} & \textbf{71.9}\,/\,\textbf{94.0} & \textbf{12.9}\,/\,\textbf{16.9} & \textbf{33.0}\,/\,\textbf{41.5} & \textbf{47.8}\,/\,\textbf{58.7} & \textbf{68.6}\,/\,\textbf{82.0} \\
\bottomrule
\end{tabular*}
\resultsgap

\begin{tabular*}{\textwidth}{@{}p{\resultsmethodw}@{\extracolsep{\fill}}*{12}{c}@{}}
\toprule
\multicolumn{13}{@{}l}{\textbf{Critical node detection} (IC\,/\,LT): remaining spread, \% of nodes infected ($\downarrow$)} \\
\addlinespace[1pt]
 & \multicolumn{4}{c}{Power Grid (4,941 nodes)} & \multicolumn{4}{c}{PGP (10,680 nodes)} & \multicolumn{4}{c}{Gnutella31 (62,561 nodes)} \\
\cmidrule(lr){2-5}\cmidrule(lr){6-9}\cmidrule(lr){10-13}
Method & 1\% & 5\% & 10\% & 20\% & 1\% & 5\% & 10\% & 20\% & 1\% & 5\% & 10\% & 20\% \\
\midrule
HDA                               & 26.5\,/\,28.9 & 24.0\,/\,27.1 & 21.8\,/\,25.4 & 18.6\,/\,22.3 & 26.4\,/\,31.9 & 23.3\,/\,28.5 & 20.8\,/\,25.6 & 18.0\,/\,21.6 & 31.3\,/\,39.0 & 26.0\,/\,35.0 & 21.9\,/\,30.8 & 17.4\,/\,22.9 \\
BPD+R                             & 26.7\,/\,29.1 & 24.1\,/\,27.5 & 21.9\,/\,25.8 & 18.5\,/\,22.4 & 26.4\,/\,32.0 & \na\,/\,\na & 21.2\,/\,25.8 & 18.0\,/\,21.6 & 31.4\,/\,39.1 & 26.4\,/\,35.5 & 22.5\,/\,31.4 & \na\,/\,\na \\
CI+R                              & 26.7\,/\,29.0 & 24.1\,/\,27.3 & 21.9\,/\,25.7 & 18.5\,/\,22.3 & 26.6\,/\,32.0 & 23.4\,/\,28.5 & 21.2\,/\,25.9 & 18.1\,/\,21.6 & 31.3\,/\,39.1 & \na\,/\,\na & \na\,/\,\na & \na\,/\,\na \\
EI                                & 26.8\,/\,29.2 & 24.5\,/\,27.9 & 22.1\,/\,26.0 & 19.1\,/\,23.2 & 26.5\,/\,32.0 & 23.5\,/\,28.8 & 21.1\,/\,25.9 & 18.2\,/\,22.2 & 32.0\,/\,39.4 & 26.4\,/\,35.3 & 22.1\,/\,31.1 & 17.9\,/\,23.7 \\
Frontier                          & 26.3\,/\,28.6 & 22.7\,/\,25.0 & 18.7\,/\,20.4 & 10.7\,/\,\textbf{10.7} & 26.7\,/\,32.0 & 23.4\,/\,27.9 & 20.2\,/\,23.6 & 14.2\,/\,14.5 & 31.3\,/\,39.0 & 26.0\,/\,34.2 & 21.6\,/\,28.6 & 16.0\,/\,17.6 \\
\midrule
\textbf{Network World Model} & \textbf{26.2}\,/\,\textbf{28.2} & \textbf{22.0}\,/\,\textbf{24.1} & \textbf{17.3}\,/\,\textbf{19.6} & \textbf{10.2}\,/\,\textbf{10.7} & \textbf{25.8}\,/\,\textbf{31.1} & \textbf{21.3}\,/\,\textbf{26.7} & \textbf{16.5}\,/\,\textbf{21.5} & \textbf{11.0}\,/\,\textbf{13.0} & \textbf{31.1}\,/\,\textbf{38.2} & \textbf{24.7}\,/\,\textbf{32.0} & \textbf{19.5}\,/\,\textbf{26.1} & \textbf{12.3}\,/\,\textbf{16.6} \\
\bottomrule
\end{tabular*}
\resultsgap

\begin{tabular*}{\textwidth}{@{}p{\resultsmethodw}@{\extracolsep{\fill}}*{10}{c}@{}}
\toprule
\multicolumn{11}{@{}l}{\textbf{Influence blocking} (IC\,/\,CLT): rumor cascade size (nodes) ($\downarrow$)} \\
\addlinespace[1pt]
 & \multicolumn{5}{c}{Email-EU (1,005 nodes)} & \multicolumn{5}{c}{Gnutella24 (26,518 nodes)} \\
\cmidrule(lr){2-6}\cmidrule(lr){7-11}
Method & 10 & 20 & 30 & 40 & 50 & 10 & 20 & 30 & 40 & 50 \\
\midrule
RPS                               & 60.2\,/\,\textbf{70.9} & 47.4\,/\,56.7 & 39.8\,/\,48.2 & 35.0\,/\,42.8 & 31.9\,/\,39.1 & 1520.6\,/\,1625.3 & 1311.6\,/\,1380.6 & 1151.5\,/\,1206.3 & 1029.6\,/\,1076.5 & 939.8\,/\,980.4 \\
Reverse                           & 61.0\,/\,73.1 & 49.2\,/\,57.9 & 42.1\,/\,49.9 & 37.2\,/\,44.8 & 34.5\,/\,42.1 & 1817.6\,/\,1961.8 & 1713.1\,/\,1856.4 & 1540.1\,/\,1676.8 & 1462.0\,/\,1587.7 & 1413.1\,/\,1535.7 \\
Proximity                         & 63.1\,/\,74.9 & 52.4\,/\,61.2 & 44.6\,/\,53.2 & 39.6\,/\,47.7 & 36.3\,/\,44.2 & 2338.9\,/\,2531.6 & 2289.9\,/\,2481.1 & 2239.7\,/\,2424.9 & 2196.5\,/\,2379.0 & 2141.1\,/\,2319.7 \\
GreedyReplace                     & 92.2\,/\,162.3 & 77.2\,/\,163.5 & 65.9\,/\,158.7 & 60.1\,/\,157.3 & 54.8\,/\,154.9 & 1551.8\,/\,1640.5 & 1364.9\,/\,1449.2 & 1216.8\,/\,1342.3 & 1088.2\,/\,1211.8 & 990.4\,/\,1072.4 \\
SandIMIN                          & 92.5\,/\,160.0 & 77.4\,/\,158.7 & 66.5\,/\,157.4 & 63.7\,/\,157.8 & 54.4\,/\,150.9 & 1564.9\,/\,1633.3 & 1391.2\,/\,1456.3 & 1228.9\,/\,1300.2 & 1106.4\,/\,1173.9 & 1017.8\,/\,1096.9 \\
\midrule
\textbf{Network World Model} & \textbf{59.4}\,/\,71.7 & \textbf{46.3}\,/\,\textbf{55.4} & \textbf{39.4}\,/\,\textbf{47.1} & \textbf{34.5}\,/\,\textbf{42.2} & \textbf{31.0}\,/\,\textbf{38.7} & \textbf{1503.3}\,/\,\textbf{1582.3} & \textbf{1275.0}\,/\,\textbf{1333.8} & \textbf{1113.6}\,/\,\textbf{1154.8} & \textbf{985.6}\,/\,\textbf{1023.0} & \textbf{896.0}\,/\,\textbf{924.7} \\
\bottomrule
\end{tabular*}
\resultsgap

\begin{tabular*}{\textwidth}{@{}p{\resultsmethodw}@{\extracolsep{\fill}}*{5}{c}@{}}
\toprule
\multicolumn{6}{@{}l}{\textbf{Influence blocking}, continued (IC\,/\,CLT): rumor cascade size (nodes) ($\downarrow$)} \\
\addlinespace[1pt]
 & \multicolumn{5}{c}{Cit-HepTh (27,769 nodes)} \\
\cmidrule(lr){2-6}
Method & 10 & 20 & 30 & 40 & 50 \\
\midrule
RPS                               & 882.0\,/\,947.0 & 815.3\,/\,868.8 & 769.5\,/\,819.7 & 737.5\,/\,784.7 & 714.2\,/\,759.9 \\
Reverse                           & 953.0\,/\,1048.3 & 936.6\,/\,1030.6 & 908.8\,/\,997.2 & 867.4\,/\,941.3 & 857.0\,/\,929.2 \\
Proximity                         & 1109.1\,/\,1292.2 & 1107.8\,/\,1290.8 & 1105.6\,/\,1288.1 & 1103.8\,/\,1286.0 & 1101.4\,/\,1282.9 \\
GreedyReplace                     & 897.6\,/\,971.7 & 835.2\,/\,905.0 & 795.8\,/\,857.1 & 766.9\,/\,822.4 & 742.6\,/\,797.5 \\
SandIMIN                          & 903.9\,/\,978.0 & 847.4\,/\,905.0 & 796.4\,/\,872.3 & 767.4\,/\,825.1 & 743.0\,/\,799.2 \\
\midrule
\textbf{Network World Model} & \textbf{879.1}\,/\,\textbf{942.2} & \textbf{810.7}\,/\,\textbf{862.7} & \textbf{767.5}\,/\,\textbf{815.9} & \textbf{736.3}\,/\,\textbf{783.5} & \textbf{712.2}\,/\,\textbf{755.7} \\
\bottomrule
\end{tabular*}
\resultsgap

\end{table}

\begin{table}[H]
\centering
\addtocounter{table}{-1}
\caption{Designed algorithms results for all eight tasks and every dataset (continued).}
\fontsize{5.6}{6.4}\selectfont
\setlength{\tabcolsep}{1.5pt}
\setlength{\aboverulesep}{0.6pt}
\setlength{\belowrulesep}{0.9pt}
\settowidth{\resultsmethodw}{\textbf{Adaptive DegreeDiscount}}
\newcommand{\resultsgap}{\par\vspace{4pt}}

\begin{tabular*}{\textwidth}{@{}p{\resultsmethodw}@{\extracolsep{\fill}}*{12}{c}@{}}
\toprule
\multicolumn{13}{@{}l}{\textbf{Epidemic control} (SIR\,/\,SIS): attack rate, \% of nodes ever infected ($\downarrow$)} \\
\addlinespace[1pt]
 & \multicolumn{4}{c}{Infectious SocioPatterns (10,972 nodes)} & \multicolumn{4}{c}{Oregon1 (10,670 nodes)} & \multicolumn{4}{c}{Brightkite (58,228 nodes)} \\
\cmidrule(lr){2-5}\cmidrule(lr){6-9}\cmidrule(lr){10-13}
Method & 1\% & 5\% & 10\% & 20\% & 1\% & 5\% & 10\% & 20\% & 1\% & 5\% & 10\% & 20\% \\
\midrule
DAVA                              & \textbf{19.9}\,/\,23.0 & \textbf{7.4}\,/\,8.3 & \textbf{1.0}\,/\,\textbf{1.0} & \textbf{1.0}\,/\,\textbf{1.0} & 7.9\,/\,9.6 & \textbf{1.0}\,/\,\textbf{1.0} & \textbf{1.0}\,/\,\textbf{1.0} & \textbf{1.0}\,/\,\textbf{1.0} & 40.9\,/\,47.1 & \na\,/\,\na & \na\,/\,\na & \na\,/\,\na \\
NetShield+                        & 21.2\,/\,24.9 & 17.3\,/\,20.6 & 12.8\,/\,15.6 & 7.4\,/\,8.9 & 4.3\,/\,5.4 & 2.1\,/\,2.3 & 1.8\,/\,1.9 & 1.6\,/\,1.7 & 24.1\,/\,29.4 & 8.5\,/\,10.5 & 4.5\,/\,5.4 & 2.7\,/\,3.0 \\
GreedyWalk                        & 20.9\,/\,24.6 & 16.5\,/\,19.8 & 12.3\,/\,14.9 & 7.0\,/\,8.5 & 4.8\,/\,5.9 & 2.5\,/\,2.8 & 1.9\,/\,2.1 & 1.6\,/\,1.7 & \textbf{23.2}\,/\,28.4 & 8.4\,/\,10.5 & 4.6\,/\,5.5 & 2.8\,/\,3.1 \\
EI                                & 20.7\,/\,24.4 & 16.3\,/\,19.3 & 11.5\,/\,13.6 & 8.1\,/\,9.5 & 4.2\,/\,5.2 & 2.0\,/\,2.1 & 1.7\,/\,1.8 & 1.6\,/\,1.6 & 23.5\,/\,28.7 & 8.7\,/\,10.9 & 4.5\,/\,5.3 & 2.8\,/\,3.1 \\
CI                                & 20.7\,/\,24.3 & 16.1\,/\,18.9 & 12.4\,/\,14.6 & 6.9\,/\,8.1 & 4.2\,/\,5.2 & 2.0\,/\,2.1 & 1.7\,/\,1.8 & 1.6\,/\,1.6 & 26.0\,/\,31.1 & 10.2\,/\,12.3 & 4.7\,/\,5.5 & 2.7\,/\,3.0 \\
\midrule
\textbf{Network World Model} & 20.4\,/\,\textbf{19.5} & 12.2\,/\,\textbf{4.1} & \textbf{1.0}\,/\,\textbf{1.0} & \textbf{1.0}\,/\,\textbf{1.0} & \textbf{2.4}\,/\,\textbf{4.3} & \textbf{1.0}\,/\,\textbf{1.0} & \textbf{1.0}\,/\,\textbf{1.0} & \textbf{1.0}\,/\,\textbf{1.0} & \textbf{23.2}\,/\,\textbf{24.5} & \textbf{1.4}\,/\,\textbf{1.5} & \textbf{1.0}\,/\,\textbf{1.0} & \textbf{1.0}\,/\,\textbf{1.0} \\
\bottomrule
\end{tabular*}
\resultsgap

\begin{tabular*}{\textwidth}{@{}p{\resultsmethodw}@{\extracolsep{\fill}}*{3}{c}@{}}
\toprule
\multicolumn{4}{@{}l}{\textbf{Source localization} (IC\,/\,LT): consistency of the recovered sources with the observation, 0 is perfect ($\uparrow$)} \\
\addlinespace[1pt]
Method & Cora-ML (2,810 nodes) & Power Grid (4,941 nodes) & Deezer (47,538 nodes) \\
\midrule
LPSI                              & \textminus{}0.173\,/\,\textminus{}0.130 & \textminus{}0.106\,/\,\textminus{}0.100 & \textminus{}0.176\,/\,\textminus{}0.174 \\
Rumor Centrality                  & \textminus{}0.237\,/\,\textminus{}0.257 & \textminus{}0.242\,/\,\textminus{}0.260 & \textminus{}0.190\,/\,\textminus{}0.200 \\
Dynamic Age                       & \textminus{}0.200\,/\,\textminus{}0.183 & \textminus{}0.234\,/\,\textminus{}0.274 & \textminus{}0.178\,/\,\textminus{}0.183 \\
Infected-Degree                   & \textminus{}0.173\,/\,\textminus{}0.154 & \textminus{}0.116\,/\,\textminus{}0.119 & \textminus{}0.178\,/\,\textminus{}0.186 \\
SL-VAE                            & \textminus{}0.370\,/\,\textminus{}0.437 & \textminus{}0.306\,/\,\textminus{}0.323 & \textminus{}0.314\,/\,\textminus{}0.240 \\
\midrule
\textbf{Network World Model} & \textbf{\textminus{}0.139}\,/\,\textbf{\textminus{}0.106} & \textbf{\textminus{}0.080}\,/\,\textbf{\textminus{}0.073} & \textbf{\textminus{}0.147}\,/\,\textbf{\textminus{}0.145} \\
\bottomrule
\end{tabular*}
\resultsgap

\begin{tabular*}{\textwidth}{@{}p{\resultsmethodw}@{\extracolsep{\fill}}*{3}{c}@{}}
\toprule
\multicolumn{4}{@{}l}{\textbf{Cascade reconstruction} (IC\,/\,LT): referee reward of the decoded history, 0 is perfect ($\uparrow$)} \\
\addlinespace[1pt]
Method & UCI Students (1,266 nodes) & CA-GrQc (5,242 nodes) & RT-Pol (18,470 nodes) \\
\midrule
Reports                           & \textminus{}0.852\,/\,\textminus{}0.946 & \textminus{}0.791\,/\,\textminus{}0.692 & \textminus{}1.252\,/\,\textminus{}1.068 \\
Steiner Tree                      & \textminus{}0.901\,/\,\textminus{}0.978 & \textminus{}0.798\,/\,\textminus{}0.753 & \textminus{}2.618\,/\,\textminus{}2.076 \\
Jordan-Backward                   & \textminus{}0.852\,/\,\textminus{}0.946 & \textminus{}0.791\,/\,\textminus{}0.692 & \textminus{}1.252\,/\,\textminus{}1.068 \\
DHREC                             & \textminus{}1.464\,/\,\textminus{}1.525 & \textminus{}1.175\,/\,\textminus{}1.176 & \textminus{}2.471\,/\,\textminus{}2.594 \\
CRI                               & \textminus{}0.885\,/\,\textminus{}0.995 & \textminus{}1.106\,/\,\textminus{}1.151 & \textminus{}0.785\,/\,\textminus{}0.782 \\
\midrule
\textbf{Network World Model} & \textbf{\textminus{}0.675}\,/\,\textbf{\textminus{}0.717} & \textbf{\textminus{}0.566}\,/\,\textbf{\textminus{}0.566} & \textbf{\textminus{}0.391}\,/\,\textbf{\textminus{}0.373} \\
\bottomrule
\end{tabular*}
\resultsgap

\begin{tabular*}{\textwidth}{@{}p{\resultsmethodw}@{\extracolsep{\fill}}*{3}{c}@{}}
\toprule
\multicolumn{4}{@{}l}{\textbf{Cascade prediction}: MSLE at the prediction horizon ($\downarrow$)} \\
\addlinespace[1pt]
Method & APS (30,000 nodes) & Taoke (29,711 nodes) & Digg (5,000 nodes) \\
\midrule
Persistence                       & 0.288 & 1.101 & 4.222 \\
Szabo-Huberman                    & 0.098 & 0.716 & 0.834 \\
Hawkes                            & 0.141 & 0.957 & 2.744 \\
RPP                               & 0.116 & 1.009 & 3.283 \\
Weng-Communities                  & 0.196 & 0.850 & 1.142 \\
\midrule
\textbf{Network World Model} & \textbf{0.078} & \textbf{0.544} & \textbf{0.774} \\
\bottomrule
\end{tabular*}
\end{table}

\subsection{Cross-Network Transfer}
\label{app:transfer}

Table~\ref{tab:transfer} gives the per-budget results behind Figure~\ref{fig:transfer}, together with the influence maximization and adaptive influence maximization transfers. For each task, we replay the algorithm designed on one network unmodified on the task's other networks and compare it with the algorithm designed directly on each target; the two agree within $0.1$ percentage points in every completed cell.

\begin{table}[H]
\centering
\caption{Generalizability of the designed algorithms. The algorithm designed on one graph is replayed unmodified on the task's other graphs. Designed on the target is that graph's own returned program, as in Tables~\ref{tab:results_main} and~\ref{tab:results_full}. The NetHEPT adaptive program exceeded the 1,800 s wall-clock limit for one call on Digg at every budget.}
\label{tab:transfer}
\small
\begin{tabular}{@{}l*{8}{c}@{}}
\toprule
\multicolumn{9}{@{}l}{\textbf{Influence maximization} (IC), program designed on Digg ($\uparrow$)} \\
\addlinespace[2pt]
 & \multicolumn{4}{c}{Network Science (1,589)} & \multicolumn{4}{c}{NetHEPT (15,229)} \\
\cmidrule(lr){2-5}\cmidrule(lr){6-9}
Algorithm & 1\% & 5\% & 10\% & 20\% & 1\% & 5\% & 10\% & 20\% \\
\midrule
Designed on the target       & 8.9 & 25.3 & 39.1 & 60.1 & 12.9 & 33.0 & 47.8 & 68.5 \\
Transferred from Digg        & 8.9 & 25.3 & 39.1 & 60.0 & 12.9 & 33.0 & 47.8 & 68.6 \\
\bottomrule
\addlinespace[0.8em]
\toprule
\multicolumn{9}{@{}l}{\textbf{Adaptive influence maximization} (IC), program designed on NetHEPT ($\uparrow$)} \\
\addlinespace[2pt]
 & \multicolumn{4}{c}{Network Science (1,589)} & \multicolumn{4}{c}{Digg (116,893)} \\
\cmidrule(lr){2-5}\cmidrule(lr){6-9}
Algorithm & 1\% & 5\% & 10\% & 20\% & 1\% & 5\% & 10\% & 20\% \\
\midrule
Designed on the target       & 8.9 & 25.2 & 39.1 & 60.0 & 30.4 & 45.6 & 56.4 & 71.9 \\
Transferred from NetHEPT     & 8.9 & 25.3 & 39.1 & 60.1 & \na & \na & \na & \na \\
\bottomrule
\addlinespace[0.8em]
\toprule
\multicolumn{9}{@{}l}{\textbf{Critical node detection} (IC), program designed on PGP ($\downarrow$)} \\
\addlinespace[2pt]
 & \multicolumn{4}{c}{Power Grid (4,941)} & \multicolumn{4}{c}{Gnutella31 (62,561)} \\
\cmidrule(lr){2-5}\cmidrule(lr){6-9}
Algorithm & 1\% & 5\% & 10\% & 20\% & 1\% & 5\% & 10\% & 20\% \\
\midrule
Designed on the target       & 26.2 & 22.1 & 17.4 & 10.2 & 31.1 & 24.7 & 19.5 & 12.3 \\
Transferred from PGP         & 26.2 & 22.1 & 17.3 & 10.2 & 31.1 & 24.7 & 19.5 & 12.3 \\
\bottomrule
\end{tabular}
\end{table}

The one incomplete transfer is the adaptive influence maximization algorithm designed on NetHEPT, which exceeds the $1{,}800$ s per-call limit on the $116{,}893$-node Digg network; its cost, not its decision rule, fails to transfer.

\section{Additional Network World Model Experiments}
\label{app:wm-study}

\subsection{World Model Setup}
\label{app:wm-setup}

\paragraph{Data and model.} The IC training suites are network-disjoint: BA-24 holds 24 Barab\'asi-Albert networks with $N=100$, and SBM-24 holds 24 stochastic block model networks with $N=120$; LT is trained on BA-24. The default setting uses $w_{uv}=1/\mathrm{indeg}(v)$. A random-weight setting draws $\textnormal{w}_{uv}\sim U(0.02,0.4)$, so that the transmission probability cannot be reconstructed from the network structure, and is evaluated both with the weights visible to the model and with them hidden. Held-out BA and SBM networks are used for the decision-value tests, WS, ER, and SBM networks for topology shift, and BA networks up to $N=1000$ for size transfer. Training and evaluation networks are disjoint whenever we make a cross-network claim. The model is the GraphSAGE encoder with the structured transition head of Section~\ref{sec:world-model}, at $34{,}113$ parameters.

\paragraph{Targets and metrics.} For IC, each recorded transition stores a soft next-state target estimated from 30 simulator draws, together with one realized next state; for LT, the soft target is the analytic one-step conditional hazard. Probabilistic metrics are computed on at-risk nodes only: the KL divergence to the soft target and the Brier score $(y-\hat y)^2$, and, on realized states, NLL and UROC. Multi-step quality uses the sampled rollout of Eq.~(\ref{eq:wm-rollout}) and reports the rollout Brier score and the relative bias of the final spread, $(\hat{R}-R)/R$, at $H=10$. To measure whether the model preserves the effect of an intervention, and not only the marginal state distribution, we compare counterfactual effects with the exact transition mechanism,
\begin{equation}
d^{\mathrm{pred}}_v=
\hat p_v(\bm{s},a)-\hat p_v(\bm{s},a'),
\qquad
d^{\mathrm{exact}}_v=
p_v^{\mathrm{exact}}(\bm{s},a)-p_v^{\mathrm{exact}}(\bm{s},a'),
\label{eq:exact-effect}
\end{equation}
and report the normalized MAE and the Pearson correlation on nodes with a non-zero exact effect. Using the exact mechanism avoids treating Monte Carlo noise as ground truth.

\subsubsection{Scale and Topology Transfer}
\label{app:wm-transfer}

Table~\ref{tab:wm_ood} freezes the BA-100 IC checkpoint and evaluates it without retraining, changing only the evaluation network. The model transfers to networks up to $10\times$ larger and to WS, ER, and SBM topologies: one-step KL stays between $0.0036$ and $0.0063$, and degradation appears in long-horizon spread calibration, not in one-step discrimination. These runs keep the edge weights visible to the model, so the structured head can read the transmission probability from its input.

\begin{table}[H]
\caption{Frozen-checkpoint generalization across network size and topology. ``Exact Brier'' compares the predicted one-step probability with the analytic IC probability. The rollout column reports the model's Brier score and the noise floor of the exact process at $H=10$, and spread bias reports the same pair. The counterfactual effect is measured against the analytic intervention effect. Arrows give the better direction; spread bias is signed, so closer to zero is better.}
\label{tab:wm_ood}
\centering
\scriptsize
\resizebox{\textwidth}{!}{%
\begin{tabular}{@{}llcccccc@{}}
\toprule
Shift & Target & KL ($\downarrow$) & Exact Brier ($\downarrow$) & AUROC ($\uparrow$) & $H{=}10$ Brier ($\downarrow$) & Spread bias ($\rightarrow 0$) & Effect MAE$_{\mathrm{norm}}$ ($\downarrow$) / Pearson ($\uparrow$) \\
\midrule
None & BA-100 & 0.0059 & 0.00011 & 0.923 & 0.0083 / 0.0078 & $+0.2\%$ / $-0.4\%$ & 0.0095 / 1.0000 \\
Size $2\times$ & BA-200 & 0.0048 & 0.00002 & 0.934 & 0.0076 / 0.0072 & $+1.6\%$ / $-0.5\%$ & 0.0103 / 0.9999 \\
Size $5\times$ & BA-500 & 0.0043 & 0.00002 & 0.943 & 0.0079 / 0.0064 & $+3.0\%$ / $0.0\%$ & 0.0098 / 0.9999 \\
Size $10\times$ & BA-1000 & 0.0041 & 0.00003 & 0.948 & 0.0075 / 0.0066 & $+2.7\%$ / $-0.3\%$ & 0.0129 / 0.9999 \\
Topology & WS-100 & 0.0036 & 0.00000 & 0.932 & 0.0074 / 0.0073 & $-0.9\%$ / $-1.2\%$ & 0.0088 / 1.0000 \\
Topology & ER-100 & 0.0051 & 0.00023 & 0.930 & 0.0093 / 0.0081 & $-2.5\%$ / $-0.8\%$ & 0.0131 / 0.9995 \\
Topology & SBM-100 & 0.0063 & 0.00084 & 0.946 & 0.0113 / 0.0087 & $-5.0\%$ / $-0.9\%$ & 0.0299 / 0.9951 \\
\bottomrule
\end{tabular}%
}
\end{table}

\subsubsection{Transition Head and Action Conditioning}
\label{app:wm-ablation}

Comparing the structured transition head with a linear one in the world model reveals that encoding the known form of the process in the readout matters most over long rollouts: the linear head leaves one-step KL almost unchanged ($0.0061$ against $0.0059$), but raises the spread bias at $H=10$ from $+1.7\%$ to $+8.7\%$ and the normalized error of the predicted action effect from $0.0105$ to $0.0398$, so encoding the known form of the process matters over a horizon and not at one step. By freezing a checkpoint and changing only the evaluation network, the model transfers to networks ten times larger and to WS, ER, and SBM topologies with one-step KL between $0.0036$ and $0.0063$, and what degrades is long-horizon spread calibration.

We also evaluate the world model on two benchmark networks: Network Science under IC and Power Grid under IC. The metrics are those of Table~\ref{tab:wm_main}, with two additions from the action-conditioning evaluation: the drop in one-step $\Delta$F1 when the action channels are zeroed at evaluation, which is zero for a model that ignores its actions, and the Pearson correlation between the predicted effect of an action and the effect measured on the Monte Carlo simulator's counterfactual forks.

\subsection{World Model Encoder Ablation}
\label{app:wm-encoder}

Table~\ref{tab:wm_variants} compares three encoders against a GraphSAGE checkpoint trained on the data and with the settings of the end-to-end runs. Every row keeps the data, the structured head, and the training budget, and changes the backbone; because only GraphSAGE supports per-edge message conditioning, every row in this table is trained without it, so the GraphSAGE row is a retrained reference, not the checkpoint from the end-to-end runs. On both networks, the four encoders are within $0.002$ $\Delta$F1 of one another, and the final count bias of every rollout stays within $3\%$, so the structured head, not the encoder, carries the result.

\begin{table}[H]
\caption{World-model encoders on Network Science. Each row changes only the encoder against the first row. Count bias is the relative bias of the final count of a free-running sampled rollout at the horizon; ``$\Delta$F1 drop'' is the loss in one-step $\Delta$F1 when the action channels are zeroed at evaluation; ``Effect corr.'' is the Pearson correlation between the predicted and the measured effect of an action.}
\label{tab:wm_variants}
\centering
\small
\begin{tabular}{@{}lcccccc@{}}
\toprule
Variant & $\Delta$F1 ($\uparrow$) & Brier ($\downarrow$) & Count bias & $\Delta$F1 drop & Effect corr. ($\uparrow$) & Train (min) \\
\midrule
GraphSAGE & 0.8765 & 0.0003 & $-0.5\%$ & 0.792 & 0.849 & 6.5 \\
GCN & 0.8750 & 0.0003 & $-1.4\%$ & 0.810 & 0.849 & 8.8 \\
GATv2 & 0.8756 & 0.0004 & $-2.3\%$ & 0.812 & 0.849 & 11.4 \\
GCNII & 0.8745 & 0.0004 & $-1.6\%$ & 0.796 & 0.848 & 20.8 \\
\bottomrule
\end{tabular}
\end{table}

\begin{figure}[H]
\centering
\includegraphics[width=0.49\textwidth]{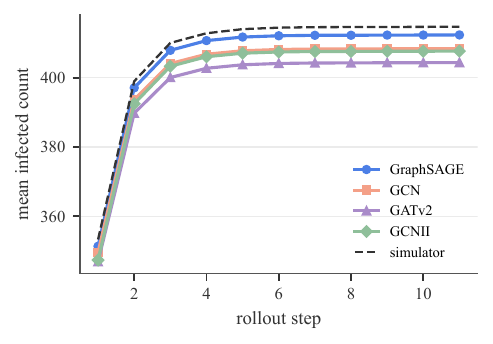}
\includegraphics[width=0.49\textwidth]{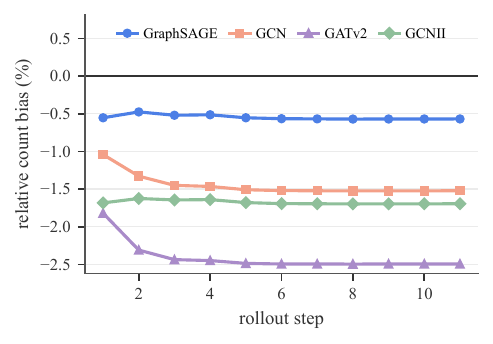}
\caption{The four encoders on Network Science. Left: mean infected count of the free-running sampled rollout at each step, against the simulator (dashed). Right: the same rollouts as relative count bias by step; the bias settles within the first five steps on every backbone and does not grow with the horizon.}
\label{fig:wm_backbone_bias}
\end{figure}

\subsection{Error Against Horizon}
\label{app:error-horizon}

Table~\ref{tab:wm_horizon} and Figure~\ref{fig:wm_horizon} report the relative count bias of the free-running rollout of the end-to-end checkpoints by step, on the 50 held-out episodes of the main runs. On both networks, the bias settles within the first two to five steps and then stays put, so one-step error does not compound over a rollout. Power Grid's rollout overcounts by about 3\%, a constant offset the search tolerates because it shifts every candidate's score equally.

\begin{table}[H]
\caption{Relative count bias of the free-running sampled rollout against the Monte Carlo simulator, by rollout step, for the checkpoints used in the end-to-end runs, on the 50 held-out episodes of each run. Zero means no bias; a magnitude that grows with the step would mean compounding error.}
\label{tab:wm_horizon}
\centering
\small
\begin{tabular}{@{}lcccc@{}}
\toprule
Network & Step 1 & Step 2 & Step 5 & Step 10 \\
\midrule
Network Science & $+0.0\%$ & $+0.1\%$ & $+0.1\%$ & $+0.1\%$ \\
Power Grid      & $+1.6\%$ & $+2.4\%$ & $+3.2\%$ & $+3.3\%$ \\
\bottomrule
\end{tabular}
\end{table}

\begin{figure}[H]
\centering
\includegraphics[width=\textwidth]{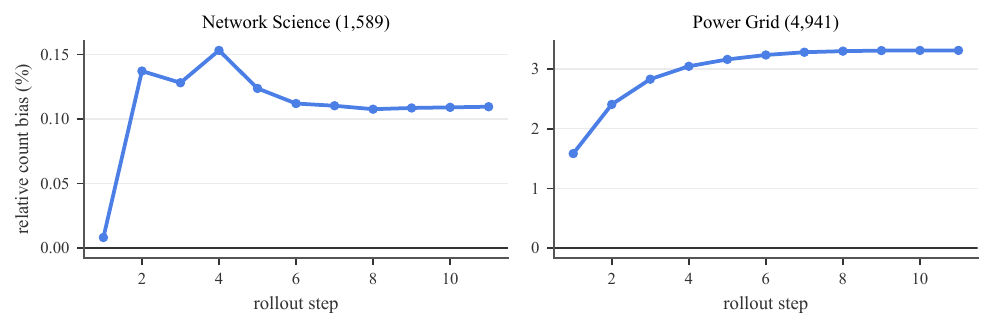}
\caption{Relative count bias of the free-running sampled rollout by step on the two benchmark networks, on the 50 held-out episodes of the main runs. The bias settles within the first few steps on both networks and does not grow.}
\label{fig:wm_horizon}
\end{figure}

\section{Examples of Designed Algorithms}
\label{app:examples}

We highlight four algorithms returned by the algorithm design loop for four different tasks: critical node detection, influence maximization, source localization, and influence blocking. Each is the program returned for its task and network under IC, and it is reported in Tables~\ref{tab:results_main} and~\ref{tab:results_full}. For each program, we provide a short description, pseudocode, results, and a Python source-code excerpt. Shaded lines in the pseudocode mark the steps that set the program apart from the classical methods in the library (Appendix~\ref{app:library}), the description sets the same steps in bold, and the excerpt shades the code lines that implement them.

\subsection{Critical Node Detection: Ensemble Cavity-Gradient Removal}
\label{app:example-cnd}

\paragraph{Description.} The program removes nodes greedily, but it ranks them by a quantity the dismantling methods in the library do not compute: a gradient conditioned on the outbreak, which estimates to first order how many infections at the horizon the removal of each node prevents. The forward part \textbf{runs non-backtracking cavity messages from the outbreak sources for $H$ steps}, where the message on $u\to v$ is the probability that $u$ is infected without help from $v$. The backward part \textbf{propagates adjoints back through the same $H$ steps}, so a node's score counts both its own infection and the infections its outgoing messages cause downstream. HDA, CI and EI rank nodes without regard to where the outbreak starts, and Frontier looks only at the ring around the sources; the gradient accounts for both the source location and the horizon. The program \textbf{computes the gradient under two models}, the given transmission probabilities and a copy capped at $0.9$ that tempers edges transmitting almost surely, and ranks nodes by an \textbf{ensemble score that weights the capped model more and adds a bonus for disagreement between the two}, so a node one model undervalues is not averaged away. Recomputing the gradient after every removal would take $k$ passes, so the program \textbf{removes six nodes per gradient pass and shrinks the scores of each removed node's neighbors} in between, which keeps one batch from spending two removals on the same route.

\begin{algorithm}[H]
\caption{Ensemble cavity-gradient removal, designed for critical node detection on PGP under IC.}
\label{alg:example-cnd}
\begin{algorithmic}[1]
\REQUIRE network $\mathcal{G}$ with transmission probabilities $p_{uv}$, outbreak sources $\mathbb{S}$, budget $k$, horizon $H$
\STATE $\bar p_{uv}\leftarrow\min(p_{uv},0.9)$; \ $\mathbb{R}\leftarrow\emptyset$ \COMMENT{capped probabilities; removed nodes}
\WHILE{$|\mathbb{R}|<k$}
\STATE \keystep{$\bm{g}^{\mathrm{cap}}\leftarrow\textsc{CavityGradient}(\bar p,\mathbb{R})$, \ $\bm{g}^{\mathrm{orig}}\leftarrow\textsc{CavityGradient}(p,\mathbb{R})$}
\FOR{up to six removals}
\STATE \keystep{$\mathrm{score}_v\leftarrow0.65\,g^{\mathrm{cap}}_v+0.35\,g^{\mathrm{orig}}_v+0.10\,\lvert g^{\mathrm{cap}}_v-g^{\mathrm{orig}}_v\rvert$ for every $v\notin\mathbb{S}\cup\mathbb{R}$}
\STATE $v^\star\leftarrow\operatorname*{arg\,max}_v\mathrm{score}_v$; \ $\mathbb{R}\leftarrow\mathbb{R}\cup\{v^\star\}$
\STATE \keystep{in both models, shrink the neighbors' scores: $g_w\leftarrow g_w(1-0.55\,p_{v^\star w})$ for out-neighbors and $g_w\leftarrow g_w(1-0.35\,p_{wv^\star})$ for in-neighbors}
\ENDFOR
\ENDWHILE
\RETURN remove every $v\in\mathbb{R}$ at $t=0$
\STATE \textbf{function} \textsc{CavityGradient}$(p,\mathbb{R})$, on $\mathcal{G}$ with the nodes in $\mathbb{R}$ deleted:
\STATE \keystep{forward: run non-backtracking cavity messages from $\mathbb{S}$ for $H$ steps and read each node's probability of being infected by step $H$}
\STATE \keystep{backward: propagate adjoints through the $H$ steps; $g_v$ is $v$'s infection probability plus the downstream infections its outgoing messages cause}
\STATE \textbf{return} $\bm{g}$
\end{algorithmic}
\end{algorithm}

\paragraph{Results.} At the $10\%$ budget on PGP, the program leaves $16.5\%$ of the nodes infected, against $20.2\%$ for Frontier, the strongest baseline, and the same program gives the $5\%$ and $20\%$ cells ($21.3\%$ and $11.0\%$). Replayed unchanged, it leaves $17.3\%$ infected on Power Grid and $19.5\%$ on Gnutella31 at the $10\%$ budget, matching the programs designed on those networks (Table~\ref{tab:transfer}).

\paragraph{Excerpt.} The selection loop, with the ensemble score and the neighbor discount:
\codeblockstart
\codeplain{while~len(chosen)~<~count:}
\codekey{~~~~capped\codeus{}scores~=~cavity\codeus{}gradient(capped\codeus{}p)}
\codekey{~~~~actual\codeus{}scores~=~cavity\codeus{}gradient(actual\codeus{}p)}
\codekey{~~~~batch~=~min(6,~count~-~len(chosen))}
\codeplain{~~~~for~\codeus{}~in~range(batch):}
\codeplain{~~~~~~~~\char35{}~Modestly~favor~cuts~whose~value~differs~between~models,}
\codeplain{~~~~~~~~\char35{}~rather~than~allowing~averaging~to~conceal~a~useful~gateway.}
\codekey{~~~~~~~~scores~=~(}
\codekey{~~~~~~~~~~~~0.65~*~capped\codeus{}scores}
\codekey{~~~~~~~~~~~~+~0.35~*~actual\codeus{}scores}
\codekey{~~~~~~~~~~~~+~0.10~*~np.abs(capped\codeus{}scores~-~actual\codeus{}scores)}
\codekey{~~~~~~~~)}
\codeplain{~~~~~~~~scores[\char126{}eligible]~=~-np.inf}
\codeplain{~~~~~~~~node~=~int(np.argmax(scores))}
\codeplain{~~~~~~~~if~not~np.isfinite(scores[node]):}
\codeplain{~~~~~~~~~~~~break}
\codeplain{~~~~~~~~chosen.append(node)}
\codeplain{~~~~~~~~alive[node]~=~0.0}
\codeplain{~~~~~~~~eligible[node]~=~False}
\codeplain{~~~~~~~~out\codeus{}ids~=~outgoing\codeus{}order[}
\codeplain{~~~~~~~~~~~~outgoing\codeus{}ptr[node]:outgoing\codeus{}ptr[node~+~1]}
\codeplain{~~~~~~~~]}
\codeplain{~~~~~~~~in\codeus{}ids~=~incoming\codeus{}order[}
\codeplain{~~~~~~~~~~~~incoming\codeus{}ptr[node]:incoming\codeus{}ptr[node~+~1]}
\codeplain{~~~~~~~~]}
\codeplain{~~~~~~~~for~model\codeus{}scores,~model\codeus{}p~in~(}
\codeplain{~~~~~~~~~~~~(capped\codeus{}scores,~capped\codeus{}p),}
\codeplain{~~~~~~~~~~~~(actual\codeus{}scores,~actual\codeus{}p),}
\codeplain{~~~~~~~~):}
\codekey{~~~~~~~~~~~~np.multiply.at(}
\codekey{~~~~~~~~~~~~~~~~model\codeus{}scores,}
\codekey{~~~~~~~~~~~~~~~~v[out\codeus{}ids],}
\codekey{~~~~~~~~~~~~~~~~1.0~-~0.55~*~model\codeus{}p[out\codeus{}ids],}
\codekey{~~~~~~~~~~~~)}
\codekey{~~~~~~~~~~~~np.multiply.at(}
\codekey{~~~~~~~~~~~~~~~~model\codeus{}scores,}
\codekey{~~~~~~~~~~~~~~~~u[in\codeus{}ids],}
\codekey{~~~~~~~~~~~~~~~~1.0~-~0.35~*~model\codeus{}p[in\codeus{}ids],}
\codekey{~~~~~~~~~~~~)}
\codeblockend

\subsection{Influence Maximization: Stratified Horizon-Limited Coverage with Exchange Sweeps}
\label{app:example-im}

\paragraph{Description.} The program builds on reverse-reachable sampling, the mechanism behind IMM, OPIM and SubSIM, and changes it in three ways. Its reverse-reachable sets are \textbf{horizon-limited}: each keeps only the nodes that reach its root within $H$ reverse steps of sampled live edges, matching the reward, which counts activations within $H$ steps, whereas the classical estimators count activations at any time. Its roots are \textbf{stratified}: every node is the root of exactly $r$ sets instead of a random number of them, which removes the variance that uniformly drawn roots add to the coverage estimate. After greedy maximum coverage, an \textbf{exchange sweep} revisits the seeds in increasing order of \textbf{unique coverage}, the number of sets only that seed covers, and \textbf{replaces a seed only when a non-seed covers more by a fixed margin}. A second sweep, bounded to $4{,}096$ seeds, recomputes unique coverage after the first sweep's swaps and revisits the seeds that are now weakest.

\begin{algorithm}[H]
\caption{Stratified horizon-limited coverage with exchange sweeps, designed for influence maximization on Digg under IC.}
\label{alg:example-im}
\begin{algorithmic}[1]
\REQUIRE network $\mathcal{G}$ with $N$ nodes and transmission probabilities $p_{uv}$, budget $k$, horizon $H$
\STATE sample $2{,}048$ pilot sets for the mean set size $\bar m$; \ $r\leftarrow20\cdot\min(40,\max(1,\lfloor3\times10^{7}/(N\bar m)\rfloor))$
\STATE \keystep{stratified roots: make every node the root of exactly $r$ reverse-reachable sets, $rN$ sets in total}
\STATE \keystep{horizon-limited sets: for each root, keep the nodes that reach it within $H$ reverse steps of sampled live edges}
\STATE $\mathbb{A}\leftarrow\emptyset$; repeat $k$ times: add the node covering the most uncovered sets (ties by weighted out-degree)
\FOR{sweep limit $L\in(k,\ \min(4096,k))$}
\STATE \keystep{order the seeds by unique coverage, the number of sets only that seed covers, and take the $L$ lowest}
\FOR{each such seed $s$}
\STATE remove $s$; \ $s'\leftarrow$ the non-seed with the largest marginal coverage
\STATE \keystep{keep $s'$ only if its marginal coverage exceeds that of $s$ by more than $\max(1,\lfloor0.005\,r\rfloor)$ sets, otherwise restore $s$}
\ENDFOR
\STATE \textbf{if} no seed changed \textbf{then} stop
\ENDFOR
\RETURN seed every $v\in\mathbb{A}$ at $t=0$
\end{algorithmic}
\end{algorithm}

\paragraph{Results.} At the $10\%$ budget on Digg, the program activates $56.5\%$ of the nodes, against $51.8\%$ for OPIM and SubSIM, and the same program gives all four budget cells ($30.4\%$, $45.8\%$, $56.5\%$ and $69.4\%$). Replayed unchanged on Network Science and NetHEPT, it matches the programs designed on those networks to within $0.1$ points (Table~\ref{tab:transfer}).

\paragraph{Excerpt.} The two exchange sweeps:
\codeblockstart
\codekey{margin~=~max(1,~int(0.005~*~repetitions))}
\codeplain{\char35{}~Preserve~the~parent's~full~sweep~and~its~exact~acceptance~threshold.}
\codeplain{\char35{}~Recompute~removal~costs~once~to~revisit~opportunities~created~later.}
\codeplain{for~sweep\codeus{}limit~in~(k,~min(4096,~k)):}
\codeplain{~~~~losses~=~np.empty(k,~dtype=np.int64)}
\codeplain{~~~~for~i,~node~in~enumerate(seeds):}
\codeplain{~~~~~~~~rr~=~cover\codeus{}ids[cover\codeus{}ptr[node]:cover\codeus{}ptr[node~+~1]]}
\codekey{~~~~~~~~losses[i]~=~np.count\codeus{}nonzero(multiplicity[rr]~==~1)}
\codekey{~~~~reconsider~=~np.argsort(losses,~kind="stable")[:sweep\codeus{}limit]}
\codeplain{~~~~changes~=~0}
\codeplain{~~~~for~position~in~reconsider:}
\codeplain{~~~~~~~~old~=~seeds[int(position)]}
\codeplain{~~~~~~~~rr~=~cover\codeus{}ids[cover\codeus{}ptr[old]:cover\codeus{}ptr[old~+~1]]}
\codeplain{~~~~~~~~multiplicity[rr]~-=~1}
\codeplain{~~~~~~~~exposed~=~rr[multiplicity[rr]~==~0]}
\codeplain{~~~~~~~~self.\codeus{}update\codeus{}scores(exposed,~1,~scores,~rr\codeus{}ptr,~members,~n)}
\codeplain{~~~~~~~~selected[old]~=~False}
\codeplain{~~~~~~~~candidates~=~np.where(selected[priority],~-1,~scores[priority])}
\codeplain{~~~~~~~~new~=~int(priority[np.argmax(candidates)])}
\codekey{~~~~~~~~if~scores[new]~<=~scores[old]~+~margin:}
\codekey{~~~~~~~~~~~~new~=~old}
\codeplain{~~~~~~~~rr~=~cover\codeus{}ids[cover\codeus{}ptr[new]:cover\codeus{}ptr[new~+~1]]}
\codeplain{~~~~~~~~fresh~=~rr[multiplicity[rr]~==~0]}
\codeplain{~~~~~~~~self.\codeus{}update\codeus{}scores(fresh,~-1,~scores,~rr\codeus{}ptr,~members,~n)}
\codeplain{~~~~~~~~multiplicity[rr]~+=~1}
\codeplain{~~~~~~~~selected[new]~=~True}
\codeplain{~~~~~~~~seeds[int(position)]~=~new}
\codeplain{~~~~~~~~changes~+=~int(new~!=~old)}
\codeplain{~~~~if~changes~==~0:}
\codeplain{~~~~~~~~break}
\codeblockend

\subsection{Source Localization: Cavity Consistency Optimization with Component Repair}
\label{app:example-sl}

\paragraph{Description.} Most localization methods in the library score each infected node by a centrality or a propagated label and return the top $k$. The program instead fits a source set to the observation through \textbf{a differentiable forward model}: $F(\bm{x})$ runs non-backtracking cavity messages for ten steps from source probabilities $\bm{x}$ and returns each node's infection probability, and the loss is the squared error between $F(\bm{x})$ and the observation. Starting from the LPSI answer, it \textbf{relaxes the source indicator to fractions that sum to $k$, descending the loss with an integrality penalty that grows over three stages}. The binary set is then improved by an \textbf{exchange search in two branches, ranked by gradient and curvature, whose disagreements are recombined}, and a \textbf{component repair} step targets observed infections the current sources leave unexplained.

\begin{algorithm}[H]
\caption{Cavity consistency optimization with component repair, designed for source localization on Power Grid under IC.}
\label{alg:example-sl}
\begin{algorithmic}[1]
\REQUIRE network $\mathcal{G}$ with transmission probabilities $p_{uv}$, observed infection indicator $\bm{y}$, number of sources $k$
\STATE $\mathbb{E}\leftarrow\{v:y_v=1\}$; \ $\bm{x}\leftarrow0.8\,\tfrac{k}{|\mathbb{E}|}\bm{1}_{\mathbb{E}}+0.2\,\bm{1}_{\mathrm{LPSI}}$ \COMMENT{start from LPSI's $k$ sources}
\STATE \keystep{$F(\bm{x})$: infection probabilities after ten steps of non-backtracking cavity messages from source probabilities $\bm{x}$; \ $L(\bm{x})=\lVert F(\bm{x})-\bm{y}\rVert^{2}$}
\FOR{$\lambda\in(0,\ 0.04,\ 0.15)$}
\STATE \keystep{projected gradient descent on $L(\bm{x})+\lambda\sum_v x_v(1-x_v)$ over $\{\bm{x}\in[0,1]^{\mathbb{E}}:\sum_v x_v=k\}$, with the gradient from an adjoint pass through $F$ and a backtracking line search}
\STATE every seven steps, round $\bm{x}$ to its $k$ largest entries and keep the best binary set $\bm{b}$
\ENDFOR
\STATE \keystep{exchange search in two branches, adjacency-aware and global: swap $1$ to $32$ sources ranked by gradient plus a stochastic curvature estimate, accepting only if $L$ decreases; recombine the branches' disagreements}
\STATE \keystep{component repair: group the observed-infected nodes that $F(\bm{b})$ predicts uninfected into connected components, and move low-value sources to one representative per component, accepting only if $L$ decreases}
\STATE a short adjacency-aware exchange search
\RETURN the $k$ sources in $\bm{b}$
\end{algorithmic}
\end{algorithm}

\paragraph{Results.} On Power Grid the recovered sources reach a consistency of $-0.080$, against $-0.106$ for LPSI, the strongest baseline (Table~\ref{tab:results_full}). Replayed unchanged, the program reaches $-0.138$ on Cora-ML and $-0.148$ on Deezer, against $-0.173$ and $-0.176$ for LPSI, and within $0.001$ of the programs designed on those networks.

\paragraph{Excerpt.} The relaxation with the growing integrality penalty:
\codeblockstart
\codeplain{seeds~=~0.8~*~uniform~+~0.2~*~best}
\codeplain{step~=~0.35}
\codekey{for~penalty~in~(0.0,~0.04,~0.15):}
\codeplain{~~~~for~iteration~in~range(28):}
\codekey{~~~~~~~~predicted,~context~=~self.\codeus{}forward(seeds,~trace=True)}
\codeplain{~~~~~~~~residual~=~predicted~-~y}
\codekey{~~~~~~~~objective~=~float(np.dot(residual,~residual))~+~penalty~*~float(}
\codekey{~~~~~~~~~~~~np.sum(seeds~*~(1.0~-~seeds))}
\codekey{~~~~~~~~)}
\codekey{~~~~~~~~gradient~=~self.\codeus{}adjoint(seeds,~context,~2.0~*~residual)}
\codekey{~~~~~~~~gradient~+=~penalty~*~(1.0~-~2.0~*~seeds)}
\codeplain{~~~~~~~~accepted,~trial\codeus{}step~=~False,~step}
\codeplain{~~~~~~~~for~\codeus{}~in~range(10):}
\codeplain{~~~~~~~~~~~~proposal~=~np.zeros(self.n)}
\codekey{~~~~~~~~~~~~proposal[eligible]~=~self.\codeus{}project(}
\codekey{~~~~~~~~~~~~~~~~seeds[eligible]~-~trial\codeus{}step~*~gradient[eligible],~count}
\codekey{~~~~~~~~~~~~)}
\codeplain{~~~~~~~~~~~~value~=~self.\codeus{}loss(proposal,~y)~+~penalty~*~float(}
\codeplain{~~~~~~~~~~~~~~~~np.sum(proposal~*~(1.0~-~proposal))}
\codeplain{~~~~~~~~~~~~)}
\codeplain{~~~~~~~~~~~~directional~=~float(np.dot(gradient,~proposal~-~seeds))}
\codeplain{~~~~~~~~~~~~if~value~<=~objective~+~1e-4~*~directional~+~1e-9:}
\codeplain{~~~~~~~~~~~~~~~~seeds~=~proposal}
\codeplain{~~~~~~~~~~~~~~~~step~=~min(trial\codeus{}step~*~1.2,~3.0)}
\codeplain{~~~~~~~~~~~~~~~~accepted~=~True}
\codeplain{~~~~~~~~~~~~~~~~break}
\codeplain{~~~~~~~~~~~~trial\codeus{}step~*=~0.5}
\codeplain{~~~~~~~~if~not~accepted:}
\codeplain{~~~~~~~~~~~~step~=~max(trial\codeus{}step,~1e-5)}
\codeplain{~~~~~~~~if~iteration~\char37{}~7~==~6~or~not~accepted:}
\codeplain{~~~~~~~~~~~~rounded~=~self.\codeus{}binary(seeds,~eligible,~count,~tie)}
\codeplain{~~~~~~~~~~~~loss~=~self.\codeus{}loss(rounded,~y)}
\codeplain{~~~~~~~~~~~~if~loss~<~best\codeus{}loss:}
\codeplain{~~~~~~~~~~~~~~~~best,~best\codeus{}loss~=~rounded,~loss}
\codeplain{~~~~~~~~if~not~accepted:}
\codeplain{~~~~~~~~~~~~break}
\codeblockend

\subsection{Influence Blocking: Differentiable Competitive-Cavity Blocking}
\label{app:example-ib}

\paragraph{Description.} RPS and the other sampling-based blockers in the library (Table~\ref{tab:library_groups}) sample prevention sets and cover them greedily, and the first candidate of this search did the same. The returned program replaces sampling with a \textbf{competitive mean-field recursion} that tracks, for every arc, the probability that it has not yet transmitted, and lets the counter cascade take a node first when both cascades reach it in the same step, the positive-dominance tie-break of our IC runs (Appendix~\ref{app:tasks}). Because the recursion is differentiable, the program \textbf{computes the gradient of the rumor size with respect to every counter-seed probability in one backward pass} and chooses all $k$ seeds jointly by \textbf{projected gradient descent on a relaxation, with an integrality penalty that grows over three stages}. Taking the $k$ largest entries of the relaxed solution can lose value, so it also rounds by \textbf{moving mass between pairs of fractional entries in whichever direction lowers the rumor size}, from two starting points. Last, a \textbf{swap repair recomputes the gradient after vacating a seed's seat}. A replacement can look redundant while the outgoing seed still protects its route, so the gradient taken with the seat empty ranks candidates that the original gradient misses.

\begin{algorithm}[H]
\caption{Differentiable competitive-cavity blocking, designed for influence blocking on Gnutella24 under IC.}
\label{alg:example-ib}
\begin{algorithmic}[1]
\REQUIRE network $\mathcal{G}$ with transmission probabilities $p_{uv}$, rumor sources $\mathbb{S}_N$, $k$ counter-seeds, horizon $H$
\STATE \keystep{$J(\bm{x})$: expected rumor size after $H$ steps of a competitive mean-field recursion in which the counter cascade, seeded with probabilities $\bm{x}$, takes a node first when both cascades reach it in the same step}
\STATE \keystep{$\nabla J(\bm{x})$ from one adjoint pass back through the $H$ steps}
\STATE $\mathbb{E}\leftarrow\mathbb{V}\setminus\mathbb{S}_N$; \ $x_v\leftarrow k/|\mathbb{E}|$ for $v\in\mathbb{E}$
\FOR{$(\lambda,T)\in\bigl((0,44),\ (0.4,12),\ (1.5,12)\bigr)$}
\STATE \keystep{$T$ steps of projected gradient descent on $J(\bm{x})+\lambda\,(k-\lVert\bm{x}\rVert^{2})$ over $\{\bm{x}\in[0,1]^{\mathbb{E}}:\sum_v x_v=k\}$ with a backtracking line search}
\STATE periodically round $\bm{x}$ to its $k$ largest entries and keep the best binary set $\bm{b}$
\ENDFOR
\FOR{the relaxed solutions after the first and the last stage}
\STATE \keystep{restrict to the $3k$ largest entries; move mass between the largest and the smallest fractional entry in whichever direction lowers $J$, until the solution is binary; keep it if it beats $\bm{b}$}
\ENDFOR
\FOR{two passes}
\STATE \keystep{for each of the three seeds with the least benefit under $\nabla J$: vacate its seat, recompute $\nabla J$ with the seat empty, and try the six best additions under each of the two gradients}
\STATE apply the best swap that lowers $J$; stop if none does
\ENDFOR
\RETURN seed every $v$ with $b_v=1$ in the counter cascade at $t=0$
\end{algorithmic}
\end{algorithm}

\paragraph{Results.} With $k=30$ counter-seeds on Gnutella24 the rumor reaches $1{,}113.6$ nodes, against $1{,}151.5$ for RPS, the strongest baseline. The same program gives all five budget cells and is ahead of RPS at every budget (Table~\ref{tab:results_full}). Replayed unchanged on Email-EU, it is also ahead of RPS at every budget, with rumor sizes from $59.4$ nodes at $k=10$ to $30.8$ at $k=50$, against $60.2$ to $31.9$.

\paragraph{Excerpt.} The competitive recursion, in which the counter cascade claims a node before the rumor; the lines that record values for the backward pass are omitted:
\codeblockstart
\codeplain{for~\codeus{}~in~range(self.steps):}
\codekey{~~~~middle~=~np.maximum(}
\codekey{~~~~~~~~theta~-~self.weight~*~positive\codeus{}front[self.tail],~eps}
\codekey{~~~~)}
\codekey{~~~~following~=~np.maximum(}
\codekey{~~~~~~~~middle~-~self.weight~*~negative\codeus{}front[self.tail],~eps}
\codekey{~~~~)}
\codeplain{~~~~middle\codeus{}product~=~self.\codeus{}products(middle)}
\codeplain{~~~~following\codeus{}product~=~self.\codeus{}products(following)}
\codeplain{~~~~new\codeus{}positive~=~susceptible\codeus{}initial~*~(}
\codeplain{~~~~~~~~old\codeus{}product~-~middle\codeus{}product}
\codeplain{~~~~)}
\codeplain{~~~~new\codeus{}negative~=~susceptible\codeus{}initial~*~(}
\codeplain{~~~~~~~~middle\codeus{}product~-~following\codeus{}product}
\codeplain{~~~~)}
\codekey{~~~~objective~+=~float(new\codeus{}negative.sum())}
\codeplain{~~~~theta~=~following}
\codeplain{~~~~old\codeus{}product~=~following\codeus{}product}
\codeplain{~~~~positive\codeus{}front~=~new\codeus{}positive}
\codeplain{~~~~negative\codeus{}front~=~new\codeus{}negative}
\codeblockend